\title{Stochastic Gradient Descent-Ascent and \\ Consensus Optimization for Smooth Games: \\Convergence Analysis under Expected Co-coercivity}
\newcommand{\Exp}{\mathbb{E}}
\newcommand{\E}[1]{{\mathbb{E}\left[#1\right] }}    % expectation
\newcommand{\EE}[2]{{\mathbb{E}_{#1}\left[#2\right] }} 
\newcommand{\Prob}[1]{\mathbb{P} \left[ #1\right]}
\newcommand{\R}{\mathbb{R}}
\newcommand{\bA}{\mathbf{A}}
\newcommand{\bB}{\mathbf{B}}
\newcommand{\bC}{\mathbf{C}}
\newcommand{\bD}{\mathbf{D}}
\newcommand{\bI}{\mathbf{I}}
\newcommand{\bQ}{\mathbf{Q}}
\newcommand{\bJ}{\mathbf{J}}
\newcommand{\eqdef}{:=}
\newcommand{\cD}{{\cal D}}
\newcommand{\cH}{{\cal H}}
\newcommand{\cL}{{\cal L}}
\definecolor{shadecolor}{gray}{0.9}
\declaretheoremstyle[
headfont=\normalfont\bfseries,
notefont=\mdseries, notebraces={(}{)},
bodyfont=\normalfont,
postheadspace=0.5em,
spaceabove=1pt,
mdframed={
  skipabove=8pt,
  skipbelow=8pt,
  hidealllines=true,
  backgroundcolor={shadecolor},
  innerleftmargin=4pt,
  innerrightmargin=4pt}
]{shaded}
\declaretheorem[style=shaded,within=section]{definition}
\declaretheorem[style=shaded,sibling=definition]{theorem}
\declaretheorem[style=shaded,sibling=definition]{proposition}
\declaretheorem[style=shaded,sibling=definition]{assumption}
\declaretheorem[style=shaded,sibling=definition]{corollary}
\declaretheorem[style=shaded,sibling=definition]{lemma}
\declaretheorem[style=shaded,sibling=definition]{remark}
\providecommand{\norm}[1]{\left\| #1\right\|}
\newcommand{\dotprod}[1]{\left< #1\right>}
\DeclareMathOperator{\nulls}{null}
\newcommand{\remove}[1]{}
\author{%
  Nicolas Loizou\thanks{Corresponding author: nicolasloizou1@gmail.com. \, $^\dagger$Canada CIFAR AI Chair.} \\
  Mila and DIRO,\\
   Universit\'{e} de Montr\'{e}al\\
  \And
   Hugo Berard \\
   Mila and DIRO,\\
   Universit\'{e} de Montr\'{e}al\\
   \And
      Gauthier Gidel$^\dagger$ \\
   Mila and DIRO,\\
   Universit\'{e} de Montr\'{e}al\\
   \And
  Ioannis Mitliagkas$^\dagger$ \\
   Mila and DIRO,\\
   Universit\'{e} de Montr\'{e}al\\
   \And
   Simon Lacoste-Julien$^\dagger$ \\
   Mila and DIRO,\\
   Universit\'{e} de Montr\'{e}al\\
}
\begin{document}

\maketitle
\begin{abstract}
Two of the most prominent algorithms for solving unconstrained smooth games are the classical stochastic gradient descent-ascent (SGDA) and the recently introduced stochastic consensus optimization (SCO)~\citep{mescheder2017numerics}.
SGDA is known to converge to a stationary point for specific classes of games, but current convergence analyses require a bounded variance assumption. SCO is used successfully for solving large-scale adversarial problems, but its convergence guarantees are limited to its deterministic variant. In this work, we introduce the \textit{expected co-coercivity} condition, explain its benefits, and provide the first last-iterate convergence guarantees of SGDA and SCO under this condition for solving a class of stochastic variational inequality problems that are potentially non-monotone. We prove linear convergence of both methods to a neighborhood of the solution when they use constant step-size, and we propose insightful stepsize-switching rules to
guarantee convergence to the exact solution. In addition, our convergence guarantees hold under the arbitrary sampling paradigm, and as such, we give insights into the complexity of minibatching. 
\end{abstract}
\section{Introduction}
\vspace{-1mm}
Motivated from the recent interest in solving adversarial formulations in machine learning such as generative adversarial networks (GANs)~\citep{goodfellow2014generative}, we consider in this paper a more abstract formulation of the problem and focus on solving the following \emph{unconstrained} stochastic variational inequality (VI) problem:\footnote{While our presentation focuses on this finite-sum structure, most of our convergence results can easily be adapted to the general stochastic setting (see App.~\ref{Appendix_BeyondFiniteSum}). Also, we do not use the full power of variational inequalities that usually have constraints~\citep{harker1990finite}, but standard algorithms for~\eqref{VI} are coming from this literature~\citep{gidel2018variational}.}
\vspace{-3mm}
\begin{equation}
\label{VI}
\text{Find} \quad  x^* \in \R^d \quad \text{such that} \quad \xi(x^*)=\frac{1}{n}\sum_{i=1}^n \xi_i(x^*)=0,
\end{equation}
where each $\xi_i: \R^d \rightarrow \R^d$ is Lipschitz continuous. Further, we assume that the problem \eqref{VI} has a unique\footnote{This assumption can be relaxed; but for simplicity of exposition we enforce it.} solution $x^*$ and that the operator $\xi$ is $\mu$-\emph{quasi-strongly monotone}: there is a $\mu \geq 0$ such that:
\begin{equation}
\label{QSM}
\left\langle\xi(x),  x-x^*\right\rangle \geq \mu \|x-x^*\|^2 \quad \forall x \in \R^d
\end{equation} 
If $\mu=0$, then we say that $\xi$ satisfies the variational stability condition: $\langle \xi(x), x-x^*\rangle\geq 0$~\citep{hsieh2020explore}. In the variational inequality literature, condition~\eqref{QSM} is also known as strong stability condition~\citep{mertikopoulos2019games} or as strong Minty variational inequality (MVI) \citep{diakonikolas2021efficient, song2020optimistic}.

Problem~\eqref{VI} generalizes the solution of several types of \emph{stochastic smooth games}~\citep{facchinei2007games, scutari2010games, mertikopoulos2019games}. The simplest example is the unconstrained min-max optimization problem (also called a \emph{zero-sum} game):
\vspace{-2mm}
\begin{equation}
\label{Eq:MinMax}
\min_{x_1 \in\R^{d_1}} \max_{x_2 \in\R^{d_2}}\frac{1}{n} \sum_{i=1}^n g_i(x_1,x_2) \, ,
\vspace{-1mm}
\end{equation}
where each component function $g_i:\R^{d_1} \times \R^{d_2}\rightarrow \R$ is assumed to be smooth. Here, $\xi_i$ represents the appropriate concatenation of the block-gradients of $g_i$: $\xi_i(x) := (\nabla_{x_1} g_i(x_1,x_2); -\nabla_{x_2} g_i(x_1,x_2) )$, where $x := (x_1; x_2)$. Solving~\eqref{VI} then amounts to finding a stationary point $x^* = (x_1^*; x_2^*)$ for~\eqref{Eq:MinMax}, which under a convex-concavity assumption for $g_i$ for example, implies that it is a global solution for the min-max problem. More generally, we might seek the pure Nash equilibrium of a $k$-players game, where each player $j$ is simultaneously trying to find the action $x_j^*$ which minimizes with respect to $x_j \in \R^{d_j}$ their own cost function $\frac{1}{n} \sum_{i=1}^n f^{(j)}_i(x_j, x_{-j})$, while the other players are playing $x_{-j}$, which represents $x = (x_1,\ldots,x_k)$ with the component~$j$ removed. Here, $\xi_i(x)$ is the concatenation over all possible $j$'s of $\nabla_{x_j} f_i^{(j)} (x_j, x_{-j})$.

Such finite sum formulations appear in several machine learning applications such as generative adversarial networks (GANs)~\citep{goodfellow2014generative}, robust learning~\citep{wen2014robust} or even some formulations of reinforcement learning~\citep{pfau2016connecting}. A standard algorithm that has been used to solve~\eqref{VI} is the stochastic version of the classical gradient method~\citep{demyanov1972GradientMethodSP, Nemirovski-Juditsky-Lan-Shapiro-2009} or its variance reduced version~\citep{balamurugan2016stochastic}, that we call stochastic gradient descent-ascent (SGDA) in this paper.\footnote{We use this suggestive name motivated from the min-max formulation~\eqref{Eq:MinMax}, though we also call SGDA the simple update $x^{k+1} = x^k - \alpha \xi_{i_k}(x^k)$ to solve~\eqref{VI} in the more general non-zero sum game scenario.} More recently, \citet{mescheder2017numerics} analyzed some limitations of the gradient method in the context of GAN training and proposed an alternative efficient algorithm which could be used to solve~\eqref{VI} that they called consensus optimization (CO), which combines gradient updates with the minimization of $\| \xi(x)\|^2$. While the practical version of their algorithm for large $n$ is stochastic (SCO, that randomly samples $i$'s) and displayed good performance~\citep{mescheder2017numerics}, the only global convergence rate guarantees existing in the literature so far is only for the deterministic variant~\citep{azizian2019tight,abernethy2021last}. 

The classical results from the stochastic VI literature are inappropriate for several reasons. First, a \emph{uniform bound} over $x$ on the variance $\E{\|\xi_i(x) - \xi(x)\|^2}$ is typically assumed to get convergence guarantees (see e.g.~\citet{Nemirovski-Juditsky-Lan-Shapiro-2009,gidel2018variational,mertikopoulos2019games,yang2020global,lin2020finite}), but this is not compatible with the \emph{unconstrained} aspect of~\eqref{VI}. For example, suppose $g_i$ is a quadratic function, then the variance typically goes to infinity as $x \rightarrow \infty$. More appropriate relaxed assumptions have been considered to prove the convergence of other algorithms for~\eqref{VI} such as the stochastic extragradient method~\citep{hsieh2020explore,mishchenko2020revisiting} and its variance-reduced version~\citep{chavdarova2019reducing}, or the stochastic Hamiltonian gradient method~\citep{loizou2020stochastic}, but not yet to the best of our knowledge for SGDA nor SCO. Second, the classical analysis for SGDA~\citep{Nemirovski-Juditsky-Lan-Shapiro-2009} typically considers the convergence of the \emph{average of the iterates} rather than for the \emph{last-iterate}. However, as pointed out among others by~\citet{daskalakis2017training} and \citet{chavdarova2019reducing}, getting last-iterate convergence is important to apply the methods on potentially non-monotone problems such as GANs, where averaging is not appropriate. The only non-asymptotic last iterate convergence result for SGDA that we are aware of is~\cite{lin2020finite}, which focuses on a different class of problems (not assuming quasi-strong monotonicity) but it relies on strong assumptions on $\E{\|\xi_i(x)\|^2}$ (see Section~\ref{Section_SGDA}). 

In this paper, we address both of these issues. We generalize the recent improved analysis of SGD~\citep{gower2019sgd} to the case of unconstrained stochastic variational inequality~\eqref{VI}, and prove the last-iterate convergence for both SGDA and SCO without requiring any bounded variance assumption. We focus on quasi-strongly monotone VI problems, a class of structured non-monotone operators for which we are able to provide tight convergence guarantees and avoid the standard issues (cycling and divergence of the methods) appearing in the more general non-monotone regime.

\paragraph{Main Contributions.}
The key contributions of this work are summarized as follows:
\vspace{-1mm}
\begin{itemize}[leftmargin=*]
\item We propose the \emph{expected co-coercivity} (EC) assumption, which is the appropriate generalization of the expected smoothness assumption from~\citet{gower2019sgd} to Problem~\eqref{VI}. We explain the benefits of EC and show that is strictly weaker than the bounded variance assumption and ``growth conditions'' previously used for the analysis of stochastic algorithms for~\eqref{VI}.
\item Using the EC assumption, we prove the first last-iterate convergence guarantees for stochastic gradient descent-ascent (SGDA) on~\eqref{VI} without any unrealistic noise assumption. We show a linear convergence rate to a neighborhood of $x^*$ when constant step-size is used, and a $O(1/k)$ rate to the exact solution when using a decreasing step-size rule. For the latter, we propose a theoretically motivated switching rule from a constant to a decreasing step-size to get faster convergence.
\item Using the EC assumption, we provide the first convergence analysis of a stochastic variant (SCO) of the consensus optimization (CO) algorithm proposed by~\cite{mescheder2017numerics} and previously used to trained GANs. In particular, we prove last-iterate convergence for SCO, for both constant and decreasing step-sizes. As a corollary of our results, we obtain an improved convergence analysis for the deterministic CO. Furthermore, we explain how the update rule of the stochastic Hamiltonian gradient descent~\citep{loizou2020stochastic} is a special case of the SCO and show that in this scenario, our analysis matches the theoretical guarantees presented in~\citet{loizou2020stochastic}. 
\item Inspired by recent results from the optimization literature~\citep{gower2019sgd}, we give the first stochastic reformulation of the variational inequality problem~\eqref{VI} which enables us to provide convergence guarantees of SGDA and SCO under the \emph{arbitrary sampling paradigm}~\citep{richtarik2016optimal}. This allows us to give insights into the complexity of minibatching.
\end{itemize}
\section{Arbitrary Sampling: Stochastic Reformulation of Problem~\eqref{VI}}
\label{Sec:sampling}
In this work, we provide theorems through which we can analyze all minibatch variants of the two algorithms under study, SGDA and SCO. To do this, we construct a so-called ``stochastic reformulation'' of the variational inequality problem~\eqref{VI}. Our approach is inspired by recently proposed stochastic reformulations of standard optimization problems, like the empirical risk minimization in \cite{gower2019sgd} and linear systems in \citet{richtarik2020stochastic,loizou2020convergence,loizou2020momentum}.

In each step of our algorithms, we assume we are given access to unbiased estimates $g(x) \in \R^d$ of the operator such that $\E{g(x)}  = \xi(x).$  For example, we can use a minibatch to form an estimate of the operator such as $g(x) = \frac{1}{b}\sum_{i\in S}\xi_i(x),$ where $S \subset \{1,\ldots, n\}$ will be chosen uniformly at random and $|S|=b.$ 
To allow for any form of minibatching, we use the \emph{arbitrary sampling} notation $g(x) = \xi_v(x) \eqdef \frac{1}{n} \sum _{i=1}^n v_i \xi_i(x),$ where $v\in\R^n_+$ is a random \emph{sampling vector} drawn from some distribution $\cD$ such that $\Exp_{\cD}[v_i]  = 1, \,\mbox{for }i=1,\ldots, n$. 
Note that the unbiasedness follows immediately from this definition of the sampling vector: $\Exp_{\cD}[\xi_v(x)] =\frac{1}{n} \sum _{i=1}^n \Exp_{\cD}[v_i] \xi_i(x) = \xi(x).$ 

Thus, with each user-defined distribution $\cD$, we are able to introduce a stochastic reformulation of problem~\eqref{VI} as follows:
\vspace{-3mm}
\begin{equation}
\label{Reformulation}
\text{Find} \quad  x^* \in \R^d \quad \text{such that} \quad \Exp_\cD\left[\xi_v(x^*)\eqdef\frac{1}{n}\sum_{i=1}^n v_i \xi_i(x^*) \right]=0.
\end{equation}
Since $\xi_v(x)$ as an unbiased estimate of the operator $\xi(x)$, we can now use stochastic (simultaneous) gradient descent-ascent (SGDA) to solve~\eqref{Reformulation} as follows:
 \begin{eqnarray}  
 \label{SGDA_UpdateRule}
 x^{k+1} =  x^k- \alpha_k  \xi_{v^k}(x^k) \, ,  
\end{eqnarray} 
where $v^k \sim \cD$ is sampled i.i.d at each iteration and $\alpha_k >0$ is a stepsize. We highlight that in our analysis, we allow to select \emph{any} distribution $\cD$ that satisfies $\Exp_{\cD}[v_i]  = 1$ $\forall i$, and for different selection of $\cD$, \eqref{SGDA_UpdateRule} yields different interpretation as an SGDA method for solving the original problem \eqref{VI}.  

In this work, we mostly focus on the $b$--minibatch sampling, however note that our analysis holds for every form of minibatching and for several choices of sampling vectors $v$.
\begin{definition}[Minibatch sampling]\label{def:minibatch}
Let $b \in [n]$. We say that $v \in \R^n$ is a $b$--minibatch sampling if
for every subset $S \in [n]$ with $|S| =b$, we have that $\Prob{v=\frac{n}{b}\sum_{i \in S} e_i}=\left.1 \right/\binom{n}{b} \eqdef \frac{b!(n-b)!}{n!}$
\end{definition}
By using a double counting argument, one can show that if $v$ is a $b$--minibatch sampling, it is also a valid sampling vector ($\Exp_{\cD}[v_i]  = 1$)~\citep{gower2019sgd}. See~\cite{gower2019sgd} for other choices of sampling vectors $v$.

\section{Expected Co-coercivity and Connection to Other Assumptions}
\label{Section_ExpectedCoCo}
Before introducing the condition of expected co-coercivity, we first review some details on co-coercivity, an intermediate notion between monotonicity and strong monotonicity~\citep{zhu1996co}, and explain where it belongs as assumption in the literature of variational inequalities and min-max optimization.
\paragraph{Co-coercive operators.} The co-coercive condition is relatively standard in operator splitting literature~\citep{davis2017three,vu2013splitting} and for variational inequalities~\citep{zhu1996co}. It was used to analyze the celebrated forward-backward algorithm (a.k.a, proximal gradient)~\citep{lions1979splitting,chen1997convergence,palaniappan2016stochastic} that is known not to converge for general monotone operators~\citep{bauschke2011convex}.
\begin{definition}[Co-coercivity / Co-coercive \emph{around} $w^*$] \label{def:cocoercivity}
We say that an operator $\xi$ is $\ell$--co-coercive if there exist $\ell>0$ such that,\footnote{Note that in our definition we consider the inverse of the co-coercive constant from~\citet{lions1979splitting} which is the constant $\ell$ such that $\langle\xi(x)-\xi(y),x-y\rangle \geq \ell \|\xi(x)-\xi(y)\|^2 .$}
$\|\xi(x)-\xi(y)\|^2 \leq \ell \langle\xi(x)-\xi(y),x-y\rangle \quad \forall x , y \in \R^d.$ \\If there exist $w^* \in \R^d$ and $\ell>0$ such that $\|\xi(x)-\xi(w^*)\|^2 \leq \ell \langle\xi(x)-\xi(w^*),x-w^*\rangle \quad \forall x \in \R^d\,.$ then we say that the operator $\xi$ is $\ell$--co-coercive \emph{around} $w^*$. Note that in the last definition the point $w^*$ is not necessarily a point where $\xi(w^*)=0$.
\end{definition}
\vspace{-2mm}
Note that from Cauchy-Swartz's inequality, one can get that a $\ell$-co-coercive operator is $\ell$-Lipschitz.
In single-objective minization, one can show the converse statement by using convex duality. Thus, a gradient of a function is $L$--co-coercive if and only if the function is convex and $L$-smooth (i.e. $L$-Lipschitz gradients)~\citep{bauschke2011convex}. However, in general, a $L$-Lipchitz operator is \emph{not} $L$--co-coercive. What we can show instead is that  a $L$-Lipschitz and $\mu$-strongly monotone operator is $\ell$--co-coercive with $\ell \in [L, L^2/\mu]$~\citep{facchinei2007finite}. Note that both ranges of the spectrum may occur. For instance, \citet{chavdarova2019reducing} present a sufficent condition in zero-sum games to have $\ell = O(L)$.  Note that one can easily show that a sum of co-coercive operators is also co-coercive. 
Let us now provide a proposition summarizing the implications between (strong) monotonicity and co-coercivity. 
\begin{proposition}
\label{PropositionCocoMonotone}
For a $L$-Lipschitz operator $\xi$, the following implications hold:
\[
\begin{array}
[c]{cccccccc}
& \mu\text{-strongly monotone} & \Longrightarrow & \frac{L^2}{\mu}\text{-co-coercive}
 & \Longrightarrow  & \text{monotone} \\
&  \Downarrow & & \Downarrow && \Downarrow \\
& \mu\text{-quasi-strongly monotone}   & \Longrightarrow & \frac{L^2}{\mu}\text{-co-coercive around } x^*& \Longrightarrow & \begin{tabular}{c}\text{variational} \\ \text{stability condition} \end{tabular} 
\end{array}
\]
\end{proposition}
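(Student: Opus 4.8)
The plan is to establish the six implications in the diagram separately, grouping them by difficulty. The two rightmost arrows in each row (co-coercivity $\Rightarrow$ monotone and co-coercivity around $x^*$ $\Rightarrow$ variational stability) together with all three vertical arrows are essentially immediate from the definitions; the only implications requiring a short computation are the two leftmost arrows, namely $\mu$-strongly monotone $\Rightarrow \frac{L^2}{\mu}$-co-coercive and $\mu$-quasi-strongly monotone $\Rightarrow \frac{L^2}{\mu}$-co-coercive around $x^*$. These two share the same mechanism: combine the Lipschitz upper bound on $\|\xi(x)-\xi(y)\|^2$ with the (quasi-)strong monotonicity lower bound on the associated inner product.

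For the top-left arrow, I would start from $L$-Lipschitzness, $\|\xi(x)-\xi(y)\|^2 \le L^2\|x-y\|^2$, and from $\mu$-strong monotonicity rewritten (for $\mu>0$) as $\|x-y\|^2 \le \frac{1}{\mu}\langle \xi(x)-\xi(y),\, x-y\rangle$. Chaining these yields $\|\xi(x)-\xi(y)\|^2 \le \frac{L^2}{\mu}\langle \xi(x)-\xi(y),\, x-y\rangle$ for all $x,y$, which is exactly $\frac{L^2}{\mu}$-co-coercivity. The bottom-left arrow is the same argument restricted to $y=x^*$: using $\xi(x^*)=0$, Lipschitzness gives $\|\xi(x)\|^2 \le L^2\|x-x^*\|^2$, and quasi-strong monotonicity~\eqref{QSM} gives $\|x-x^*\|^2 \le \frac{1}{\mu}\langle \xi(x),\, x-x^*\rangle$, so that $\|\xi(x)\|^2 \le \frac{L^2}{\mu}\langle \xi(x),\, x-x^*\rangle$, which is the co-coercivity-around-$x^*$ condition with $w^*=x^*$.

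The remaining arrows I would dispatch quickly. Co-coercivity $\Rightarrow$ monotone follows since the left-hand side $\|\xi(x)-\xi(y)\|^2 \ge 0$ and $\ell>0$ force $\langle \xi(x)-\xi(y),\, x-y\rangle \ge 0$; the analogous nonnegativity argument applied to the around-$x^*$ inequality (again using $\xi(x^*)=0$) gives $\langle \xi(x),\, x-x^*\rangle \ge 0$, i.e.\ variational stability. For the vertical arrows, strong monotonicity and monotonicity both specialize to their quasi/stability counterparts by setting $y=x^*$ and invoking $\xi(x^*)=0$, while the full co-coercivity condition reduces to co-coercivity around $x^*$ simply by fixing the free variable $y=x^*$.

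The main obstacle, though a mild one, is bookkeeping around the point $w^*$: Definition~\ref{def:cocoercivity} allows $w^*$ to be arbitrary, but in the bottom row of the diagram the relevant point is the solution $x^*$ with $\xi(x^*)=0$, and it is precisely this identity that collapses $\xi(x)-\xi(x^*)$ to $\xi(x)$ and $\langle\xi(x)-\xi(x^*),\, x-x^*\rangle$ to $\langle\xi(x),\, x-x^*\rangle$, matching the conditions in~\eqref{QSM}. I would therefore state each step of the bottom row with $w^*=x^*$ explicitly and flag that the constant $\frac{L^2}{\mu}$ tacitly requires $\mu>0$. No step needs more than the Lipschitz bound plus one (quasi-)monotonicity inequality, so no deeper tool (such as the convex duality needed only for the converse in the single-objective case noted before the proposition) enters here.
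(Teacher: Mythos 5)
Your proof is correct and follows essentially the same route as the paper: the two leftmost horizontal arrows are obtained by chaining the Lipschitz upper bound with the (quasi-)strong-monotonicity lower bound, the rightmost arrows follow from nonnegativity of the squared norm, and the vertical arrows are obtained by specializing $y=x^*$ and using $\xi(x^*)=0$. Your explicit bookkeeping of $w^*=x^*$ and the remark that $\frac{L^2}{\mu}$ requires $\mu>0$ are consistent with (and slightly more careful than) the paper's presentation, which writes out only the implications not already in the cited reference.
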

Let us also note that while a $\ell$-co-coercive operator is always $\ell$-Lipschitz continuous, it is possible for an operator to be $\ell$-co-coercive \emph{around $x^*$} and \emph{not} be Lipschitz continuous. This highlights the wider applicability of the $\ell$-co-coercivity around $x^*$ assumption that is all we need for several of our convergence results, in contrast to the Lipschitz continuity of $\xi$ which is typically assumed in the variational inequality literature. 
In Appendix~\ref{App:coolNonMonotoneExample}, we provide such example of a $\mu$-quasi strongly monotone operator that is $\ell$-co-coercive around $x^*$, which is not monotone nor Lipschitz continuous. 
\subsection{Expected Co-coercivity (EC)}
In our analysis of SGDA and SCO, we rely on a generic and remarkably weak assumption that we call \emph{expected co-coercivity} (EC).  In this section, we formally define EC, provide sufficient conditions for it to hold and relate it to the existing gradient assumptions.
\begin{assumption}[Expected Cocoercivity]
\label{ass:ExpCoCo} We say that $\xi$ is $\ell_{\xi}$--co-coercive in expectation with respect to a distribution $\cD$ if there exists  $\ell_{\xi}>0$  such that
\begin{equation}
\label{eq:ExpCoCo}
\Exp_{\cD}\left[\norm{\xi_v(x)-\xi_v(x^*)}^2 \right]\leq \ell_{\xi}\langle \xi (x),x-x^*\rangle \tag{EC} \quad \forall x \in \R^d \, .
\end{equation}
For simplicity, we will write $\xi  \in EC(\ell_\xi)$ to say that \ref{eq:ExpCoCo} holds and we will refer to $\ell_{\xi}$ as the expected co-coercivity constant.
\end{assumption}
The convergence results in this paper will depend on the following operator noise at $x^*$ that is finite for any reasonable sampling distribution $\cD$ for the sampling vector $v$:
\begin{equation}
\label{Sigma}
\sigma^2  \eqdef \Exp_{\cD}[\norm{\xi_v(x^*)}^2] < \infty .
\end{equation}
As we discuss below, common assumptions used to prove convergence of stochastic algorithms for solving the VI problem is uniform boundedness of the stochastic operator $\Exp\|\xi_i(x)\|^2 \leq c$ or uniform boundedness of the variance  $\Exp\|\xi_i(x)-\xi(x)\|^2 \leq c$. However these assumptions either do not hold or are true only for restrictive set of problems. In our work we do not assume such bounds. Instead we use the following direct consequence of Assumption~\ref{ass:ExpCoCo}.
\begin{lemma}
\label{MainLemma}
If $\xi  \in EC(\ell_\xi)$, then $\Exp \| \xi_{v} (x)\|^2 \leq 2\ell_{\xi}\langle \xi (x),x-x^*\rangle  + 2 \sigma^2$.
\end{lemma}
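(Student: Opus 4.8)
The plan is to reduce everything to the elementary inequality $\norm{a+b}^2 \leq 2\norm{a}^2 + 2\norm{b}^2$, which holds for any vectors $a, b \in \R^d$, combined with a single application of the \ref{eq:ExpCoCo} assumption and the definition~\eqref{Sigma} of $\sigma^2$. First I would decompose the stochastic operator around the solution by writing $\xi_v(x) = \left(\xi_v(x) - \xi_v(x^*)\right) + \xi_v(x^*)$; this is the natural split because the \ref{eq:ExpCoCo} bound controls precisely the difference $\xi_v(x) - \xi_v(x^*)$, while the residual term $\xi_v(x^*)$ is exactly the quantity whose second moment is $\sigma^2$.

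Next I would apply the above Young-type inequality to this decomposition pointwise in $v$, obtaining
\begin{equation*}
\norm{\xi_v(x)}^2 \leq 2\norm{\xi_v(x) - \xi_v(x^*)}^2 + 2\norm{\xi_v(x^*)}^2 \, .
\end{equation*}
Taking expectation over $v \sim \cD$ on both sides and using linearity of $\Exp_{\cD}$ then yields
\begin{equation*}
\Exp_{\cD}\norm{\xi_v(x)}^2 \leq 2\,\Exp_{\cD}\norm{\xi_v(x) - \xi_v(x^*)}^2 + 2\,\Exp_{\cD}\norm{\xi_v(x^*)}^2 \, .
\end{equation*}

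Finally I would bound the two terms on the right-hand side separately: the first by the expected co-coercivity Assumption~\ref{ass:ExpCoCo}, which gives $\Exp_{\cD}\norm{\xi_v(x) - \xi_v(x^*)}^2 \leq \ell_{\xi}\langle \xi(x), x - x^*\rangle$, and the second by the definition $\Exp_{\cD}\norm{\xi_v(x^*)}^2 = \sigma^2$. Substituting these two bounds produces exactly $\Exp\norm{\xi_v(x)}^2 \leq 2\ell_{\xi}\langle \xi(x), x - x^*\rangle + 2\sigma^2$, completing the argument. Honestly, there is no real obstacle here — the statement is a direct consequence of \ref{eq:ExpCoCo} together with a variance-splitting trick, and its role is purely to package the assumption into the form that will be convenient inside the descent-lemma recursions for SGDA and SCO (where one needs an upper bound on the second moment of the stochastic update). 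The only minor point to keep straight is that the inner-product term involves the \emph{full} operator $\xi(x)$ rather than its stochastic estimate, which is inherited verbatim from the right-hand side of \ref{eq:ExpCoCo} and requires no further manipulation.
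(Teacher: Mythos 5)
Your proof is correct and follows exactly the same route as the paper's: the decomposition $\xi_v(x) = (\xi_v(x)-\xi_v(x^*)) + \xi_v(x^*)$, the inequality $\|a+b\|^2 \leq 2\|a\|^2+2\|b\|^2$, then \ref{eq:ExpCoCo} and the definition~\eqref{Sigma}. Nothing to add.
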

\vspace{-2mm}
Let us now provide some more familiar sufficient conditions which
guarantee that the \ref{eq:ExpCoCo} condition holds and give closed form expression for the expected co-coercivity parameter.
\begin{proposition}
\label{PropositionMinibatch}
Let $\xi_i$ be $\ell_i$ co-coercive (or $\ell_i$ co-coercive around $x^*$), then $\xi \in EC(\ell_\xi)$. Let $\ell_{\max}=\max \{\ell_i\}_{i=1}^n$ and $\ell$ be the co-coercive constant of $\xi$, if we let $v$ to be a $b$-minibatch sampling, then
$\ell_\xi = \frac{n}{b}\frac{b-1}{n-1}\ell+\frac{1}{b}\frac{n-b}{n-1} \ell_{\max}$ and $\sigma^2 =\frac{1}{b} \frac{n-b}{n-1} \sigma_1^2$,
where $\sigma_1^2 \eqdef  \frac{1}{n} \sum_{i=1}^n \norm{\xi_i(x^*)}^2$.
\end{proposition}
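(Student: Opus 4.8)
The plan is to reduce everything to the second moments of the sampling vector and then invoke co-coercivity. Writing $a_i := \xi_i(x) - \xi_i(x^*)$, linearity of $\xi_v$ gives $\xi_v(x) - \xi_v(x^*) = \frac{1}{n}\sum_{i} v_i a_i$, so expanding the square and taking expectation yields
\begin{equation*}
\Exp_\cD\left[\norm{\xi_v(x) - \xi_v(x^*)}^2\right] = \frac{1}{n^2}\sum_{i,j} \Exp_\cD[v_i v_j]\,\langle a_i, a_j\rangle.
\end{equation*}
Everything now hinges on the matrix of second moments $\Exp_\cD[v_i v_j]$ and on re-expressing the off-diagonal part in a form that co-coercivity can control. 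I would stress up front the one point that is easy to miss: the individual $\xi_i(x^*)$ need \emph{not} vanish even though their average $\xi(x^*)=0$, which is exactly what forces $\sigma^2 \ne 0$.

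First I would compute the second moments for the $b$-minibatch sampling of Definition~\ref{def:minibatch}. Since $v_i = \frac{n}{b}$ when $i\in S$ and $0$ otherwise, a standard counting argument gives $\Prob{i\in S}=b/n$ and $\Prob{i\in S,\, j\in S}=\frac{b(b-1)}{n(n-1)}$ for $i\neq j$, hence $\Exp_\cD[v_i^2]=\frac{n}{b}$ and $\Exp_\cD[v_iv_j]=\frac{n(b-1)}{b(n-1)}$ for $i\neq j$. Substituting and splitting the double sum into its diagonal and off-diagonal parts, I would apply the polarization identity $\sum_{i\neq j}\langle a_i,a_j\rangle = \norm{\sum_i a_i}^2 - \sum_i \norm{a_i}^2$ together with $\sum_i a_i = n(\xi(x)-\xi(x^*)) = n\xi(x)$. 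After collecting, the combined coefficient of $\sum_i\norm{a_i}^2$ simplifies through $\frac{n}{b}-\frac{n(b-1)}{b(n-1)}=\frac{n(n-b)}{b(n-1)}$, leaving an expression of the form $c_1\sum_i\norm{a_i}^2 + c_2\norm{\xi(x)}^2$ with $c_1=\frac{n-b}{nb(n-1)}$ and $c_2=\frac{n(b-1)}{b(n-1)}$.

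The final step is to bound each term by co-coercivity. Componentwise $\ell_i$-co-coercivity around $x^*$ gives $\norm{a_i}^2 \leq \ell_i\langle a_i, x-x^*\rangle \leq \ell_{\max}\langle a_i, x-x^*\rangle$, and summing yields $\sum_i\norm{a_i}^2 \leq \ell_{\max}\langle \sum_i a_i, x-x^*\rangle = n\ell_{\max}\langle \xi(x), x-x^*\rangle$; likewise $\ell$-co-coercivity of $\xi$ around $x^*$ gives $\norm{\xi(x)}^2 \leq \ell\langle \xi(x), x-x^*\rangle$. Plugging both into $c_1\sum_i\norm{a_i}^2 + c_2\norm{\xi(x)}^2$ and factoring out the common $\langle \xi(x), x-x^*\rangle$ produces exactly $\ell_\xi = \frac{n}{b}\frac{b-1}{n-1}\ell + \frac{1}{b}\frac{n-b}{n-1}\ell_{\max}$, which is both the claimed constant and a proof that $\xi\in EC(\ell_\xi)$, i.e.\ that \ref{eq:ExpCoCo} holds. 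The computation of $\sigma^2 = \Exp_\cD[\norm{\xi_v(x^*)}^2]$ is the same expansion with $a_i$ replaced by $\xi_i(x^*)$: since $\sum_i \xi_i(x^*) = n\xi(x^*) = 0$, polarization turns the off-diagonal sum into $-\sum_i\norm{\xi_i(x^*)}^2$, and collecting terms gives $\sigma^2 = \frac{1}{b}\frac{n-b}{n-1}\sigma_1^2$ with $\sigma_1^2 = \frac{1}{n}\sum_i\norm{\xi_i(x^*)}^2$.

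I expect the delicate points to be bookkeeping rather than conceptual. The genuinely crucial move is the polarization step: the cross terms $\langle a_i, a_j\rangle$ are \emph{not} individually controlled by co-coercivity (which only bounds $\norm{a_i}^2$ and $\langle a_i, x-x^*\rangle$), and it is only because the minibatch off-diagonal moments are \emph{constant} across $i\neq j$ that these cross terms aggregate into the single quantity $\norm{\sum_i a_i}^2 = n^2\norm{\xi(x)}^2$, which co-coercivity of the full operator $\xi$ then handles. The remaining care is simply in evaluating the minibatch second moments correctly and in keeping track of the distinction between the individual operators $\xi_i$ (governed by $\ell_{\max}$) and their average $\xi$ (governed by $\ell$).
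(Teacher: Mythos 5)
Your proposal is correct and follows essentially the same route as the paper's proof: both expand $\Exp_\cD\|\xi_v(x)-\xi_v(x^*)\|^2$ over the second moments of $v$, exploit the fact that the off-diagonal moments are constant to recombine the cross terms into $\|\xi(x)-\xi(x^*)\|^2$ (handled by the co-coercivity constant $\ell$ of $\xi$) plus a diagonal correction (handled by $\ell_i\le\ell_{\max}$, using that each $\langle \xi_i(x)-\xi_i(x^*),x-x^*\rangle\ge 0$), and compute $\sigma^2$ by the same expansion with $\sum_i\xi_i(x^*)=0$. The only cosmetic difference is that the paper works with the general constant $z=\Prob{i,j\in S}/(p_ip_j)$ and specializes to $b$-minibatch at the end, whereas you plug in the minibatch moments directly; your polarization identity is algebraically the same decomposition.
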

\vspace{-2mm}
In the above Proposition~\ref{PropositionMinibatch}, we show how co-coercivity of $\xi_i$ implies expected co-coercivity. However, the opposite implication does not necessarily hold. Indeed the expected co-coercivity can hold even when we do not assume that $\xi_i$ are co-coercive, as we show in the next proposition.
\begin{proposition}
\label{PropositionExtra}
Let $\xi$ be quasi-strongly monotone and let $\xi_i$ be $L_i$-Lipschitz continuous for all $i \in [n]$. Then $\xi  \in EC(\ell_\xi)$.
\end{proposition}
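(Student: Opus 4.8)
The plan is to bound the stochastic residual $\Exp_\cD[\norm{\xi_v(x)-\xi_v(x^*)}^2]$ first by a multiple of $\norm{x-x^*}^2$ using only the Lipschitz continuity of the components and the nonnegativity of the sampling vector, and then to convert $\norm{x-x^*}^2$ into $\langle\xi(x),x-x^*\rangle$ via quasi-strong monotonicity~\eqref{QSM}. Throughout I take the monotonicity constant to satisfy $\mu>0$: for $\mu=0$ the statement cannot hold in general, since the right-hand side of \ref{eq:ExpCoCo} may vanish at points $x\neq x^*$ while the left-hand side stays positive.

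First I would write the residual in component form. Since $\xi_v(x)-\xi_v(x^*)=\frac1n\sum_{i=1}^n v_i\left(\xi_i(x)-\xi_i(x^*)\right)$, expanding the square and taking expectation gives
\[
\Exp_\cD\!\left[\norm{\xi_v(x)-\xi_v(x^*)}^2\right]=\frac{1}{n^2}\sum_{i,j=1}^n \Exp_\cD[v_iv_j]\,\langle \xi_i(x)-\xi_i(x^*),\,\xi_j(x)-\xi_j(x^*)\rangle.
\]
Because the sampling vector lives in $\R^n_+$ we have $\Exp_\cD[v_iv_j]\ge 0$ for all $i,j$, so I can upper bound each inner product by Cauchy--Schwarz, $\langle \xi_i(x)-\xi_i(x^*),\,\xi_j(x)-\xi_j(x^*)\rangle\le \norm{\xi_i(x)-\xi_i(x^*)}\,\norm{\xi_j(x)-\xi_j(x^*)}$, \emph{without} flipping the sign of any term. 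This collapses the double sum into $\frac{1}{n^2}\Exp_\cD[(\sum_i v_i\norm{\xi_i(x)-\xi_i(x^*)})^2]$.

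Next I would apply $L_i$-Lipschitz continuity, $\norm{\xi_i(x)-\xi_i(x^*)}\le L_i\norm{x-x^*}$, factor out $\norm{x-x^*}^2$, and obtain the intermediate bound $\Exp_\cD[\norm{\xi_v(x)-\xi_v(x^*)}^2]\le C\,\norm{x-x^*}^2$ with $C:=\frac{1}{n^2}\Exp_\cD[(\sum_{i=1}^n v_iL_i)^2]$, a finite constant for any admissible sampling distribution $\cD$ (its second moments are finite). Finally, quasi-strong monotonicity~\eqref{QSM} gives $\norm{x-x^*}^2\le \frac1\mu\langle\xi(x),x-x^*\rangle$, and chaining the two inequalities yields \ref{eq:ExpCoCo} with $\ell_\xi=C/\mu$.

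The only real obstacle is the sign bookkeeping in the Cauchy--Schwarz step: the displayed double sum is not sign-definite term by term, so to preserve the inequality I must first invoke the nonnegativity $v\in\R^n_+$ (which guarantees $\Exp_\cD[v_iv_j]\ge0$) before bounding the inner products. This is precisely what lets me avoid the coarser route through the largest eigenvalue of the second-moment matrix $\Exp_\cD[vv^\top]$. Everything else is routine, and specializing $C$ to $b$-minibatch sampling recovers an explicit constant of the same flavour as in Proposition~\ref{PropositionMinibatch}.
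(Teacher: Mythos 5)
Your proof is correct, and it follows the same two-stage architecture as the paper's: first bound $\Exp_{\cD}\left[\norm{\xi_v(x)-\xi_v(x^*)}^2\right]$ by a constant times $\norm{x-x^*}^2$ using Lipschitz continuity, then convert via quasi-strong monotonicity with $\mu>0$ (your caveat about $\mu=0$ is right and consistent with the paper's definition). Where you differ is in how the first bound is obtained. The paper's general bound applies Jensen's inequality to $\norm{\tfrac1n\sum_i v_i(\xi_i(x)-\xi_i(x^*))}^2$ directly, yielding $\ell_\xi \leq \tfrac{1}{n}\sum_i \Exp_{\cD}[v_i^2]L_i^2/\mu$; you instead expand the double sum, use $v\in\R^n_+$ to ensure $\Exp_{\cD}[v_iv_j]\geq 0$, and apply Cauchy--Schwarz termwise, yielding $C=\tfrac{1}{n^2}\Exp_{\cD}\bigl[(\sum_i v_iL_i)^2\bigr]$. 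Your constant is in fact never worse than the paper's Jensen-based one (by Cauchy--Schwarz, $(\sum_i v_iL_i)^2\leq n\sum_i v_i^2L_i^2$), so your route buys a slightly sharper general bound. However, the paper goes further: for sampling schemes with the uniform ratio property $\Prob{i,j\in S}/(p_ip_j)=z$, it separates the diagonal and off-diagonal contributions and keeps the off-diagonal part as $z\norm{\xi(x)-\xi(x^*)}^2\leq zL^2\norm{x-x^*}^2$, where $L$ is the Lipschitz constant of the \emph{averaged} operator $\xi$. This exploits cancellations among the $\xi_i$ that your termwise Cauchy--Schwarz destroys: e.g.\ for full-batch sampling the refined constant is $L^2/\mu$, which can be much smaller than your $(\tfrac1n\sum_i L_i)^2/\mu$. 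So your argument cleanly proves the stated existence claim, with a better constant than the paper's loose general bound, but does not recover the tighter minibatch-dependent expression the paper derives for standard sampling schemes.
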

\vspace{-2mm}
\paragraph{Connection to Other Assumptions}
In the optimization literature, the standard convergence analysis of stochastic gradient algorithms like SGD relied on bounded gradient ($\Exp\|\nabla f_i(x)\|^2 \leq c$) or bounded variance assumptions ($\Exp\|\nabla f_i(x)-\nabla f(x)\|^2 \leq c$) \citep{recht2011hogwild, hazan2014beyond, rakhlin2012making} or growth condition ($\Exp\|\nabla f_i(x)\|^2 \leq c_1 \|\nabla f(x)\|^2 +c_2$) \citep{bottou2018optimization, schmidt2017minimizing}. However, a recent line of work shows that these assumptions might be restrictive or never be satisfied\footnote{For example, the bounded gradient assumption and strong convexity contradict each other in the unconstrained setting (see \citep{pmlr-v80-nguyen18c} for more details).} and proposed alternative conditions~ \citep{pmlr-v80-nguyen18c, vaswani2018fast, gower2019sgd, gower2021sgd, khaled2020unified, khaled2020better, assran2018stochastic, koloskova2020unified,patel2021stochastic, loizou2020stochastic, loizou2020stochasticB}. One of the weakest assumptions used for the convergence analysis of SGD in the smooth setting, is expected smoothness (ES) proposed in \citet{gower2019sgd} (see last row of Table~\ref{TableAssumptions}). Our expected co-coercivity condition \eqref{eq:ExpCoCo} can be seen as the generalization of ES in the operator setting.

In the literature of stochastic methods for solving the variational inequality problem and min-max optimization problem, similar assumptions have been made. In particular for the analysis of stochastic algorithms, papers assume either bounded operators \citep{Nemirovski-Juditsky-Lan-Shapiro-2009,abernethy2021last} or bounded variance \citep{juditsky2011solving,yang2020global,lin2020gradient,luo2020stochastic,tran2020hybrid} and growth condition \citep{lin2020finite}. In all of the these conditions, the values of parameters $c,$ $c_1$ and $c_2$ (see Table~\ref{TableAssumptions}) usually do not have a closed form expression -- they are simply assumed to exist. However, to the best of our knowledge, there is no analysis using a concept similar to our expected co-coercivity. All existing analyses of SGDA for (quasi)-strongly monotone and co-coercive operators require the much stronger extra assumptions of ``bounded noise” or ``bounded variance” to guarantee convergence, while for SCO, there are no known convergence guarantees in the literature. Note that through Lemma~\ref{MainLemma}, \eqref{eq:ExpCoCo} implies bounds on the gradient with closed-form problem-depended expressions for these constants. We also mention that other appropriate relaxed assumptions have been considered to prove the convergence of algorithms for~\eqref{VI} ~\citep{hsieh2020explore,mishchenko2020revisiting,chavdarova2019reducing, loizou2020stochastic}, but not yet for SGDA nor SCO. For a wider literature review in the area, see Appendix~\ref{Appendix_MoreRelatedWork}.

In Table~\ref{TableAssumptions}, we illustrate the correspondence between conditions used in the stochastic optimization literature and the stochastic VI problem. For further connections between these conditions, see Appendix~\ref{Appendix_Connections}. There, for example, we show why assuming bounded gradients together with strong monotonicity lead to an empty set of operators and explain why ES and EC (see last row of Table~\ref{TableAssumptions}) are equivalent for convex and smooth single-objective optimization problems.
%\vspace{-2mm}
\begin{table}[tb]
\caption{Correspondence of Assumptions between Optimization and Variational Inequalities}
\label{TableAssumptions}
\begin{center}
\resizebox{\linewidth}{!}{
\begin{tabular}{ |c|c|c|}
 \hline
\textbf{Assumptions} & \begin{tabular}{c}\textbf{Stochastic Optimization} \\ $\min_x f(x)=\frac{1}{n} \sum_{i=1}^n f_i(x)$\end{tabular}& \begin{tabular}{c}\textbf{Stochastic Variational Inequality} \\ $\text{Find} \quad  x^* \text{such that} \quad \xi(x^*)=\frac{1}{n}\sum_{i=1}^n \xi_i(x^*)=0$\end{tabular}\\
 \hline
 \hline
Bounded Gradient & $\Exp\|\nabla f_i(x)\|^2 \leq c$ &$\Exp\|\xi_i(x)\|^2 \leq c$ \\
 \hline
Bounded Variance  & $\Exp\|\nabla f_i(x)-\nabla f(x)\|^2 \leq c$ & $\Exp\|\xi_i(x)-\xi(x)\|^2 \leq c$  \\
 \hline
Growth Condition &$\Exp\|\nabla f_i(x)\|^2 \leq c_1 \|\nabla f(x)\|^2 +c_2$ & $\Exp\|\xi_i(x)\|^2 \leq c_1 \|\xi(x)\|^2 +c_2$ \\
 \hline 
\begin{tabular}{c}Expected Smoothness (ES) / \\ Expected Cocoercivity (EC) \end{tabular} & $\Exp \left[\|\nabla f_v(x)-\nabla f_v(x^*)\|^2\right] \leq 2 \cL (f(x) -f(x^*))$ & \colorbox{blue!20}{$\Exp\left[\norm{\xi_v(x)-\xi_v(x^*)}^2 \right]\leq \ell_{\xi}\langle \xi (x),x-x^*\rangle$ }  \\
 \hline 
\end{tabular}}
\vspace{-5mm}
\end{center}
\end{table}
\section{Stochastic Gradient Descent-Ascent}
\label{Section_SGDA}
Having presented the update rule of SGDA~\eqref{SGDA_UpdateRule} for solving the stochastic reformulation~\eqref{Reformulation} of the original unconstrained stochastic variational inequality problem~\eqref{VI}, let us now provide theorems for its convergence guarantees. We highlight that our theorems hold for any selection of distributions $\cD$ over the random sampling vectors $v$ and as such they are able to describe the convergence of an infinite array of variants of SGDA each of which is associated with a specific probability law governing the data selection rule used to form minibatches.
\begin{theorem}[Constant Step-size]
\label{SGDA_ConstantStep}
Assume that $\xi$ is $\mu-$quasi strongly monotone and that $\xi \in EC( \ell_{\xi})$. Choose $\alpha_k=\alpha \leq \frac{1}{2\ell_\xi}$ for all k. Then, the iterates of SGDA, given by \eqref{SGDA_UpdateRule}, satisfy:
\vspace{-3mm}
\begin{eqnarray}
\label{nakns}
\Exp \left[ \|x^{k}-x^*\|^2 \right]&\leq& \left(1-\alpha \mu \right)^k \|x^0-x^*\|^2  + \frac{2 \alpha \sigma^2}{ \mu} \, ,
\end{eqnarray}
\end{theorem}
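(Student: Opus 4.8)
The plan is to follow the standard one-step contraction argument from the SGD literature, adapted to the operator setting: I would establish a recursion of the form $\Exp\left[\|x^{k+1}-x^*\|^2 \mid x^k\right] \leq (1-\alpha\mu)\|x^k-x^*\|^2 + 2\alpha^2\sigma^2$ and then unroll it. First I would expand the squared distance using the update rule~\eqref{SGDA_UpdateRule},
\[
\|x^{k+1}-x^*\|^2 = \|x^k-x^*\|^2 - 2\alpha\langle \xi_{v^k}(x^k),\, x^k-x^*\rangle + \alpha^2\|\xi_{v^k}(x^k)\|^2 .
\]
I would then take the expectation conditioned on $x^k$ over the freshly drawn sample $v^k\sim\cD$. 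The cross term simplifies by unbiasedness, $\Exp_\cD[\xi_{v^k}(x^k)]=\xi(x^k)$, while the last term is controlled directly by Lemma~\ref{MainLemma}, namely $\Exp\|\xi_{v^k}(x^k)\|^2 \leq 2\ell_\xi\langle\xi(x^k),x^k-x^*\rangle + 2\sigma^2$. Collecting terms gives
\[
\Exp\left[\|x^{k+1}-x^*\|^2 \mid x^k\right] \leq \|x^k-x^*\|^2 - 2\alpha\bigl(1-\alpha\ell_\xi\bigr)\langle\xi(x^k),x^k-x^*\rangle + 2\alpha^2\sigma^2 .
\]

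The key step is handling the (nonnegative) inner-product term. Since $\alpha \leq \tfrac{1}{2\ell_\xi}$ forces $1-\alpha\ell_\xi \geq \tfrac12$, the coefficient $2\alpha(1-\alpha\ell_\xi)$ is at least $\alpha>0$. Because quasi-strong monotonicity~\eqref{QSM} guarantees $\langle\xi(x^k),x^k-x^*\rangle \geq \mu\|x^k-x^*\|^2 \geq 0$, I can discard the surplus in the coefficient, replacing it by $\alpha$, and then invoke~\eqref{QSM} to obtain the clean contraction
\[
\Exp\left[\|x^{k+1}-x^*\|^2 \mid x^k\right] \leq (1-\alpha\mu)\|x^k-x^*\|^2 + 2\alpha^2\sigma^2 .
\]

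Finally, I would take total expectation, unroll the recursion over $k$, and bound the resulting geometric sum $\sum_{j=0}^{k-1}(1-\alpha\mu)^j \leq \tfrac{1}{\alpha\mu}$, yielding exactly~\eqref{nakns}. I expect the only point requiring genuine care to be the bookkeeping around the step-size condition: one must simultaneously ensure that $1-\alpha\ell_\xi \geq \tfrac12$ (so the inner-product coefficient stays above $\alpha$ while remaining nonnegative, letting the co-coercivity-type bound be absorbed) and that $1-\alpha\mu\in(0,1)$ (so the geometric series converges). Both follow from $\alpha\leq \tfrac{1}{2\ell_\xi}$ together with $\mu\leq\ell_\xi$. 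Everything else reduces to routine algebra.
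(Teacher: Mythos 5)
Your proposal is correct and follows essentially the same route as the paper's proof in Appendix~\ref{ProofSGDA_ConstantStep}: expand the squared distance, condition on $x^k$, control the quadratic term via Lemma~\ref{MainLemma}, absorb the co-coercivity surplus using $\alpha\leq\frac{1}{2\ell_\xi}$ together with quasi-strong monotonicity, and unroll the geometric recursion. The only (immaterial) difference is the order in which you apply~\eqref{QSM} versus the coefficient bound $2\alpha(1-\alpha\ell_\xi)\geq\alpha$.
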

Note that we do not assume that $\xi$ or $\xi_i$ are monotone operators in Theorem~\ref{SGDA_ConstantStep}. SGDA converges by only assuming that $\xi$ is quasi-strongly monotone and that \ref{eq:ExpCoCo} holds. Theorem~\ref{SGDA_ConstantStep} states that SGDA converges linearly to a neighborhood of $x^*$ which is proportional to the step-size $\alpha$ and the noise at the optimum $\sigma^2$.  We highlight, that since we control distribution $\cD$ we also control the values of $\ell_\xi$ and $\sigma^2$, and in the case of $b$-minibatch sampling these values have a closed-form expressions as shown in Proposition~\ref{PropositionMinibatch}.  To the best of our knowledge, Theorem~\ref{SGDA_ConstantStep} is the first last-iterate non-asymptotic convergence guarantee for SGDA for solving quasi-strongly monotone problems without assuming extra conditions on the noise. It is worth mentioning that \citet{lin2020finite} also prove last-iterate convergence of SGDA for different class of problems (they do not assume quasi-strong monotonicity), but the proposed analysis requires much stronger noise conditions. In particular, a bound on the variance with vanishing constants is needed, which, as far as we know, can only be satisfied by running SGDA with growing mini-batch size~\citep{friedlander2012hybrid} (see also App.~\ref{Appendix_MoreRelatedWork} for a more detailed discussion).
To highlight further the generality of Theorem~\ref{SGDA_ConstantStep}, we note that for the deterministic GDA, $\sigma^2=0$. Thus, we can obtain the following corollary.
\begin{corollary}[Deterministic GDA]
\label{CorollaryGDA}
Let all assumptions of Theorem~\ref{SGDA_ConstantStep} be satisfied. Let $|S|=n$ with probability one (each iteration of SGDA uses a full batch gradient). Then by selecting $\alpha_k=\alpha \leq \frac{1}{2\ell}$ for all k, the iterates of deterministic GDA satisfy: $ \|x^{k}-x^*\|^2 \leq \left(1-\alpha \mu \right)^k \|x^0-x^*\|^2$.
\end{corollary}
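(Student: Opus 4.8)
\textbf{Proof proposal for Corollary~\ref{CorollaryGDA}.}

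The plan is to \emph{specialize} Theorem~\ref{SGDA_ConstantStep} to the deterministic full-batch case rather than to re-run the whole argument. The single observation that drives the proof is that when $|S| = n$ with probability one, the sampling vector $v$ is deterministic and equals $\ones$, so $\xi_v(x) = \frac{1}{n}\sum_{i=1}^n \xi_i(x) = \xi(x)$ for every $x$. First I would use this to show that the noise at the optimum vanishes: since $\xi(x^*) = 0$ by definition of $x^*$ in~\eqref{VI}, we get $\sigma^2 = \Exp_{\cD}[\norm{\xi_v(x^*)}^2] = \norm{\xi(x^*)}^2 = 0$. This is exactly the term that produced the additive neighborhood in Theorem~\ref{SGDA_ConstantStep}, and killing it is what collapses the bound to pure geometric decay.

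Next I would pin down the correct expected-co-coercivity constant for this degenerate sampling. With $v \equiv \ones$ deterministically, the left-hand side of \eqref{eq:ExpCoCo} reduces to $\norm{\xi(x) - \xi(x^*)}^2$, so the EC condition is satisfied with $\ell_\xi = \ell$, the co-coercivity constant of $\xi$ itself (this is the $b = n$ case of Proposition~\ref{PropositionMinibatch}, where the minibatch formula gives $\ell_\xi = \frac{n}{n}\frac{n-1}{n-1}\ell + \frac{1}{n}\frac{n-n}{n-1}\ell_{\max} = \ell$). This justifies the stepsize restriction stated in the corollary, $\alpha \leq \frac{1}{2\ell}$, as precisely the instantiation of the hypothesis $\alpha \leq \frac{1}{2\ell_\xi}$ of Theorem~\ref{SGDA_ConstantStep}.

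With both substitutions in hand, I would invoke Theorem~\ref{SGDA_ConstantStep} directly. The right-hand side of~\eqref{nakns} becomes
\[
\left(1 - \alpha\mu\right)^k \norm{x^0 - x^*}^2 + \frac{2\alpha \cdot 0}{\mu} = \left(1 - \alpha\mu\right)^k \norm{x^0 - x^*}^2 .
\]
Finally I would note that in the deterministic regime there is no randomness left in the iterates: each $v^k \equiv \ones$, so the update~\eqref{SGDA_UpdateRule} reads $x^{k+1} = x^k - \alpha\,\xi(x^k)$ deterministically, and the outer expectation in~\eqref{nakns} acts on a constant. Hence $\Exp[\norm{x^k - x^*}^2] = \norm{x^k - x^*}^2$, and the inequality yields the claimed deterministic bound $\norm{x^k - x^*}^2 \leq (1-\alpha\mu)^k \norm{x^0 - x^*}^2$.

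The only genuinely substantive point — and thus the ``hard part'' — is verifying that the EC constant for the full-batch sampling is $\ell$ rather than some larger quantity; everything else is bookkeeping. This is clean because the deterministic sampling makes the expectation on the left of \eqref{eq:ExpCoCo} trivial, so the co-coercivity of the \emph{full} operator $\xi$ (around $x^*$, which follows from $\mu$-quasi-strong monotonicity and $L$-Lipschitzness via Proposition~\ref{PropositionCocoMonotone}) supplies the constant directly.
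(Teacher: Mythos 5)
Your proposal is correct and follows essentially the same route as the paper: the paper likewise takes $v \equiv (1,\dots,1)$ with probability one, notes $\xi_{v^k}(x^k)=\xi(x^k)$ so the update is deterministic GDA, invokes the $b=n$ case of Proposition~\ref{PropositionMinibatch} to get $\ell_\xi=\ell$ and $\sigma^2=0$, and substitutes into~\eqref{nakns} of Theorem~\ref{SGDA_ConstantStep}. One cosmetic remark: your final parenthetical appeal to $L$-Lipschitzness via Proposition~\ref{PropositionCocoMonotone} is unnecessary (and uses an assumption not in the corollary), since the EC hypothesis itself reduces, under full-batch sampling, to $\ell$-co-coercivity of $\xi$ around $x^*$ — exactly the equivalence the paper points out.
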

Even if Corollary~\ref{CorollaryGDA} looks trivial, to the best of our knowledge, Theorem~\ref{SGDA_ConstantStep} is the first convergence theorem of SGDA that includes the deterministic GDA originally provided by~\citet{chen1997convergence} as a special case.   

\textbf{Optimal $b$-Minibatch Size:} Using standard computations, the convergence rate presented in Theorem~\ref{SGDA_ConstantStep} can be equivalently expressed as iteration complexity result as follows: If we are given any accuracy $\epsilon>0$, choosing stepsize $\alpha  = \min \left\{ \frac{1}{2\ell_\xi},\; \frac{\epsilon\mu}{4 \sigma^2}\right\}$ and 
$
k\geq  \max \left\{ \tfrac{2\ell_{\xi}}{\mu },\; \tfrac{4 \sigma^2}{\epsilon\mu^2}\right\} \log\left(\tfrac{ 2 \|x^0 - x^*\|^2 }{  \epsilon }\right),
$
implies $\mathbb{E} \| x^k - x^* \|^2  \leq \epsilon.$ By combining the lower bound on $k$ with the expressions of $\ell_\xi$ and $\sigma^2$ of Proposition~\ref{PropositionMinibatch}, we have that the iteration complexity (by ignoring the logarithmic terms) becomes $k\geq \frac{2}{\mu} \max\{\ell_\xi, \frac{2\sigma^2}{\epsilon \mu} \}$ where $\ell_\xi = \frac{n}{b}\frac{b-1}{n-1}\ell+\frac{1}{b}\frac{n-b}{n-1} \ell_{\max} $ and $\sigma^2 =\frac{1}{b} \frac{n-b}{n-1} \sigma_1^2$. Thus, the total complexity of the algorithm as a function of the minibatch size $b$ is given by $TC(b)\; \leq \; \frac{2}{\mu} \max\{b \ell_\xi, b \frac{2\sigma^2}{\epsilon \mu} \}$.
By following the same steps with \cite{gower2019sgd}, it can be shown that $b \ell_\xi$ is linearly increasing term in $b$ while  $b\frac{2\sigma^2}{\epsilon \mu}$ is a linearly decreasing term in $b$. Hence, if we define $b^*$ to be the minibatch size that minimize the total complexity $TC(b)$ (optimal $b$-Minibatch Size) we have that if $\sigma_1^2\leq \ell_{\max}$ then $b^*=1$ otherwise $b^* = n \frac{\ell  - \ell_{\max} +\frac{2}{\epsilon \mu} \cdot  \sigma_1^2}{n\ell  - \ell_{\max} +\frac{2}{\epsilon \mu} \cdot \sigma_1^2}.$

In the next theorem, we provide an insightful stepsize-switching rule that
describes when one should switch from a constant
to a decreasing step-size regime to guarantee convergence to $x^*$ and not to a neighborhood, providing the first convergence analysis of SGDA under such a switching rule.
\begin{theorem}
\label{SGDA_DecreasingStep}
Assume $\xi$ is $\mu$-quasi-strongly monotone and that $\xi \in EC( \ell_{\xi})$. Let  $\mathcal{K} \eqdef \left.\ell_\xi\right/\mu$ and let $\alpha_k=\frac{1}{2 \ell_\xi} $ for $k \leq 4\lceil\mathcal{K} \rceil $ and $\alpha_k=\frac{2k+1}{(k+1)^2 \mu}$ for $K>4\lceil\mathcal{K} \rceil$. If $k \geq 4 \lceil\mathcal{K} \rceil$, then iterates of SGDA, given by \eqref{SGDA_UpdateRule} satisfy:
\vspace{-3mm}
\begin{equation}
\mathbb{E}\| x^{k} - x^*\|^2 \le   \frac{\sigma^2 }{\mu^2 }\frac{8 }{k} + \frac{16 \lceil\mathcal{K} \rceil^2}{e^2 k^2 }  \|x^0 - x^*\|^2 = O\left(\frac{1}{k}\right) \, .
\end{equation}
\end{theorem}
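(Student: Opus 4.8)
The plan is to reduce everything to a single master recursion on $r_k \eqdef \Exp\norm{x^k-x^*}^2$ and then track it separately through the two stepsize phases. First I would expand the SGDA update~\eqref{SGDA_UpdateRule}, take the conditional expectation given $x^k$, and use unbiasedness $\Exp[\xi_{v^k}(x^k)]=\xi(x^k)$ together with Lemma~\ref{MainLemma} to obtain, for every stepsize satisfying $\alpha_k\le\frac{1}{2\ell_\xi}$,
\[
r_{k+1}\;\le\;(1-\alpha_k\mu)\,r_k+2\alpha_k^2\sigma^2 .
\]
This is exactly the recursion underlying Theorem~\ref{SGDA_ConstantStep}: the cross term $-2\alpha_k\langle\xi(x^k),x^k-x^*\rangle$ combines with the $2\alpha_k^2\ell_\xi\langle\xi(x^k),x^k-x^*\rangle$ term coming from Lemma~\ref{MainLemma}, the restriction $\alpha_k\le\frac{1}{2\ell_\xi}$ forces the coefficient $1-\alpha_k\ell_\xi\ge\frac12$ to stay positive so the inner product keeps a negative sign, and then $\mu$-quasi-strong monotonicity~\eqref{QSM} turns it into the $(1-\alpha_k\mu)$ contraction. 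Having this recursion available from the start is what lets me handle both phases with one tool.

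For the first phase ($k\le k_0\eqdef 4\lceil\mathcal{K}\rceil$, constant stepsize $\alpha=\frac{1}{2\ell_\xi}$) I would invoke Theorem~\ref{SGDA_ConstantStep} directly at iteration $k_0$. Since $\alpha\mu=\frac{1}{2\mathcal{K}}$, the estimate $(1-\tfrac{1}{2\mathcal{K}})^{4\lceil\mathcal{K}\rceil}\le e^{-2\lceil\mathcal{K}\rceil/\mathcal{K}}\le e^{-2}$ (using $1-t\le e^{-t}$ and $\lceil\mathcal{K}\rceil\ge\mathcal{K}$) yields $r_{k_0}\le e^{-2}r_0+\frac{\sigma^2}{\mu^2\mathcal{K}}$. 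The choice $k_0=4\lceil\mathcal{K}\rceil$ is precisely what makes the exponential factor hit $e^{-2}$, which is exactly the constant appearing in the $\frac{16\lceil\mathcal{K}\rceil^2}{e^2k^2}$ term of the claim. For the second phase I would first verify stepsize admissibility: for $k\ge k_0$ one has $\alpha_k=\frac{2k+1}{(k+1)^2\mu}<\frac{2}{(k+1)\mu}\le\frac{1}{2\mathcal{K}\mu}=\frac{1}{2\ell_\xi}$, so the master recursion remains valid throughout. The key algebraic simplification is that with this stepsize $1-\alpha_k\mu=\frac{k^2}{(k+1)^2}$; multiplying the recursion by $(k+1)^2$ and setting $w_k\eqdef k^2 r_k$ produces the telescoping inequality $w_{k+1}\le w_k+\frac{2\sigma^2(2k+1)^2}{\mu^2(k+1)^2}\le w_k+\frac{8\sigma^2}{\mu^2}$, where the last bound uses $\frac{2k+1}{k+1}<2$. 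Summing from $k_0$ to $k-1$ gives $k^2 r_k=w_k\le w_{k_0}+\frac{8\sigma^2}{\mu^2}(k-k_0)$.

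The final step is the constant bookkeeping: I would substitute $w_{k_0}=k_0^2 r_{k_0}\le\frac{16\lceil\mathcal{K}\rceil^2}{e^2}r_0+\frac{16\lceil\mathcal{K}\rceil^2}{\mathcal{K}}\frac{\sigma^2}{\mu^2}$ and use $\lceil\mathcal{K}\rceil\le 2\mathcal{K}$ (valid since $\mathcal{K}=\ell_\xi/\mu\ge1$) to bound the noise contribution by $\frac{32\lceil\mathcal{K}\rceil\sigma^2}{\mu^2}=\frac{8\sigma^2}{\mu^2}k_0$. This lets me absorb the Phase~1 residual into the telescoped noise term, and dividing by $k^2$ delivers $r_k\le\frac{8\sigma^2}{\mu^2 k}+\frac{16\lceil\mathcal{K}\rceil^2}{e^2k^2}r_0$, which is the stated $O(1/k)$ bound.

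The hard part is not any single inequality but the switching-point bookkeeping: one must choose the threshold $k_0$ and the exact decaying stepsize $\frac{2k+1}{(k+1)^2\mu}$ so that three things line up simultaneously — (i) the geometric decay of Phase~1 produces exactly the factor $e^{-2}$, (ii) the decreasing stepsize never exceeds $\frac{1}{2\ell_\xi}$ for $k\ge k_0$, and (iii) the leftover $\frac{\sigma^2}{\mu^2\mathcal{K}}$ noise from Phase~1 is dominated by the $O(1/k)$ noise of Phase~2. The $w_k=k^2 r_k$ substitution and the telescoping are the standard device for obtaining $O(1/k)$ rates and become routine once the master recursion and the identity $1-\alpha_k\mu=\frac{k^2}{(k+1)^2}$ are in place.
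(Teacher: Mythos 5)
Your proposal is correct and follows essentially the same route as the paper's proof: the same master recursion $r_{k+1}\le(1-\alpha_k\mu)r_k+2\alpha_k^2\sigma^2$ from Lemma~\ref{MainLemma} and quasi-strong monotonicity, the same constant-step warm-up yielding the $e^{-2}$ factor via $(1-\tfrac{1}{2\mathcal{K}})^{4\lceil\mathcal{K}\rceil}\le e^{-2}$, and the same $(k+1)^2$-weighted telescoping in the decreasing-step phase. Your write-up is if anything slightly more careful than the paper's on two points it leaves implicit — verifying that $\alpha_k\le\frac{1}{2\ell_\xi}$ for all $k\ge 4\lceil\mathcal{K}\rceil$, and using $\lceil\mathcal{K}\rceil\le 2\mathcal{K}$ to absorb the Phase~1 noise residual $\frac{16\lceil\mathcal{K}\rceil^2}{\mathcal{K}}\frac{\sigma^2}{\mu^2}$ into the $8\sigma^2 k/\mu^2$ term.
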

\section{Stochastic Consensus Optimization}
\label{sec:SCO}
The Consensus Optimization (CO) Algorithm is a computationally-light\footnote{At each step, CO requires only the computation of a Jacobian-vector product which can be efficiently evaluated in tasks like training neural networks with comparable computation time of a gradient \citep{pearlmutter1994fast}.} second order methods which has been introduced in \cite{mescheder2017numerics} and it was shown to be an effective method for training GANs in a variety of settings. \cite{liang2019interaction} show first that CO converges linearly in the bilinear case. \cite{abernethy2021last} show that CO can be viewed as a perturbation of the deterministic Hamiltonian gradient descent (HGD) and explain how CO converges at the same rate as HGD, while \cite{azizian2019tight} prove convergence of CO for $\mu$-strongly monotone operators with positive singular values of the Jacobian matrix $J=\nabla\xi$.

However, even if CO is used explicitly in the stochastic setting and in practice only minibatch variants are implemented, to the best of our knowledge all existing analysis focus only on the deterministic setting. Thus, our work is the first that provide convergence guarantees for the Stochastic Consensus Optimization (SCO) and due to our framework, our analysis includes the convergence of the deterministic update as special case. Impressively, our analysis provides tighter rates than previous analysis even in the deterministic setting. 

For the results of this section, we assume that each $\xi_i$ in problem \eqref{VI} is differentiable. That is, we have access to the Jacobian matrices $\bJ_i(x)=\nabla\xi_i(x)$. Following \cite{loizou2020stochastic}, in our analysis we will also assume that the Hamiltonian function is quasi-strongly convex and that it satisfies expected smoothness (see last row of Table~\ref{TableAssumptions}). These are not strong assumptions, and as an example, they are satisfied for smooth bilinear min-max optimization problems \cite{loizou2020stochastic}. In Appendix~\ref{Appendix_Experiments}, by extending the results of \cite{loizou2020stochastic}, we explain how these assumptions can be satisfied for the quadratic min-max problems.

\subsection{Setting}
The consensus optimization (CO) algorithm as presented in \cite{mescheder2017numerics} has the following update rule:
\begin{equation}
\label{CO_Rule}
x^{k+1}=x^k - \alpha \xi(x^k) - \gamma \nabla \cH(x^k)
\end{equation}
where $\cH(x)= \frac{1}{2} \|\xi(x)\|^2$ is the Hamiltonian function and $\alpha, \gamma >0$ are the step-sizes. From its definition, it is clear that the update rule is essentially a weighted combination of GDA and the Hamiltonian gradient descent (HGD) of~\cite{balduzzi2018mechanics}.
In practice, implementing CO in a mini-batch setting (stochastic) leads to biased estimates of the gradient of the Hamiltonian function \citep{mescheder2017numerics}. This is one of the main reason that existing analysis was not able to capture the behavior of the method in the stochastic setting. However, recently \cite{loizou2020stochastic} proposed a way to obtain unbiased estimators of the gradient of the Hamiltonian function by expressing the Hamiltonian of a stochastic game as a finite-sum problem. In this work, we adopt the finite-sum structure and the unbiased estimators proposed in \cite{loizou2020stochastic} for the Hamiltonian part and we extend the formulation to capture the arbitrary sampling paradigm. That is, the Hamiltonian function can be expressed as, $\cH(x)= \frac{1}{2} \|\xi(x)\|^2 = \frac{1}{n} \sum_{i=1}^n \frac{1}{n} \sum_{j=1}^n \frac{1}{2} \langle  \xi_i(x),  \xi_j(x)\rangle$. 
In addition, by following the stochastic reformulation setting presented in Section~\ref{Sec:sampling}, let us have two independent random sampling vectors $u\sim \cD$ and $v\sim \cD$ and let us define: $\cH_{u,v}(x)=\frac{1}{n} \sum_{i=1}^n \frac{1}{n} \sum_{j=1}^n \frac{1}{2} \langle u_i \xi_i(x),  v_i \xi_j(x)\rangle.$ Since vectors $u$ and $v$ are \emph{independent} sampling vectors, it is clear that $\Exp_{u,v} [\cH_{u,v}(x)]= \cH(x)$, where $\Exp_{u,v}$ denotes the expectation with respect to distribution $\cD$ on both vectors $u$ and $v$. Let us also use $\bJ_i(x)=\nabla \xi_i(x)$ to express the Jacobian matrices for $i$, then the gradient of $\cH_{u,v}(x)$ has the following form:
\vspace{-4mm}
 \begin{eqnarray}
 \label{StochasticHamiltonianGradient}
\nabla \cH_{u,v}(x)=\frac{1}{2} \left[ \bJ_u^\top(x) \xi_v(x) +   \bJ_v^\top(x) \xi_u(x) \right]
\end{eqnarray}
where  $\bJ_u(x)= \frac{1}{n}\sum_{i=1}^n u_i \bJ_i(x)$ and $\bJ_v(x)= \frac{1}{n}\sum_{i=1}^n v_i \bJ_i(x)$.
Similar to \cite{loizou2020stochastic}, it can be shown that $\nabla \cH_{u,v}(x)$  is an unbiased estimator of $\nabla \cH(x)=\bJ^\top(x) \xi(x)$. That is, $\Exp_{u,v} [\nabla \cH_{u,v}(x)]=\nabla \cH(x)$.
Throughout this section, we will denote the gradient noise of the stochastic Hamiltonian function with $\sigma_{\cH}^2  \eqdef \Exp_{u,v}[\norm{\nabla \cH_{u,v}(x^*)}^2]$, which is finite for any reasonable sampling distribution $\cD$.

\subsection{Stochastic Consensus Optimization and its Special Cases}
\begin{wrapfigure}{r}{0.54\textwidth}
\begin{minipage}{0.54\textwidth}
\vspace{-5ex}
\begin{algorithm}[H]%[tb]
   \caption{Stochastic Consensus Opt. (SCO)}
   \label{SCO_Algorithm}
\begin{algorithmic}
   \STATE {\bfseries Input:} Starting step-size $\alpha_0, \gamma_0>0$. Choose initial point $x^0 \in \R^d$. Distribution $\cD$ of samples.\hspace{-10mm}   
   \FOR{$k=0,1,2,\cdots, K$}
   \STATE Sample independently $v^k \sim {\cal D}$ and $u^k \sim {\cal D}$
   \STATE Set step-sizes $\alpha_k$ and $\gamma_k$ according to a pre-selected step-size rule
   \STATE Set  $x^{k+1}=x^k - \alpha_k \xi_{v^k}(x^k) - \gamma_k \nabla \cH_{v^k,u^k}(x^k)$
   \ENDFOR
   \STATE {\bf Output:} The last iterate $x^k$
\end{algorithmic}
\end{algorithm}
\end{minipage}
\vspace{-2ex}
\end{wrapfigure}
Having explained the basic setting and background on consensus optimization algorithm and the Hamiltonian function, let us now present as Algorithm~\ref{SCO_Algorithm} the proposed stochastic consensus optimization (SCO) algorithm. Note that in each iteration $k$, two random sampling vectors $u$ and $v$ are sampled independently from a user-defined distribution $\cD$. These vectors are used to evaluate $\xi_{v^k}$ and $\nabla \cH_{v^k,u^k}(x^k)$, the unbiased estimators of $\xi(x^k)$ and $\nabla \cH(x^k)$ at point $x^k$ respectively. 
Note also that the update rule of SCO is a weighted combination of SGDA \eqref{SGDA_UpdateRule} and the stochastic Hamiltonian gradient descent (SHGD) of~\citet{loizou2020stochastic}. Thus, it is clear, that if one selects $\alpha^k=0, \forall k >0 $ then the method is equivalent to SHGD and if $ \gamma^k=0, \forall k >0 $ then the method becomes equivalent to the SGDA.
In addition if we select sampling vectors $u=v = (1,1, \dots, 1) \in R^n$ with probability 1, then from the definition of $\xi_{v^k}$ and $\nabla \cH_{v^k,u^k}(x^k)$ we obtain the deterministic CO \eqref{CO_Rule} as special case of our update rule.
\subsection{Convergence Analysis}
Let us now present our main theoretical results describing the performance of SCO. Similar to the previous section, we provide two main theorems for two different step-size selection. 
\begin{theorem}[Constant Step-size]
\label{SCO_ConstantStep}
Assume $\xi$ is $\mu$-quasi-strongly monotone with $\mu \geq 0$ and that $\xi \in EC( \ell_{\xi})$. 
Assume that the Hamiltonian function $\cH$ is $\mu_{\cH}$-quasi strongly convex and $\cL_{\cH}$-expected smooth. Then, for $\gamma_k=\gamma \leq \frac{1}{4\cL_{\cH}}$ and $\alpha_k=\alpha \leq  \frac{1}{4\ell_\xi}$, the iterates of SCO satisfy:
\vspace{-2mm}
\begin{eqnarray}
\Exp\left[ \|x^{k}-x^*\|^2 \right]\leq(1-\gamma\mu_{\cH}  - \alpha\mu )^k\|x^0-x^*\|^2 + \frac{4[\alpha^2 \sigma^2 + \gamma^2  \sigma_{\cH}^2]}{\gamma\mu_{\cH}  +\alpha\mu}.
\end{eqnarray}
If $\mu=0$, that is $\xi$ only satisfies the variational stability condition $\langle\xi(x),x-x^*\rangle \geq0$, then 
$$\Exp \left[ \|x^{k}-x^*\|^2 \right]\leq (1-\gamma\mu_{\cH} )^k \|x^0-x^*\|^2  + \frac{4 [\alpha^2 \sigma^2 +  \gamma^2  \sigma_{\cH}^2]}{\gamma \mu_{\cH} }.$$
\end{theorem}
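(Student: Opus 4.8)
The plan is to prove a single one-step recursion of the form
\[
\Exp\left[\|x^{k+1}-x^*\|^2 \mid x^k\right] \leq (1-\alpha\mu-\gamma\mu_{\cH})\,\|x^k-x^*\|^2 + 4\alpha^2\sigma^2 + 4\gamma^2\sigma_{\cH}^2,
\]
and then unroll it as a geometric sequence. First I would expand the squared distance produced by one update of Algorithm~\ref{SCO_Algorithm},
\[
\|x^{k+1}-x^*\|^2 = \|x^k-x^*\|^2 - 2\langle \alpha\xi_{v^k}(x^k)+\gamma\nabla\cH_{v^k,u^k}(x^k),\,x^k-x^*\rangle + \|\alpha\xi_{v^k}(x^k)+\gamma\nabla\cH_{v^k,u^k}(x^k)\|^2,
\]
and take the expectation over the independently drawn $v^k,u^k$. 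Unbiasedness of $\xi_{v^k}$ and of $\nabla\cH_{v^k,u^k}$ collapses the inner-product term to $\alpha\langle\xi(x^k),x^k-x^*\rangle+\gamma\langle\nabla\cH(x^k),x^k-x^*\rangle$.

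For the second-moment term I would apply the elementary bound $\|a+b\|^2\leq 2\|a\|^2+2\|b\|^2$ to decouple the two stochastic directions and then estimate each piece. Lemma~\ref{MainLemma} gives $\Exp\|\xi_{v^k}(x^k)\|^2 \leq 2\ell_\xi\langle\xi(x^k),x^k-x^*\rangle+2\sigma^2$. The $\cL_{\cH}$-expected smoothness of $\cH$, combined with $\|a\|^2\leq 2\|a-b\|^2+2\|b\|^2$ centered at $x^*$, gives the analogous bound $\Exp\|\nabla\cH_{v^k,u^k}(x^k)\|^2 \leq 4\cL_{\cH}\big(\cH(x^k)-\cH(x^*)\big)+2\sigma_{\cH}^2$. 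The two needed lower bounds on the inner products come from the structural hypotheses: $\mu$-quasi-strong monotonicity of $\xi$ yields $\langle\xi(x^k),x^k-x^*\rangle\geq\mu\|x^k-x^*\|^2$, while $\mu_{\cH}$-quasi-strong convexity of $\cH$ is used in the sharper form $\langle\nabla\cH(x^k),x^k-x^*\rangle \geq \big(\cH(x^k)-\cH(x^*)\big)+\tfrac{\mu_{\cH}}{2}\|x^k-x^*\|^2$.

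Collecting terms, the $\xi$-part reduces to $-2\alpha(1-2\alpha\ell_\xi)\langle\xi(x^k),x^k-x^*\rangle$, which is at most $-\alpha\mu\|x^k-x^*\|^2$ as soon as $\alpha\leq \tfrac{1}{4\ell_\xi}$, and the $\cH$-part reduces to $-\gamma\mu_{\cH}\|x^k-x^*\|^2-2\gamma(1-4\gamma\cL_{\cH})\big(\cH(x^k)-\cH(x^*)\big)$, whose last summand is nonpositive exactly when $\gamma\leq\tfrac{1}{4\cL_{\cH}}$. This is the crux of the argument and the place where the factor-$4$ step-size thresholds (rather than the factor-$2$ thresholds of the single-operator analyses) are forced: the loss of a factor $2$ in the $2\|a\|^2+2\|b\|^2$ splitting of the combined direction is exactly compensated by halving the admissible step-sizes. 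The delicate bookkeeping is that $\xi$ alone supplies no ``function-value'' term to absorb the $\ell_\xi$ residual, so that residual is killed purely by the co-coercive Lemma~\ref{MainLemma}, whereas on the Hamiltonian side the sharper quasi-strong-convexity inequality is precisely what produces the $\cH(x^k)-\cH(x^*)$ term that cancels the one generated by expected smoothness. What survives is the claimed recursion.

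Finally I would unroll the recursion and sum the geometric series $\sum_{j\geq0}(1-\alpha\mu-\gamma\mu_{\cH})^j = 1/(\alpha\mu+\gamma\mu_{\cH})$, which converges because the step-size caps together with the standard relations $\mu\leq\ell_\xi$ and $\mu_{\cH}\leq\cL_{\cH}$ keep the contraction factor in $[0,1)$; this gives the stated bound. The $\mu=0$ case is identical except that quasi-strong monotonicity degrades to the variational stability inequality $\langle\xi(x^k),x^k-x^*\rangle\geq0$, so the $\xi$-part contributes only a nonpositive quantity, the contraction factor becomes $1-\gamma\mu_{\cH}$, and summing the geometric series yields the neighborhood term $4[\alpha^2\sigma^2+\gamma^2\sigma_{\cH}^2]/(\gamma\mu_{\cH})$.
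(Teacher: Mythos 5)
Your proposal is correct and follows essentially the same route as the paper's proof: expand the update, apply Young's inequality to split the combined direction, bound the two second moments via Lemma~\ref{MainLemma} and the expected-smoothness analogue (Lemma~\ref{lem:weakgrowth}), use quasi-strong monotonicity for the $\xi$-part and quasi-strong convexity of $\cH$ to cancel the $\cH(x^k)-\cH(x^*)$ residual, and unroll the resulting recursion. The only cosmetic difference is that you absorb the step-size condition $\alpha\leq\frac{1}{4\ell_\xi}$ into the $\xi$-term in one step, whereas the paper first keeps the factor $1-\gamma\mu_{\cH}-2\alpha\mu+4\alpha^2\ell_\xi\mu$ (valid for $\alpha<\frac{1}{2\ell_\xi}$) and then tightens it.
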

Theorem~\ref{SCO_ConstantStep} is quite informative, as it highlights that both SGDA and SHG parts should coexist in the update rule of the SCO to guarantee faster convergence for $\mu$-quasi-strongly monotone operators (i.e. both step-sizes $\alpha$ and $\gamma$ should be positive up to specific values). However, if $\xi$ simply satisfies the variational stability condition $\langle\xi(x),x-x^*\rangle \geq0$ (when $\mu=0$), then the convergence rate of SCO, does not depend on the step-size $\alpha$ (the SGDA part) and the neighborhood of convergence $\frac{4 [\alpha^2 \sigma^2 +  \gamma^2  \sigma_{\cH}^2]}{\gamma \mu }$ is smaller when $\alpha=0$. Thus, in this case one needs to simply run the SHGD.

To appreciate the generality of Theorem~\ref{SCO_ConstantStep}, let us present some corollaries and compare the rates with existing results in the literature. First, we get a rate for the deterministic CO algorithm. 
\begin{corollary}[Deterministic CO]
\label{DeterministicCO}
Let all assumptions of Theorem~\ref{SCO_ConstantStep} be satisfied. Then, for $\gamma_k=\gamma \leq \frac{1}{4 L_{\cH}}$ and $\alpha_k=\alpha \leq \frac{1}{4\ell}$, the iterates of CO satisfy:
$
	\|x^{k}-x^*\|^2 \leq(1-\gamma\mu_{\cH}  - \alpha\mu )^k\|x^0-x^*\|^2.
$
\end{corollary}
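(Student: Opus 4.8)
The plan is to recognize deterministic CO as the degenerate special case of SCO and then invoke Theorem~\ref{SCO_ConstantStep}. As observed just before Algorithm~\ref{SCO_Algorithm}, taking $u = v = (1,\dots,1) \in \R^n$ with probability one recovers the update~\eqref{CO_Rule}. My first step is to check that under this degenerate distribution $\cD$ the stochastic estimators collapse to their exact counterparts: $\xi_v(x) = \tfrac{1}{n}\sum_i \xi_i(x) = \xi(x)$, and since then $\bJ_u = \bJ_v = \bJ$ and $\xi_u = \xi_v = \xi$, formula~\eqref{StochasticHamiltonianGradient} gives $\nabla\cH_{u,v}(x) = \bJ^\top(x)\xi(x) = \nabla\cH(x)$. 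One should also note that this $\cD$ is an admissible sampling distribution ($\Exp_\cD[v_i] = 1$ trivially), so every hypothesis of Theorem~\ref{SCO_ConstantStep} applies verbatim.

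Next I would evaluate the quantities appearing in Theorem~\ref{SCO_ConstantStep} for this $\cD$. Because the estimators are deterministic and $x^*$ solves~\eqref{VI}, the noise constants vanish: $\sigma^2 = \Exp_\cD[\|\xi_v(x^*)\|^2] = \|\xi(x^*)\|^2 = 0$ and $\sigma_\cH^2 = \|\nabla\cH(x^*)\|^2 = \|\bJ^\top(x^*)\xi(x^*)\|^2 = 0$; equivalently these follow from Proposition~\ref{PropositionMinibatch} with $b = n$. The structural constants likewise reduce to their deterministic analogues: reading the EC inequality~\eqref{eq:ExpCoCo} under the degenerate sampling yields $\|\xi(x) - \xi(x^*)\|^2 \leq \ell_\xi \langle\xi(x),x-x^*\rangle$, which is exactly co-coercivity of $\xi$ around $x^*$, so $\ell_\xi = \ell$ (again matching Proposition~\ref{PropositionMinibatch} at $b = n$); similarly expected smoothness of $\cH$ becomes ordinary smoothness, giving $\cL_\cH = L_\cH$. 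The step-size conditions $\gamma \leq \tfrac{1}{4\cL_\cH}$ and $\alpha \leq \tfrac{1}{4\ell_\xi}$ therefore sharpen to the stated $\gamma \leq \tfrac{1}{4L_\cH}$ and $\alpha \leq \tfrac{1}{4\ell}$.

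Finally I would apply Theorem~\ref{SCO_ConstantStep} with $\sigma^2 = \sigma_\cH^2 = 0$. Since no randomness remains, the expectation is vacuous and the additive neighborhood term drops out entirely, leaving $\|x^k - x^*\|^2 \leq (1-\gamma\mu_\cH - \alpha\mu)^k\|x^0 - x^*\|^2$, which is the claim. There is no genuine obstacle here: the result is a direct specialization, and the only point worth verifying carefully is the collapse of the estimators in the first step, after which the theorem supplies the bound immediately.
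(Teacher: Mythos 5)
Your proposal is correct and follows essentially the same route as the paper: specialize Theorem~\ref{SCO_ConstantStep} to the degenerate sampling $u=v=(1,\dots,1)$ with probability one, observe that the estimators collapse to $\xi$ and $\nabla\cH$, that $\sigma^2=\sigma_{\cH}^2=0$ since $\xi(x^*)=0$, and that $\ell_\xi=\ell$ and $\cL_{\cH}=L_{\cH}$ (as in Proposition~\ref{PropositionMinibatch} with $b=n$). Nothing is missing.
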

The result of Corollary~\ref{DeterministicCO} should be compared to the convergence guarantees for CO as provided in~\citet{abernethy2021last}, where the authors viewed CO as a perturbation of the deterministic Hamiltonian gradient descent (HGD). In particular, \citet{abernethy2021last} gives the rate $1-\frac{\mu_{\cH}}{4 L_\cH}$ for CO under similar assumptions,\footnote{For this convergence rate, \citet{abernethy2021last} assumed that the Hamiltonian function satisfies the Polyak-Lojasiewicz condition but focus on strongly-convex and strongly-concave min-max optimization problems.} which is clearly slower than our $1-\frac{\mu_{\cH}}{4 L_\cH}  - \frac{\mu}{4\ell}$ rate. The authors had explicitly noted that treating the GDA part as an adversarial perturbation most likely should be improved upon. With our analysis, we provide a different, more natural analysis of CO and answer this open problem.
In addition, note that by setting $\gamma_k=0$ and $\alpha_k=\alpha < 1/2  \ell_{\xi}$, then SCO becomes equivalent to SGDA and Theorem~\ref{SCO_ConstantStep} matches the convergence guarantees presented in Theorem~\ref{SGDA_ConstantStep}. On the other hand, by setting $\alpha^k=0$, SCO yields equivalent updates to SHGD and our result matches the theoretical guarantees of \citet{loizou2020stochastic} as we show in the next corollary.
\begin{corollary}
\label{CorollarySHGD}
Under the assumptions of Thm.~\ref{SCO_ConstantStep}, set $\alpha^k=0$ and $\gamma_k=\gamma \leq 1/2\cL_{\cH}$. Then SCO is equivalent to SHGD and its iterates satisfy: $\Exp\left[ \|x^{k}-x^*\|^2 \right]\leq(1-\gamma\mu_{\cH} )^k\|x^0-x^*\|^2 + \frac{2 \gamma \sigma_{\cH}^2}{\mu_{\cH}}$.
\end{corollary}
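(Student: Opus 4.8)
The plan is \emph{not} to read this off as a literal specialization of Theorem~\ref{SCO_ConstantStep}: substituting $\alpha=0$ into that bound would only yield the weaker neighborhood $4\gamma\sigma_{\cH}^2/\mu_{\cH}$ under the more restrictive range $\gamma\le 1/(4\cL_{\cH})$. The sharper factor $2$ and the larger range $\gamma\le 1/(2\cL_{\cH})$ arise because, once $\alpha^k=0$, the SCO update collapses to the pure stochastic Hamiltonian step $x^{k+1}=x^k-\gamma_k\nabla\cH_{v^k,u^k}(x^k)$, which is precisely SGD applied to minimizing $\cH(x)=\tfrac12\|\xi(x)\|^2$. I would therefore run the clean single-operator SGD argument directly, avoiding the cross terms between the SGDA noise $\sigma^2$ and the Hamiltonian noise $\sigma_{\cH}^2$ that inflate the constants in the combined proof of Theorem~\ref{SCO_ConstantStep}.

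First I would record that $x^*$ is the global minimizer of $\cH$ with $\cH(x^*)=0$ and $\nabla\cH(x^*)=0$, since $\xi(x^*)=0$. Then I would expand one step,
\begin{equation*}
\|x^{k+1}-x^*\|^2=\|x^k-x^*\|^2-2\gamma\langle\nabla\cH_{v^k,u^k}(x^k),x^k-x^*\rangle+\gamma^2\|\nabla\cH_{v^k,u^k}(x^k)\|^2,
\end{equation*}
and take the conditional expectation over $u^k,v^k$, using the unbiasedness $\Exp_{u,v}[\nabla\cH_{u,v}(x^k)]=\nabla\cH(x^k)$ established in Section~\ref{sec:SCO}.

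The two remaining terms are controlled by the two hypotheses on $\cH$. Quasi-strong convexity supplies $\langle\nabla\cH(x^k),x^k-x^*\rangle\ge \cH(x^k)-\cH(x^*)+\tfrac{\mu_{\cH}}{2}\|x^k-x^*\|^2$, and the $\cL_{\cH}$-expected smoothness of $\cH$ gives, via the standard Gower et al.\ second-moment bound (the function-value analog of Lemma~\ref{MainLemma}), $\Exp\|\nabla\cH_{u,v}(x^k)\|^2\le 4\cL_{\cH}(\cH(x^k)-\cH(x^*))+2\sigma_{\cH}^2$. Combining these produces the one-step estimate $\Exp\|x^{k+1}-x^*\|^2\le(1-\gamma\mu_{\cH})\|x^k-x^*\|^2-2\gamma(1-2\gamma\cL_{\cH})(\cH(x^k)-\cH(x^*))+2\gamma^2\sigma_{\cH}^2$. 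The step-size condition $\gamma\le 1/(2\cL_{\cH})$ enters exactly here: it makes the coefficient $1-2\gamma\cL_{\cH}$ nonnegative so that the functional-suboptimality term may be discarded.

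Finally I would take total expectation, unroll the recursion $\Exp\|x^{k+1}-x^*\|^2\le(1-\gamma\mu_{\cH})\Exp\|x^k-x^*\|^2+2\gamma^2\sigma_{\cH}^2$, and sum the geometric series $\sum_{j\ge 0}(1-\gamma\mu_{\cH})^j=1/(\gamma\mu_{\cH})$, which turns the $2\gamma^2\sigma_{\cH}^2$ increment into the claimed neighborhood $2\gamma\sigma_{\cH}^2/\mu_{\cH}$. The main point requiring care is conceptual rather than computational: one must resist deriving this as a corollary of the $\alpha=0$ substitution and instead re-run the single-term SGD template, now instantiated with $\cL_{\cH},\mu_{\cH},\sigma_{\cH}^2$ in place of the optimization constants. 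This is precisely why the bound recovers the SHGD guarantee of \citet{loizou2020stochastic}.
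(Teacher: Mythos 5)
Your proof is correct and follows essentially the same route as the paper: the paper justifies this corollary in Appendix~\ref{ProofSCO_ConstantStep} by observing that with $\alpha^k=0$ the Young's inequality step in \eqref{nannjdalsnaalsna} is unnecessary, so the factor $2$ in front of $\gamma^2\|\nabla\cH_{v^k,u^k}(x^k)\|^2$ disappears, which is exactly the single-operator SGD recursion you write out (quasi-strong convexity on the inner product, Lemma~\ref{lem:weakgrowth} on the second moment, dropping the $-2\gamma(1-2\gamma\cL_{\cH})(\cH(x^k)-\cH(x^*))$ term, and summing the geometric series). You simply make explicit the computation the paper only sketches.
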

All previous results for SCO show convergence to a neighborhood of $x^*$. In the next theorem, by selecting decreasing step-sizes (switching strategy) for the values of $\alpha$ and $\gamma$, we are able to  guarantee a sublinear convergence to the exact solution for SCO. To the best of our knowledge this is the first result analyzing SCO with decreasing step-sizes.
\begin{theorem}
\label{SCO_DecreasingStep}
Assume $\xi$ is $\mu$-quasi-strongly monotone and that $\xi \in EC( \ell_{\xi})$. Assume that the Hamiltonian function $\cH$ is $\mu_{\cH}$-quasi strongly convex and $\cL_{\cH}$-expected smooth. Let $\alpha_k=\gamma_k$,  $\psi= \max\{  \ell_{\xi},\cL_{\cH}\}$ and $k^* \eqdef 8 \frac{\psi}{\mu_{\cH}+\mu}$. Let also, $\gamma_k= \frac{1}{4 \psi}$ for $k \leq \lceil k^* \rceil$ and $\gamma_k= \frac{2k+1}{(k+1)^2 [\mu_{\cH}+\mu]}$  for  $k >  \lceil k^* \rceil$. If $k \geq  \lceil k^* \rceil$, then SCO iterates satisfy:
\vspace{-2mm}
\begin{equation}
\mathbb{E}\| x^{k} - x^*\|^2 \le   \frac{\sigma_{\cH}^2 +\sigma^2}{[\mu + \mu_{\cH}]^2}\frac{16 }{k} + \frac{(k^*)^2 }{e^2 k^2}  \|x^0 - x^*\|^2 = O\left(\frac{1}{k}\right)
\end{equation}
If $\mu=0$, that is $\xi$ only satisfies the variational stability condition $\langle\xi(x),x-x^*\rangle \geq0$, then SCO is still able to converge sublinearly with $O\left(\frac{1}{k}\right)$, to $x^*$.
\end{theorem}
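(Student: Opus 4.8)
The plan is to reduce everything to a single scalar recursion on $r_k \eqdef \Exp\|x^k - x^*\|^2$ and then treat the two step-size phases separately, in direct parallel with the proof of Theorem~\ref{SGDA_DecreasingStep}. Writing $\nu \eqdef \mu + \mu_{\cH}$ and $\Sigma^2 \eqdef \sigma^2 + \sigma_{\cH}^2$, the central first step is to establish the one-step descent inequality
\[
\Exp\left[\|x^{k+1}-x^*\|^2 \mid x^k\right] \le (1 - \gamma_k \nu)\|x^k - x^*\|^2 + 4\gamma_k^2 \Sigma^2,
\]
valid whenever $\gamma_k \le 1/(4\psi)$. This is precisely the per-step building block underlying Theorem~\ref{SCO_ConstantStep}, specialized to $\alpha_k = \gamma_k$: one expands the SCO update $x^{k+1} = x^k - \gamma_k[\xi_{v^k}(x^k) + \nabla\cH_{v^k,u^k}(x^k)]$, takes conditional expectation (using unbiasedness of both estimators), lower-bounds the two drift terms by $\mu\|x^k-x^*\|^2$ and $\mu_{\cH}\|x^k-x^*\|^2$ via quasi-strong monotonicity of $\xi$ and quasi-strong convexity of $\cH$, and upper-bounds the second moment of the stochastic direction through Lemma~\ref{MainLemma} (for the $\xi_v$ part) and $\cL_{\cH}$-expected smoothness (for the $\nabla\cH_{v,u}$ part). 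The threshold $\gamma_k \le 1/(4\psi)=1/(4\max\{\ell_\xi,\cL_{\cH}\})$ is chosen so that the $\gamma_k^2$ variance terms are dominated by half of each drift term, collapsing the coefficient to $(1-\gamma_k\nu)$. The coupling $\alpha_k=\gamma_k$ is essential here: it merges the two drifts into $\gamma_k\nu$ and the two noises into $\gamma_k^2\Sigma^2$, without which the telescoping below would not be clean.

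Given this recursion, Phase~1 ($k \le \lceil k^*\rceil$, constant $\gamma_k = 1/(4\psi)$) is a routine unrolling of a contractive map: since $\gamma_k\nu = \nu/(4\psi)$ and $\lceil k^*\rceil \ge k^* = 8\psi/\nu$, the geometric factor obeys $(1-\gamma_k\nu)^{\lceil k^*\rceil} \le e^{-\gamma_k\nu\, k^*}=e^{-2}$, so I would conclude $r_{\lceil k^*\rceil} \le e^{-2}\|x^0-x^*\|^2 + O(\Sigma^2/(\psi\nu))$, where the additive neighborhood is lower order and will be absorbed. The value $k^*=8\psi/\nu$ is not arbitrary: it is the index at which the decreasing schedule $\frac{2k+1}{(k+1)^2\nu}\approx\frac{2}{k\nu}$ first drops below the constant value $1/(4\psi)$, which simultaneously delivers the contraction factor $e^{-2}$ and guarantees that the hypothesis $\gamma_k\le 1/(4\psi)$ of the one-step recursion continues to hold after the switch.

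For Phase~2 ($k > \lceil k^*\rceil$), the key algebraic observation is that the chosen step-size makes $\gamma_k\nu = \frac{2k+1}{(k+1)^2}$, hence $1 - \gamma_k\nu = \frac{k^2}{(k+1)^2}$. Multiplying the one-step inequality by $(k+1)^2$ turns it into $(k+1)^2 r_{k+1} \le k^2 r_k + 4(k+1)^2\gamma_k^2\Sigma^2$, and since $(k+1)^2\gamma_k^2 = \frac{(2k+1)^2}{(k+1)^2\nu^2} \le 4/\nu^2$, the weighted error $w_k \eqdef k^2 r_k$ satisfies $w_{k+1}\le w_k + 16\Sigma^2/\nu^2$. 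Telescoping from the switch up to $k$ gives $k^2 r_k \le \lceil k^*\rceil^2 r_{\lceil k^*\rceil} + k\cdot 16\Sigma^2/\nu^2$; dividing by $k^2$ and inserting the Phase~1 bound yields
\[
r_k \le \frac{16\Sigma^2}{\nu^2 k} + \frac{\lceil k^*\rceil^2}{e^2 k^2}\|x^0-x^*\|^2 + o(1/k),
\]
which is the claimed $\frac{\sigma^2+\sigma_{\cH}^2}{(\mu+\mu_{\cH})^2}\frac{16}{k} + \frac{(k^*)^2}{e^2 k^2}\|x^0-x^*\|^2$ bound up to the ceiling and the absorbed Phase~1 neighborhood. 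The $\mu=0$ case needs no separate argument: quasi-strong monotonicity still gives $\langle\xi(x),x-x^*\rangle\ge 0$, so the $\xi$ drift is simply discarded rather than used, and the contraction is driven entirely by $\mu_{\cH}>0$, i.e. $\nu=\mu_{\cH}$; both noise contributions $\gamma_k^2\sigma^2$ and $\gamma_k^2\sigma_{\cH}^2$ still vanish as $\gamma_k\to 0$, giving the stated $O(1/k)$ rate to $x^*$.

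I expect the main obstacle to be the careful derivation of the one-step recursion, specifically pinning down the constants when bounding $\Exp\|\nabla\cH_{v,u}(x)\|^2$: expected smoothness controls it by $\cH(x)-\cH(x^*)$, which must then be converted into the inner product $\langle\nabla\cH(x),x-x^*\rangle$ through quasi-strong convexity so that it cancels against the $-2\gamma_k\langle\nabla\cH(x),x-x^*\rangle$ drift, and ensuring the surviving coefficient remains nonnegative (and at least half the drift) under $\gamma_k\le 1/(4\psi)$ is exactly where the step-size constants are fixed. The Phase~2 telescoping, by contrast, is mechanical once the identity $1-\gamma_k\nu = k^2/(k+1)^2$ is noticed, and the whole argument is a direct adaptation of the SGDA decreasing-step proof (Theorem~\ref{SGDA_DecreasingStep}) with $\mu$ replaced by $\nu=\mu+\mu_{\cH}$, $\ell_\xi$ by $\psi=\max\{\ell_\xi,\cL_{\cH}\}$, and $\sigma^2$ by $\Sigma^2=\sigma^2+\sigma_{\cH}^2$.
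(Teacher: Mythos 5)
Your proposal is correct and follows essentially the same route as the paper's own proof: the same one-step recursion $\Exp\|x^{k+1}-x^*\|^2 \le (1-\gamma_k[\mu+\mu_{\cH}])\|x^k-x^*\|^2 + 4\gamma_k^2[\sigma^2+\sigma_{\cH}^2]$ obtained by specializing the constant-step SCO analysis to $\alpha_k=\gamma_k\le 1/(4\psi)$, the same two-phase treatment with the $(k+1)^2$-weighted telescoping in the decreasing phase, and the same choice $k^*=8\psi/(\mu+\mu_{\cH})$ yielding the $e^{-2}$ contraction and the absorption of the Phase-1 neighborhood into the $16/k$ term. The only cosmetic deviation is that the paper cancels the expected-smoothness variance term against the drift at the level of function-value differences $\cH(x^k)-\cH(x^*)$ rather than inner products, but the constants and conclusion are identical.
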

\section{Numerical Evaluation}
\label{sec:numerical_eval}
\vspace{-2mm}
The purpose of this experimental section is to corroborate our theoretical results, which form the main contributions of this paper. To do so, we focus on strongly-monotone quadratic games of the following form:
 \vspace{-3mm}
\begin{equation}
  \label{eq:quadratic_games}
      \min_{x_1 \in \mathbb{R}^{d}} \max_{x_2 \in \mathbb{R}^{p}} \frac{1}{n}\sum_i \frac{1}{2} x_1^\top \bA_i x_1 + x_1^\top \bB_i x_2 - \frac{1}{2} x_2^\top \bC_i x_2 + a_i^\top x_1 - c_i^\top x_2
\end{equation}
\begin{figure}[t]
  \centering
  \begin{subfigure}[b]{0.24\textwidth}
  \includegraphics[width=1.05\textwidth]{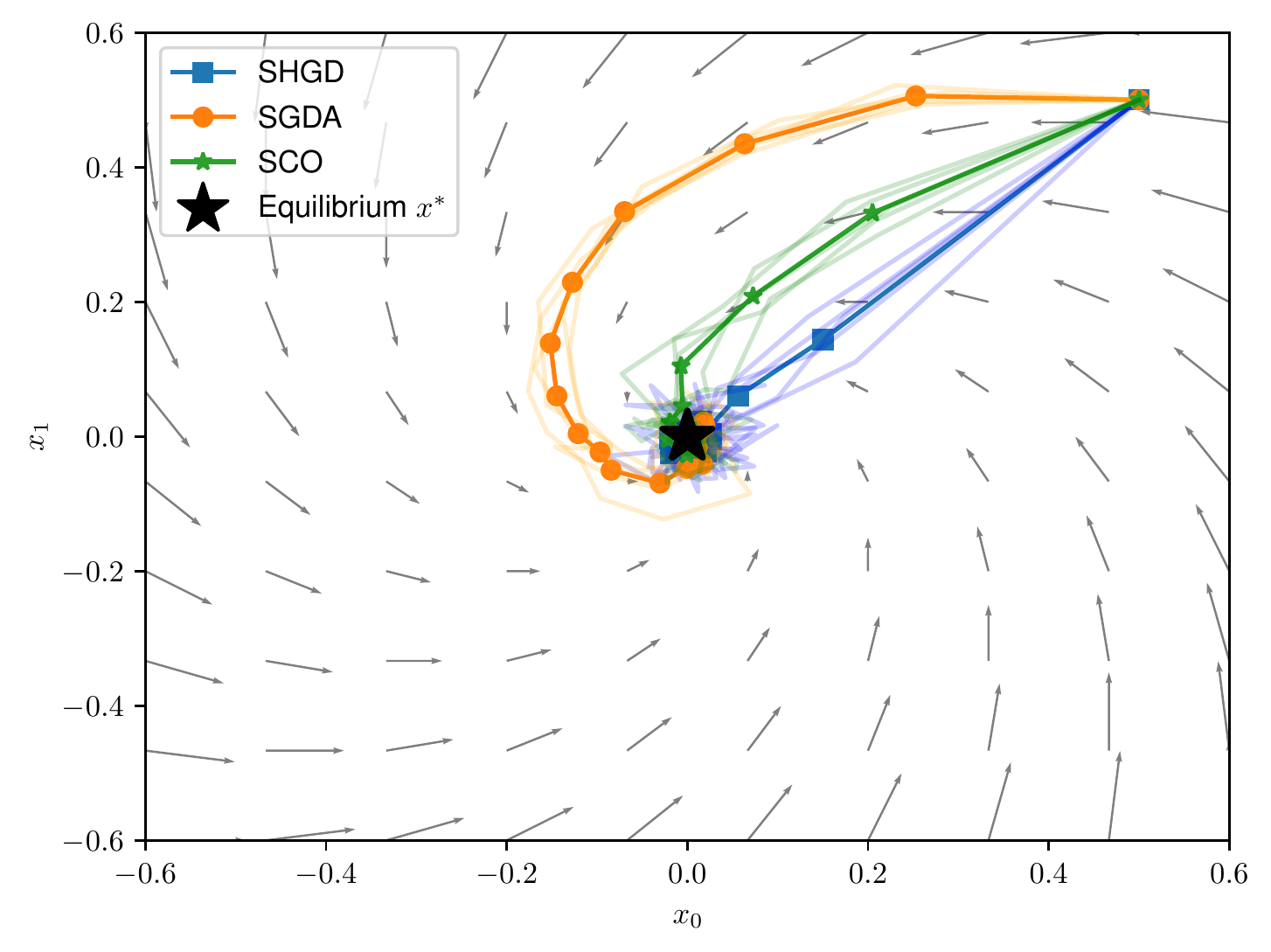}
  \caption{2d trajectories}
  \label{fig:2d_game}
  \end{subfigure}
  \begin{subfigure}[b]{0.24\textwidth}
    \includegraphics[width=\textwidth]{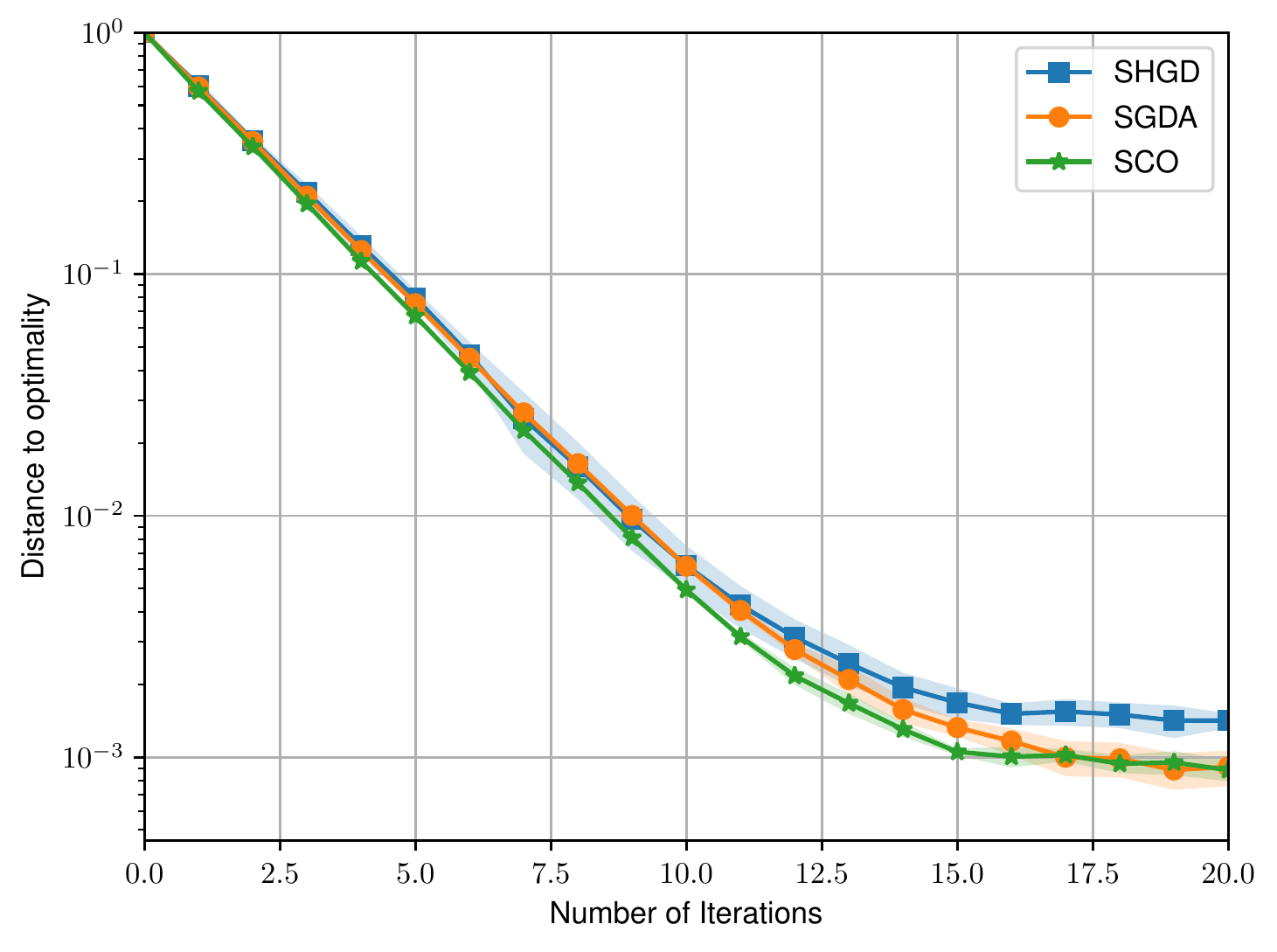}
    \caption{$\kappa_G=\frac{\ell_\xi}{\mu}=1$}
      \label{adnsonaoda}
  \end{subfigure}
  \begin{subfigure}[b]{0.24\textwidth}
    \includegraphics[width=\textwidth]{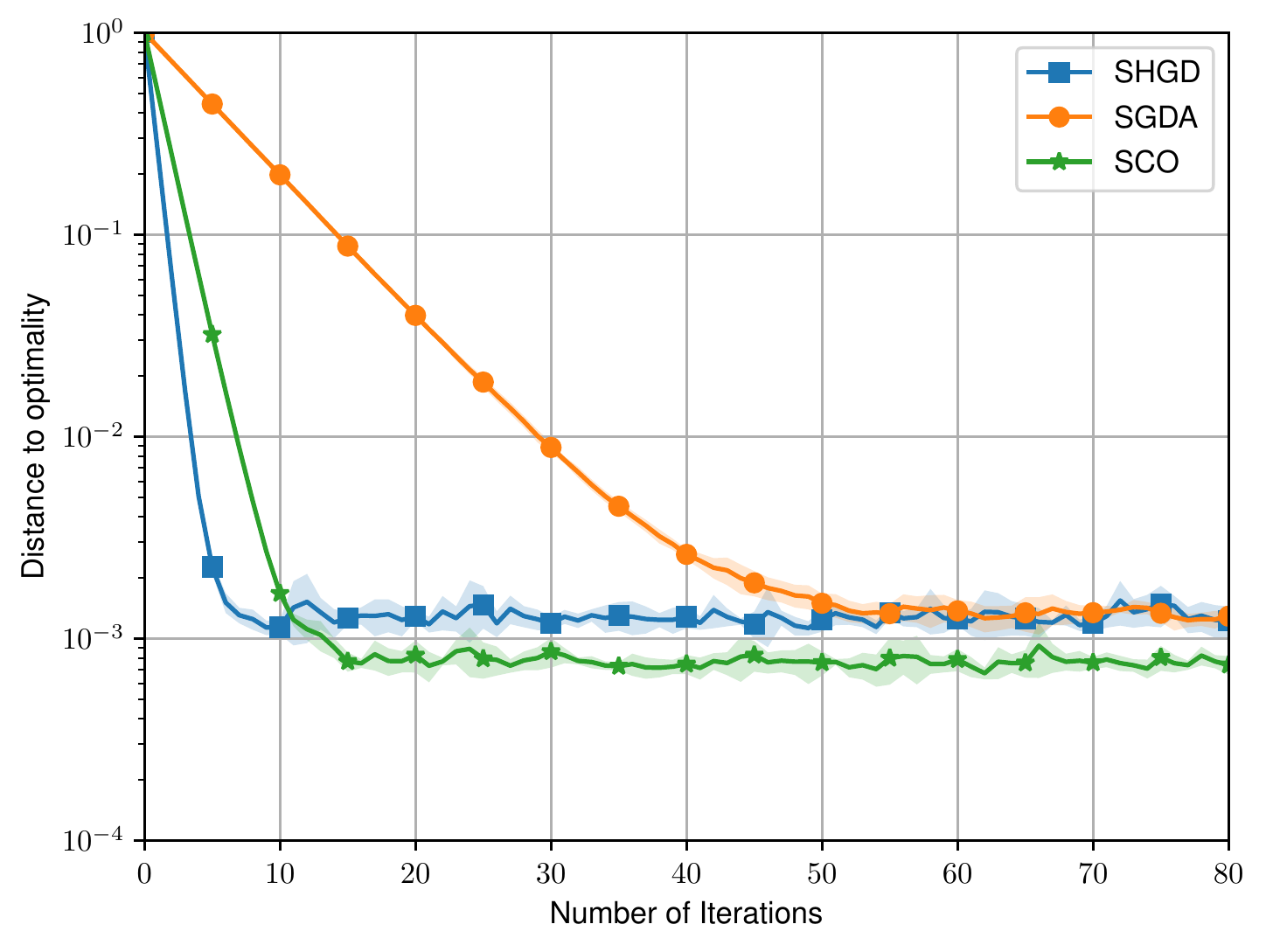}
    \caption{$\kappa_G=\frac{\ell_\xi}{\mu}=5$}
  \end{subfigure}
  \begin{subfigure}[b]{0.24\textwidth}
    \includegraphics[width=\textwidth]{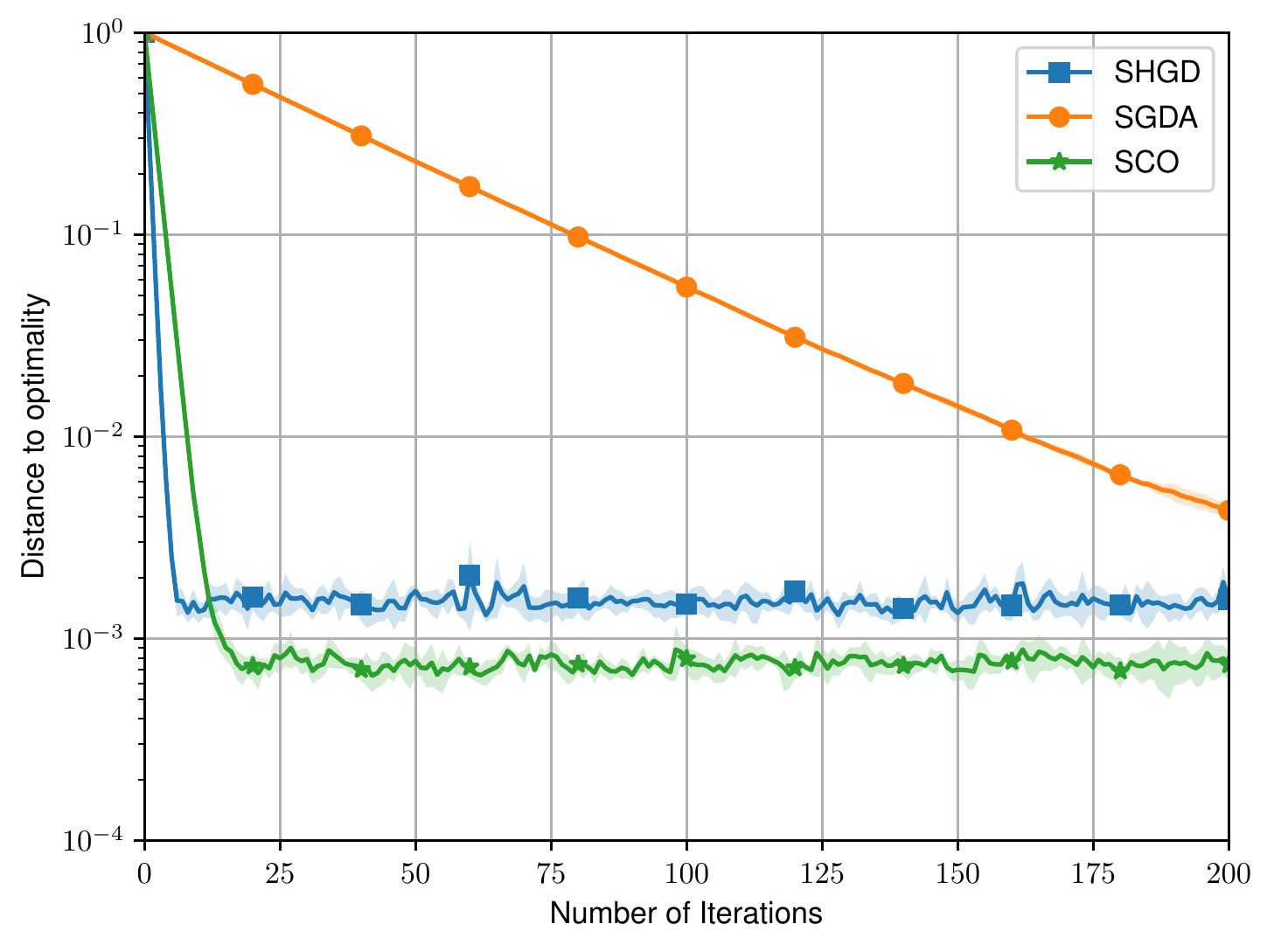}
    \caption{$\kappa_G=\frac{\ell_\xi}{\mu}=25$}
  \end{subfigure}
  \vspace{-2mm}
  \caption{\small Convergence of SGDA, SCO and SHGD on different quadratic games. \textbf{(a)} Trajectories of SCO, SHGD and SGDA on a 2d quadratic game. The arrows represent the direction defined by $\xi(x)$ at a particular point $x$. \textbf{(b-d)} Distance to optimality $\frac{\|x^k-x^*\|^2}{\|x^0-x^*\|^2}$ as a function of the number of iterations. Each plot corresponds to a game with a particular condition number $\kappa_G=\frac{\ell_\xi}{\mu}$. The solid lines represent the average performance over the 5 runs and the colored area represent the 95\% confidence intervals.}
  \label{fig:quadratic_game}
\end{figure}
We show that the Hamiltonian function of such a game is $\mu_{\cH}$-quasi-strongly convex and $L_{\cH}$-smooth in App.~\ref{app:quadratic_game_proof}. For the game to also be strongly-monotone and co-coercive, we sample the matrices such that $\mu_A \bI \preceq\bA_i \preceq L_A\bI$, $\mu_C\bI \preceq\bC_i \preceq L_C\bI$, and $\mu_B^2\bI \preceq \bB_i^\top\bB_i \preceq L_B^2\bI$, where $\bI$ is the identity matrix; the exact sampling is described in App.~\ref{app:experimental_details}. The bias terms $a_i, c_i$ are sampled from a normal distribution. We pick the step-size for the different methods according to our theoretical findings. That is,  
for constant step-size, we select $\alpha=\frac{1}{2\ell_\xi}$ for SGDA (Theorem~\ref{SGDA_ConstantStep}), $\alpha=\frac{1}{4\ell_\xi}, \gamma=\frac{1}{4\cL_{\cH}}$ for SCO (Theorem~\ref{SCO_ConstantStep}), and $\gamma=\frac{1}{2\cL_{\cH}}$ for SHGD (Corollary~\ref{CorollarySHGD}). For the stepsize-switching rule that guarantees convergence to $x^*$, we use the step-sizes proposed in Theorem~\ref{SGDA_DecreasingStep} for SGDA and Theorem~\ref{SCO_DecreasingStep} for SCO.
\begin{figure}[t]
	\centering
	\begin{subfigure}[b]{0.25\textwidth}
		\includegraphics[width=\textwidth]{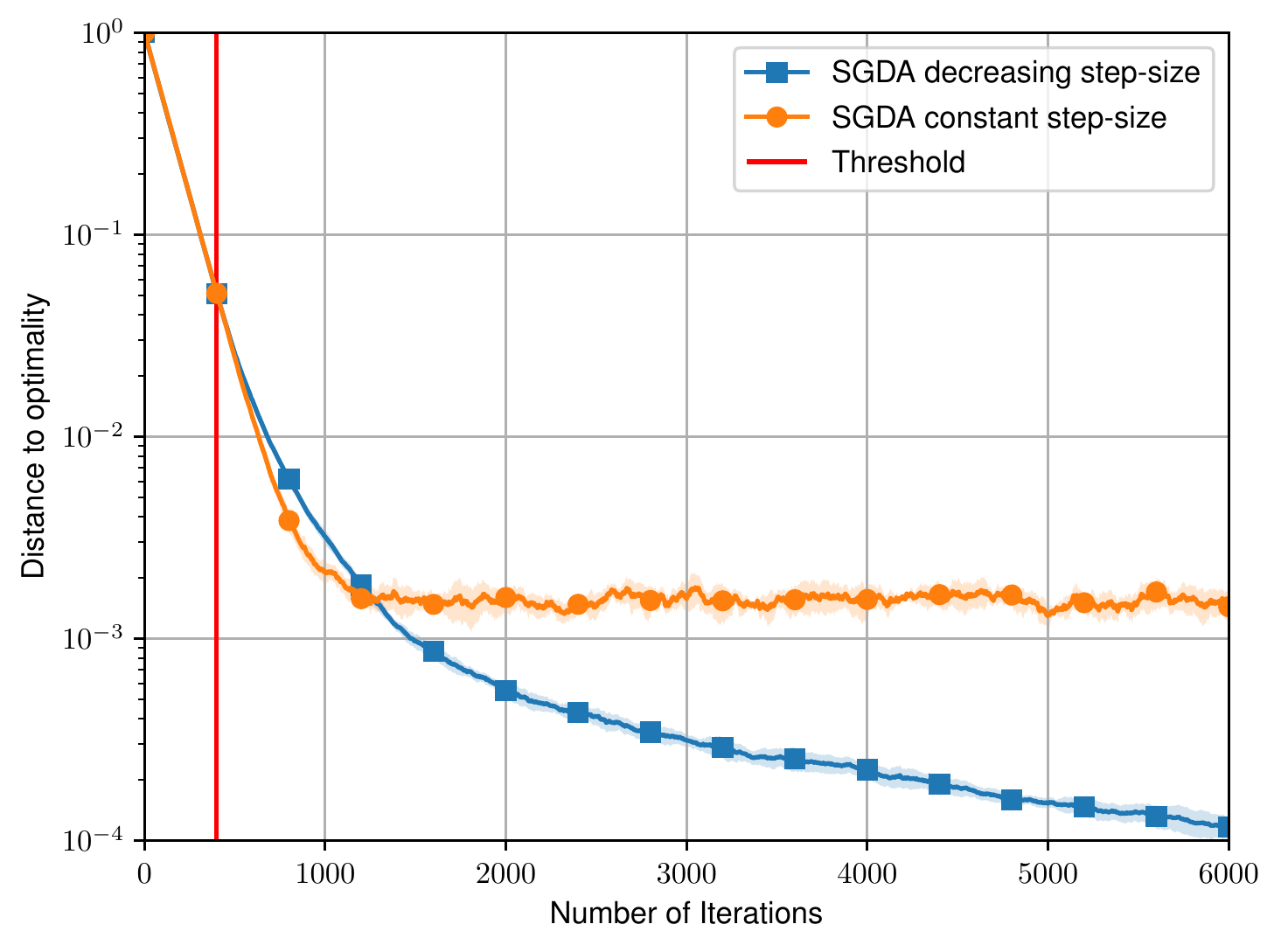}
		\caption{$\kappa_G=\frac{\ell_\xi}{\mu}=100$}
		\label{SGDAplot_switch}
	\end{subfigure}
		\begin{subfigure}[b]{0.25\textwidth}
		\includegraphics[width=\textwidth]{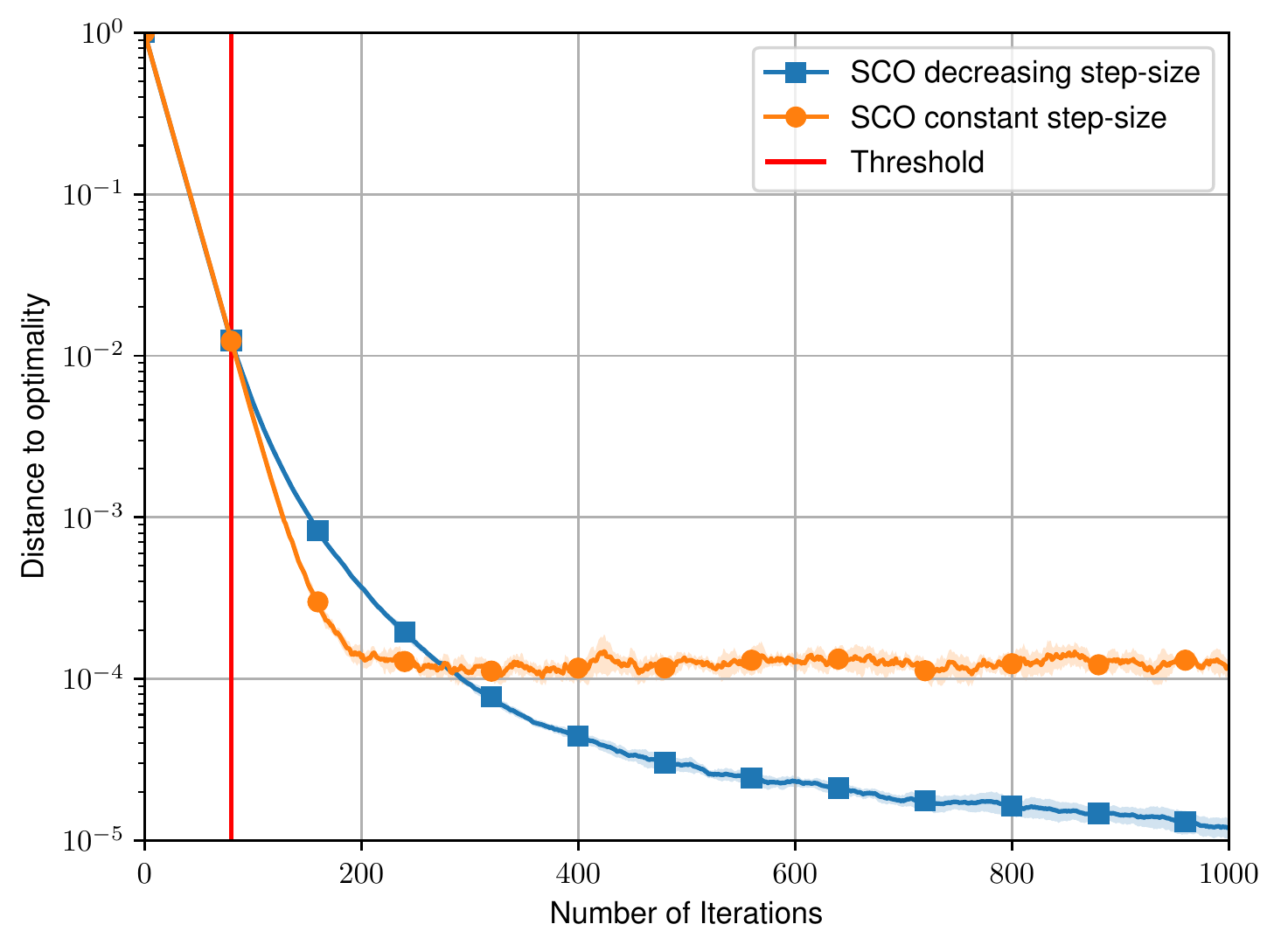}
		\caption{$\kappa_G=\frac{\ell_\xi}{\mu}=5$}
		\label{SCOplot_switch}
	\end{subfigure}
	\vspace{-2mm}
	\caption{\small Comparison between constant and decreasing step size regimes of SGDA and SCO. The vertical red lines correspond to the moment we switch from a constant to a decreasing step-size. The solid lines represent the average performance over the 5 runs and the colored area represent the 95\% confidence intervals.}
	\label{fig:quadratic_game_decreasing}
\end{figure}
Let us first look at the qualitative behavior of SGDA, SCO and SHGD on a simple 2d example where $x_1,x_2 \in \R$. We show the trajectories of the different algorithms in Fig.~\ref{fig:2d_game}. As expected we observe that the behavior of SCO is in between SGDA and SHGD. Recall that the update rule of SCO is a weighted combination of SGDA and the SHGD. The code to reproduce our results can be found at \url{https://github.com/hugobb/StochasticGamesOpt}.

\textbf{Comparison of Algorithms (Constant step-size).}  We look at the convergence of the methods for different games where we vary the condition number $\kappa_G=\frac{\ell_\xi}{\mu}$. In comparing the methods, we use the relative distance to optimality $\frac{\|x^k-x^*\|^2}{\|x^0-x^*\|^2}$. As predicted from our theoretical results, when constant step-size is used, all the methods converge linearly to a neighborhood of the solution (see Figure~\ref{fig:quadratic_game}). We observe that the performance of SGDA depends on the condition number: the higher the condition number, the slower the convergence. In contrast, the convergence of both SHGD and SCO is less affected by a larger condition number. We also observe that SGDA is slower than SHGD, but converges to a smaller neighborhood of the solution (see e.g. Fig.~\ref{adnsonaoda}). SCO achieves a good trade-off; it converges fast like SHGD and to a small neighborhood like SGDA. An important note is that the size of the neighborhood heavily depends on the selection of the learning rate; we explore this dependence in App.~\ref{app:step_size_study}.

\textbf{Constant vs Decreasing step-size.} We also compare the performance of SGDA and SCO in the constant and decreasing step-size regimes considered in Theorems~\ref{SGDA_ConstantStep} and~\ref{SGDA_DecreasingStep} for SGDA  and Theorems~\ref{SCO_ConstantStep} and \ref{SCO_DecreasingStep} for SCO. We present our results in Figure~\ref{fig:quadratic_game_decreasing}. As predicted from our theoretical analysis for both methods, the decreasing step-size (switching step-size rule) reaches higher precision compare to the constant step-size. In Figure~\ref{SGDAplot_switch} the vertical red line denotes the value $4\lceil \left.\ell_\xi\right/\mu \rceil$ predicted in Theorem~\ref{SGDA_DecreasingStep}  while in Figure~\ref{SCOplot_switch} the red line denotes the value $\left\lceil 8 \frac{\max\{  \ell_{\xi},\cL_{\cH}\}}{\mu_{\cH}+\mu} \right\rceil$ predicted in Theorem~\ref{SCO_DecreasingStep}. Note that for both algorithms the red line is a good approximation of the point where SGDA and SCO need to change their update rules from constant to decreasing step-size.
\vspace{-2mm}
\section{Conclusion and Future Directions of Research}
\vspace{-2mm}
We provided the first last-iterate convergence analysis of SGDA and SCO without requiring any strong bounded noise assumption, by introducing the much weaker expected co-coercivity assumption. We proved last-iterate convergence for both methods for a class of unconstrained variational inequality problems that are potentially non-monotone (quasi-strongly monotone problems), with both constant and decreasing step-sizes. Future work includes extending our results beyond the $\mu$-quasi strongly monotone assumption for SGDA (assuming only co-coercivity), where we expect to obtain a slower sublinear rate. We also believe the proposal of expected co-coercivity to be of independent interest; it could be used to provide an efficient analysis of other algorithms to solve~\eqref{VI} under the arbitrary sampling paradigm, and it would also be interesting to generalize the analysis to the constrained formulation of variational inequalities. 

\section*{Acknowledgments and Disclosure of Funding}

\textbf{Funding.} Nicolas Loizou acknowledges support by the IVADO Postdoctoral Funding Program. This research was partially supported by the Canada CIFAR AI Chair Program and by a Google Focused Research award. Simon Lacoste-Julien is a CIFAR
Associate Fellow in the Learning in Machines \& Brains program.

\textbf{Competing interests.}  Simon Lacoste-Julien additionally works part time as the head of the SAIT AI Lab, Montreal from Samsung. Hugo Berard was working part time as a research intern at Facebook, Montreal.

{
\small
\bibliographystyle{abbrvnat}
\bibliography{SGDA_SCO}

\begin{thebibliography}{75}
\providecommand{\natexlab}[1]{#1}
\providecommand{\url}[1]{\texttt{#1}}
\expandafter\ifx\csname urlstyle\endcsname\relax
  \providecommand{\doi}[1]{doi: #1}\else
  \providecommand{\doi}{doi: \begingroup \urlstyle{rm}\Url}\fi

\bibitem[Abernethy et~al.(2021)Abernethy, Lai, and Wibisono]{abernethy2021last}
J.~Abernethy, K.~A. Lai, and A.~Wibisono.
\newblock Last-iterate convergence rates for min-max optimization: Convergence
  of {H}amiltonian gradient descent and consensus optimization.
\newblock In \emph{ALT}, 2021.

\bibitem[Assran et~al.(2019)Assran, Loizou, Ballas, and
  Rabbat]{assran2018stochastic}
M.~Assran, N.~Loizou, N.~Ballas, and M.~Rabbat.
\newblock Stochastic gradient push for distributed deep learning.
\newblock \emph{ICML}, 2019.

\bibitem[Azizian et~al.(2020)Azizian, Mitliagkas, Lacoste-Julien, and
  Gidel]{azizian2019tight}
W.~Azizian, I.~Mitliagkas, S.~Lacoste-Julien, and G.~Gidel.
\newblock A tight and unified analysis of gradient-based methods for a whole
  spectrum of differentiable games.
\newblock In \emph{AISTATS}, 2020.

\bibitem[Balamurugan and Bach(2016)]{balamurugan2016stochastic}
P.~Balamurugan and F.~Bach.
\newblock Stochastic variance reduction methods for saddle-point problems.
\newblock In \emph{NeurIPS}, 2016.

\bibitem[Balduzzi et~al.(2018)Balduzzi, Racaniere, Martens, Foerster, Tuyls,
  and Graepel]{balduzzi2018mechanics}
D.~Balduzzi, S.~Racaniere, J.~Martens, J.~Foerster, K.~Tuyls, and T.~Graepel.
\newblock The mechanics of n-player differentiable games.
\newblock In \emph{ICML}, 2018.

\bibitem[Bauschke et~al.(2011)Bauschke, Combettes, et~al.]{bauschke2011convex}
H.~H. Bauschke, P.~L. Combettes, et~al.
\newblock \emph{Convex analysis and monotone operator theory in Hilbert
  spaces}, volume 408.
\newblock Springer, 2011.

\bibitem[Bottou et~al.(2018)Bottou, Curtis, and
  Nocedal]{bottou2018optimization}
L.~Bottou, F.~E. Curtis, and J.~Nocedal.
\newblock Optimization methods for large-scale machine learning.
\newblock \emph{SIAM Review}, 60\penalty0 (2):\penalty0 223--311, 2018.

\bibitem[Brighi and John(2002)]{brighi2002characterizations}
L.~Brighi and R.~John.
\newblock Characterizations of pseudomonotone maps and economic equilibrium.
\newblock \emph{Journal of Statistics and Management Systems}, 5\penalty0
  (1-3):\penalty0 253--273, 2002.

\bibitem[Chavdarova et~al.(2019)Chavdarova, Gidel, Fleuret, and
  Lacoste-Julien]{chavdarova2019reducing}
T.~Chavdarova, G.~Gidel, F.~Fleuret, and S.~Lacoste-Julien.
\newblock Reducing noise in {GAN} training with variance reduced extragradient.
\newblock In \emph{NeurIPS}, 2019.

\bibitem[Chen and Rockafellar(1997)]{chen1997convergence}
G.~H. Chen and R.~T. Rockafellar.
\newblock Convergence rates in forward--backward splitting.
\newblock \emph{SIAM Journal on Optimization}, 7\penalty0 (2):\penalty0
  421--444, 1997.

\bibitem[Choi et~al.(1990)Choi, DeSarbo, and Harker]{choi1990product}
S.~C. Choi, W.~S. DeSarbo, and P.~T. Harker.
\newblock Product positioning under price competition.
\newblock \emph{Management Science}, 36\penalty0 (2):\penalty0 175--199, 1990.

\bibitem[Daskalakis et~al.(2018)Daskalakis, Ilyas, Syrgkanis, and
  Zeng]{daskalakis2017training}
C.~Daskalakis, A.~Ilyas, V.~Syrgkanis, and H.~Zeng.
\newblock Training {GANs} with optimism.
\newblock In \emph{ICLR}, 2018.

\bibitem[Daskalakis et~al.(2021)Daskalakis, Skoulakis, and
  Zampetakis]{daskalakis2021complexity}
C.~Daskalakis, S.~Skoulakis, and M.~Zampetakis.
\newblock The complexity of constrained min-max optimization.
\newblock In \emph{Proceedings of the 53rd Annual ACM SIGACT Symposium on
  Theory of Computing}, pages 1466--1478, 2021.

\bibitem[Davis and Yin(2017)]{davis2017three}
D.~Davis and W.~Yin.
\newblock A three-operator splitting scheme and its optimization applications.
\newblock \emph{Set-valued and variational analysis}, 25\penalty0 (4):\penalty0
  829--858, 2017.

\bibitem[Dem'yanov and Pevnyi(1972)]{demyanov1972GradientMethodSP}
V.~F. Dem'yanov and A.~B. Pevnyi.
\newblock Numerical methods for finding saddle points.
\newblock \emph{USSR Computational Mathematics and Mathematical Physics},
  12\penalty0 (5):\penalty0 11--52, 1972.

\bibitem[Diakonikolas et~al.(2021)Diakonikolas, Daskalakis, and
  Jordan]{diakonikolas2021efficient}
J.~Diakonikolas, C.~Daskalakis, and M.~Jordan.
\newblock Efficient methods for structured nonconvex-nonconcave min-max
  optimization.
\newblock In \emph{AISTATS}, 2021.

\bibitem[Elizarov and Kalimullina(2009)]{elizarov2009maximization}
A.~M. Elizarov and A.~Kalimullina.
\newblock Maximization of the lift/drag ratio of airfoils with a turbulent
  boundary layer: Sharp estimates, approximation, and numerical solutions.
\newblock \emph{Computational Mathematics and Mathematical Physics},
  49\penalty0 (3):\penalty0 559--572, 2009.

\bibitem[Facchinei and Kanzow(2007)]{facchinei2007games}
F.~Facchinei and C.~Kanzow.
\newblock Generalized {N}ash equilibrium problems.
\newblock \emph{4OR}, 5\penalty0 (3):\penalty0 173--210, 2007.

\bibitem[Facchinei and Pang(2007)]{facchinei2007finite}
F.~Facchinei and J.-S. Pang.
\newblock \emph{Finite-dimensional variational inequalities and complementarity
  problems}.
\newblock Springer Science \& Business Media, 2007.

\bibitem[Friedlander and Schmidt(2012)]{friedlander2012hybrid}
M.~Friedlander and M.~Schmidt.
\newblock Hybrid deterministic-stochastic methods for data fitting.
\newblock \emph{SIAM Journal on Scientific Computing}, 34\penalty0
  (3):\penalty0 A1380--A1405, 2012.

\bibitem[Gidel et~al.(2018)Gidel, Berard, Vignoud, Vincent, and
  Lacoste-Julien]{gidel2018variational}
G.~Gidel, H.~Berard, G.~Vignoud, P.~Vincent, and S.~Lacoste-Julien.
\newblock A variational inequality perspective on generative adversarial
  networks.
\newblock In \emph{ICLR}, 2018.

\bibitem[Golowich et~al.(2020{\natexlab{a}})Golowich, Pattathil, and
  Daskalakis]{golowich2020tight}
N.~Golowich, S.~Pattathil, and C.~Daskalakis.
\newblock Tight last-iterate convergence rates for no-regret learning in
  multi-player games.
\newblock In \emph{NeurIPS}, 2020{\natexlab{a}}.

\bibitem[Golowich et~al.(2020{\natexlab{b}})Golowich, Pattathil, Daskalakis,
  and Ozdaglar]{golowich2020last}
N.~Golowich, S.~Pattathil, C.~Daskalakis, and A.~Ozdaglar.
\newblock Last iterate is slower than averaged iterate in smooth convex-concave
  saddle point problems.
\newblock In \emph{COLT}, 2020{\natexlab{b}}.

\bibitem[Goodfellow et~al.(2014)Goodfellow, Pouget-Abadie, Mirza, Xu,
  Warde-Farley, Ozair, Courville, and Bengio]{goodfellow2014generative}
I.~Goodfellow, J.~Pouget-Abadie, M.~Mirza, B.~Xu, D.~Warde-Farley, S.~Ozair,
  A.~Courville, and Y.~Bengio.
\newblock Generative adversarial nets.
\newblock In \emph{NeurIPS}, 2014.

\bibitem[Gower et~al.(2021)Gower, Sebbouh, and Loizou]{gower2021sgd}
R.~Gower, O.~Sebbouh, and N.~Loizou.
\newblock {SGD} for structured nonconvex functions: Learning rates,
  minibatching and interpolation.
\newblock In \emph{AISTATS}, 2021.

\bibitem[Gower et~al.(2019)Gower, Loizou, Qian, Sailanbayev, Shulgin, and
  Richt{\'a}rik]{gower2019sgd}
R.~M. Gower, N.~Loizou, X.~Qian, A.~Sailanbayev, E.~Shulgin, and
  P.~Richt{\'a}rik.
\newblock {SGD}: General analysis and improved rates.
\newblock In \emph{ICML}, 2019.

\bibitem[Harker and Pang(1990)]{harker1990finite}
P.~T. Harker and J.-S. Pang.
\newblock Finite-dimensional variational inequality and nonlinear
  complementarity problems: a survey of theory, algorithms and applications.
\newblock \emph{Mathematical programming}, 48\penalty0 (1):\penalty0 161--220,
  1990.

\bibitem[Hazan and Kale(2014)]{hazan2014beyond}
E.~Hazan and S.~Kale.
\newblock Beyond the regret minimization barrier: optimal algorithms for
  stochastic strongly-convex optimization.
\newblock \emph{The Journal of Machine Learning Research}, 15\penalty0
  (1):\penalty0 2489--2512, 2014.

\bibitem[Hsieh et~al.(2020)Hsieh, Iutzeler, Malick, and
  Mertikopoulos]{hsieh2020explore}
Y.-G. Hsieh, F.~Iutzeler, J.~Malick, and P.~Mertikopoulos.
\newblock Explore aggressively, update conservatively: Stochastic extragradient
  methods with variable stepsize scaling.
\newblock In \emph{NeurIPS}, 2020.

\bibitem[Juditsky et~al.(2011)Juditsky, Nemirovski, and
  Tauvel]{juditsky2011solving}
A.~Juditsky, A.~Nemirovski, and C.~Tauvel.
\newblock Solving variational inequalities with stochastic mirror-prox
  algorithm.
\newblock \emph{Stochastic Systems}, 1\penalty0 (1):\penalty0 17--58, 2011.

\bibitem[Kannan and Shanbhag(2019)]{kannan2019optimal}
A.~Kannan and U.~V. Shanbhag.
\newblock Optimal stochastic extragradient schemes for pseudomonotone
  stochastic variational inequality problems and their variants.
\newblock \emph{Computational Optimization and Applications}, 74\penalty0
  (3):\penalty0 779--820, 2019.

\bibitem[Karimi et~al.(2016)Karimi, Nutini, and Schmidt]{karimi2016linear}
H.~Karimi, J.~Nutini, and M.~Schmidt.
\newblock Linear convergence of gradient and proximal-gradient methods under
  the {P}olyak-{\l}ojasiewicz condition.
\newblock In \emph{ECML-PKDD}, 2016.

\bibitem[Khaled and Richt{\'a}rik(2020)]{khaled2020better}
A.~Khaled and P.~Richt{\'a}rik.
\newblock Better theory for {SGD} in the nonconvex world.
\newblock \emph{arXiv preprint arXiv:2002.03329}, 2020.

\bibitem[Khaled et~al.(2020)Khaled, Sebbouh, Loizou, Gower, and
  Richtárik]{khaled2020unified}
A.~Khaled, O.~Sebbouh, N.~Loizou, R.~M. Gower, and P.~Richtárik.
\newblock Unified analysis of stochastic gradient methods for composite convex
  and smooth optimization.
\newblock \emph{arXiv preprint arXiv:2006.11573}, 2020.

\bibitem[Koloskova et~al.(2020)Koloskova, Loizou, Boreiri, Jaggi, and
  Stich]{koloskova2020unified}
A.~Koloskova, N.~Loizou, S.~Boreiri, M.~Jaggi, and S.~U. Stich.
\newblock A unified theory of decentralized {SGD} with changing topology and
  local updates.
\newblock \emph{ICML}, 2020.

\bibitem[Li and Yuan(2017)]{li2017convergence}
Y.~Li and Y.~Yuan.
\newblock Convergence analysis of two-layer neural networks with relu
  activation.
\newblock In \emph{NeurIPS}, 2017.

\bibitem[Liang and Stokes(2019)]{liang2019interaction}
T.~Liang and J.~Stokes.
\newblock Interaction matters: A note on non-asymptotic local convergence of
  generative adversarial networks.
\newblock In \emph{AISTATS}, 2019.

\bibitem[Lin et~al.(2020{\natexlab{a}})Lin, Jin, and Jordan]{lin2020gradient}
T.~Lin, C.~Jin, and M.~Jordan.
\newblock On gradient descent ascent for nonconvex-concave minimax problems.
\newblock In \emph{ICML}, 2020{\natexlab{a}}.

\bibitem[Lin et~al.(2020{\natexlab{b}})Lin, Zhou, Mertikopoulos, and
  Jordan]{lin2020finite}
T.~Lin, Z.~Zhou, P.~Mertikopoulos, and M.~Jordan.
\newblock Finite-time last-iterate convergence for multi-agent learning in
  games.
\newblock In \emph{ICML}, 2020{\natexlab{b}}.

\bibitem[Lions and Mercier(1979)]{lions1979splitting}
P.-L. Lions and B.~Mercier.
\newblock Splitting algorithms for the sum of two nonlinear operators.
\newblock \emph{SIAM Journal on Numerical Analysis}, 16\penalty0 (6):\penalty0
  964--979, 1979.

\bibitem[Loizou and Richt{\'a}rik(2020{\natexlab{a}})]{loizou2020convergence}
N.~Loizou and P.~Richt{\'a}rik.
\newblock Convergence analysis of inexact randomized iterative methods.
\newblock \emph{SIAM Journal on Scientific Computing}, 42\penalty0
  (6):\penalty0 A3979--A4016, 2020{\natexlab{a}}.

\bibitem[Loizou and Richt{\'a}rik(2020{\natexlab{b}})]{loizou2020momentum}
N.~Loizou and P.~Richt{\'a}rik.
\newblock Momentum and stochastic momentum for stochastic gradient, newton,
  proximal point and subspace descent methods.
\newblock \emph{Computational Optimization and Applications}, 77\penalty0
  (3):\penalty0 653--710, 2020{\natexlab{b}}.

\bibitem[Loizou et~al.(2020)Loizou, Berard, Jolicoeur-Martineau, Vincent,
  Lacoste-Julien, and Mitliagkas]{loizou2020stochastic}
N.~Loizou, H.~Berard, A.~Jolicoeur-Martineau, P.~Vincent, S.~Lacoste-Julien,
  and I.~Mitliagkas.
\newblock Stochastic {H}amiltonian gradient methods for smooth games.
\newblock In \emph{ICML}, 2020.

\bibitem[Loizou et~al.(2021)Loizou, Vaswani, Laradji, and
  Lacoste-Julien]{loizou2020stochasticB}
N.~Loizou, S.~Vaswani, I.~Laradji, and S.~Lacoste-Julien.
\newblock Stochastic polyak step-size for {SGD}: An adaptive learning rate for
  fast convergence.
\newblock \emph{AISTATS}, 2021.

\bibitem[Luo et~al.(2020)Luo, Ye, Huang, and Zhang]{luo2020stochastic}
L.~Luo, H.~Ye, Z.~Huang, and T.~Zhang.
\newblock Stochastic recursive gradient descent ascent for stochastic
  nonconvex-strongly-concave minimax problems.
\newblock \emph{NeurIPS}, 2020.

\bibitem[Mertikopoulos and Zhou(2019)]{mertikopoulos2019games}
P.~Mertikopoulos and Z.~Zhou.
\newblock Learning in games with continuous action sets and unknown payoff
  functions.
\newblock \emph{Mathematical Programming}, 173\penalty0 (1):\penalty0 465--507,
  2019.

\bibitem[Mescheder et~al.(2017)Mescheder, Nowozin, and
  Geiger]{mescheder2017numerics}
L.~Mescheder, S.~Nowozin, and A.~Geiger.
\newblock The numerics of {GAN}.
\newblock In \emph{NeurIPS}, 2017.

\bibitem[Mishchenko et~al.(2020)Mishchenko, Kovalev, Shulgin, Richt{\'a}rik,
  and Malitsky]{mishchenko2020revisiting}
K.~Mishchenko, D.~Kovalev, E.~Shulgin, P.~Richt{\'a}rik, and Y.~Malitsky.
\newblock Revisiting stochastic extragradient.
\newblock In \emph{AISTATS}, 2020.

\bibitem[Mokhtari et~al.(2020)Mokhtari, Ozdaglar, and
  Pattathil]{mokhtari2020unified}
A.~Mokhtari, A.~Ozdaglar, and S.~Pattathil.
\newblock A unified analysis of extra-gradient and optimistic gradient methods
  for saddle point problems: Proximal point approach.
\newblock In \emph{AISTATS}, 2020.

\bibitem[Necoara et~al.(2018)Necoara, Nesterov, and
  Glineur]{Necoara-Nesterov-Glineur-2018-linear-without-strong-convexity}
I.~Necoara, Y.~Nesterov, and F.~Glineur.
\newblock Linear convergence of first order methods for non-strongly convex
  optimization.
\newblock \emph{Math. Program.}, pages 1--39, 2018.

\bibitem[Nemirovski et~al.(2009)Nemirovski, Juditsky, Lan, and
  Shapiro]{Nemirovski-Juditsky-Lan-Shapiro-2009}
A.~Nemirovski, A.~Juditsky, G.~Lan, and A.~Shapiro.
\newblock Robust stochastic approximation approach to stochastic programming.
\newblock \emph{SIAM Journal on Optimization}, 19\penalty0 (4):\penalty0
  1574--1609, 2009.

\bibitem[Nesterov(2013)]{nesterov2013introductory}
Y.~Nesterov.
\newblock \emph{Introductory Lectures on Convex Optimization: A Basic Course},
  volume~87.
\newblock Springer Science \& Business Media, 2013.

\bibitem[Nguyen et~al.(2018)Nguyen, Nguyen, van Dijk, Richt\'{a}rik,
  Scheinberg, and Tak\'{a}\v{c}]{pmlr-v80-nguyen18c}
L.~Nguyen, P.~H. Nguyen, M.~van Dijk, P.~Richt\'{a}rik, K.~Scheinberg, and
  M.~Tak\'{a}\v{c}.
\newblock {SGD} and {H}ogwild! {C}onvergence without the bounded gradients
  assumption.
\newblock In \emph{ICML}, 2018.

\bibitem[Palaniappan and Bach(2016)]{palaniappan2016stochastic}
B.~Palaniappan and F.~Bach.
\newblock Stochastic variance reduction methods for saddle-point problems.
\newblock In \emph{NeurIPS}, 2016.

\bibitem[Patel and Zhang(2021)]{patel2021stochastic}
V.~Patel and S.~Zhang.
\newblock Stochastic gradient descent on nonconvex functions with general noise
  models.
\newblock \emph{arXiv preprint arXiv:2104.00423}, 2021.

\bibitem[Pearlmutter(1994)]{pearlmutter1994fast}
B.~A. Pearlmutter.
\newblock Fast exact multiplication by the {H}essian.
\newblock \emph{Neural computation}, 6\penalty0 (1):\penalty0 147--160, 1994.

\bibitem[Pfau and Vinyals(2016)]{pfau2016connecting}
D.~Pfau and O.~Vinyals.
\newblock Connecting generative adversarial networks and actor-critic methods.
\newblock \emph{arXiv preprint arXiv:1610.01945}, 2016.

\bibitem[Rakhlin et~al.(2012)Rakhlin, Shamir, and Sridharan]{rakhlin2012making}
A.~Rakhlin, O.~Shamir, and K.~Sridharan.
\newblock Making gradient descent optimal for strongly convex stochastic
  optimization.
\newblock In \emph{ICML}, 2012.

\bibitem[Recht et~al.(2011)Recht, Re, Wright, and Niu]{recht2011hogwild}
B.~Recht, C.~Re, S.~Wright, and F.~Niu.
\newblock Hogwild: A lock-free approach to parallelizing stochastic gradient
  descent.
\newblock In \emph{NeurIPS}, 2011.

\bibitem[Richt{\'a}rik and Tak{\'a}{\v{c}}(2016)]{richtarik2016optimal}
P.~Richt{\'a}rik and M.~Tak{\'a}{\v{c}}.
\newblock On optimal probabilities in stochastic coordinate descent methods.
\newblock \emph{Optimization Letters}, 10\penalty0 (6):\penalty0 1233--1243,
  2016.

\bibitem[Richt{\'a}rik and Tak{\'a}c(2020)]{richtarik2020stochastic}
P.~Richt{\'a}rik and M.~Tak{\'a}c.
\newblock Stochastic reformulations of linear systems: algorithms and
  convergence theory.
\newblock \emph{SIAM Journal on Matrix Analysis and Applications}, 41\penalty0
  (2):\penalty0 487--524, 2020.

\bibitem[Rosasco et~al.(2014)Rosasco, Villa, and V{\~u}]{rosasco2014stochastic}
L.~Rosasco, S.~Villa, and B.~C. V{\~u}.
\newblock A stochastic forward-backward splitting method for solving monotone
  inclusions in hilbert spaces.
\newblock \emph{arXiv preprint arXiv:1403.7999}, 2014.

\bibitem[Rousseau et~al.(2005)Rousseau, Sharer, Pagerit, and
  Das]{rousseau2005trade}
A.~Rousseau, P.~Sharer, S.~Pagerit, and S.~Das.
\newblock Trade-off between fuel economy and cost for advanced vehicle
  configurations.
\newblock In \emph{20th International Electric Vehicle Symposium (EVS20),
  Monaco}, volume~5, 2005.

\bibitem[Schmidt et~al.(2017)Schmidt, Le~Roux, and Bach]{schmidt2017minimizing}
M.~Schmidt, N.~Le~Roux, and F.~Bach.
\newblock Minimizing finite sums with the stochastic average gradient.
\newblock \emph{Math. Program.}, 162\penalty0 (1-2):\penalty0 83--112, 2017.

\bibitem[Scutari et~al.(2010)Scutari, Palomar, Facchinei, and
  Pang]{scutari2010games}
G.~Scutari, D.~P. Palomar, F.~Facchinei, and J.-S. Pang.
\newblock Convex optimization, game theory, and variational inequality theory.
\newblock \emph{IEEE Signal Processing Magazine}, 27\penalty0 (3):\penalty0
  35--49, 2010.

\bibitem[Song et~al.(2020)Song, Zhou, Zhou, Jiang, and Ma]{song2020optimistic}
C.~Song, Z.~Zhou, Y.~Zhou, Y.~Jiang, and Y.~Ma.
\newblock Optimistic dual extrapolation for coherent non-monotone variational
  inequalities.
\newblock \emph{NeurIPS}, 2020.

\bibitem[Tran~Dinh et~al.(2020)Tran~Dinh, Liu, and Nguyen]{tran2020hybrid}
Q.~Tran~Dinh, D.~Liu, and L.~Nguyen.
\newblock Hybrid variance-reduced {SGD} algorithms for minimax problems with
  nonconvex-linear function.
\newblock \emph{NeurIPS}, 2020.

\bibitem[Tseng(1995)]{tseng1995linear}
P.~Tseng.
\newblock On linear convergence of iterative methods for the variational
  inequality problem.
\newblock \emph{Journal of Computational and Applied Mathematics}, 60\penalty0
  (1-2):\penalty0 237--252, 1995.

\bibitem[Vaswani et~al.(2018)Vaswani, Bach, and Schmidt]{vaswani2018fast}
S.~Vaswani, F.~Bach, and M.~Schmidt.
\newblock Fast and faster convergence of {SGD} for over-parameterized models
  and an accelerated perceptron.
\newblock \emph{arXiv preprint arXiv:1810.07288}, 2018.

\bibitem[V{\~u}(2013)]{vu2013splitting}
B.~C. V{\~u}.
\newblock A splitting algorithm for dual monotone inclusions involving
  cocoercive operators.
\newblock \emph{Advances in Computational Mathematics}, 38\penalty0
  (3):\penalty0 667--681, 2013.

\bibitem[Wen et~al.(2014)Wen, Yu, and Greiner]{wen2014robust}
J.~Wen, C.-N. Yu, and R.~Greiner.
\newblock Robust learning under uncertain test distributions: Relating
  covariate shift to model misspecification.
\newblock In \emph{ICML}, 2014.

\bibitem[Yang et~al.(2020)Yang, Kiyavash, and He]{yang2020global}
J.~Yang, N.~Kiyavash, and N.~He.
\newblock Global convergence and variance-reduced optimization for a class of
  nonconvex-nonconcave minimax problems.
\newblock \emph{NeurIPS}, 2020.

\bibitem[Zhou et~al.(2017)Zhou, Mertikopoulos, Bambos, Boyd, and
  Glynn]{zhou2017stochastic}
Z.~Zhou, P.~Mertikopoulos, N.~Bambos, S.~Boyd, and P.~W. Glynn.
\newblock Stochastic mirror descent in variationally coherent optimization
  problems.
\newblock \emph{NeurIPS}, 2017.

\bibitem[Zhou et~al.(2021)Zhou, Mertikopoulos, Moustakas, Bambos, and
  Glynn]{zhou2021robust}
Z.~Zhou, P.~Mertikopoulos, A.~L. Moustakas, N.~Bambos, and P.~Glynn.
\newblock Robust power management via learning and game design.
\newblock \emph{Operations Research}, 69\penalty0 (1):\penalty0 331--345, 2021.

\bibitem[Zhu and Marcotte(1996)]{zhu1996co}
D.~L. Zhu and P.~Marcotte.
\newblock Co-coercivity and its role in the convergence of iterative schemes
  for solving variational inequalities.
\newblock \emph{SIAM Journal on Optimization}, 6\penalty0 (3):\penalty0
  714--726, 1996.

\end{thebibliography}
}
\newpage

\appendix 

\part*{Supplementary Material}
The supplementary material is organized as follows: In Section~\ref{AppendixEC}, we give some basic definitions and provide the proofs of the propositions, lemmas and theorems related to the expected
co-coercivity condition as presented in Section~\ref{Section_ExpectedCoCo} of the main paper. 
In Section~\ref{AppendixProofs} we present the proofs of the main
theorems and corollaries for the convergence of SGDA and SCO. 
In Section~\ref{Appendix_Experiments} we present the experimental details and provide additional experiments. Finally in Section~\ref{Appendix_BeyondFiniteSum} we explain how our convergence results can be easily adapted to the general stochastic setting and in Section~\ref{Appendix_MoreRelatedWork} we provide further related work.

\tableofcontents

\section{Proofs of Results on Co-coercivity and Expected Co-coercivity}
\label{AppendixEC}

Let us start by re-stating the main definitions of the classes of operators under study.
\begin{definition}[Lipschitz continuous]
An operator $\xi: \R^d \rightarrow \R^d$ is $L-$Lipschitz continuous if there is $L>0$ such that:
\begin{equation}
\label{DefLipschitz}
\|\xi(x)-\xi(y)\| \leq L \|x-y\|, \quad \forall x, y \in \R^d
\end{equation}
\end{definition}

\begin{definition}[Co-coercivity]
\label{DefCOCO}
We say that an operator $\xi$ is $\ell$--co-coercive if there exist $\ell>0$ such that:
$$\|\xi(x)-\xi(y)\|^2 \leq \ell \langle\xi(x)-\xi(y),x-y\rangle \quad \forall x , y \in \R^d.$$
\end{definition}

\begin{definition}[Co-coercive \emph{around} $w^*$]
\label{DefCOCOStar}
We say that an operator $\xi$ is $\ell$--co-coercive \emph{around} $w^*$ if there exist $w^* \in \R^d$ and $\ell>0$ such that $$\|\xi(x)-\xi(w^*)\|^2 \leq \ell \langle\xi(x)-\xi(w^*),x-w^*\rangle \quad \forall x \in \R^d.$$Note that in this definition, the point $w^*$ is not necessarily a point where $\xi(w^*)=0$.
\end{definition}

\begin{definition}[Strongly monotone / monotone]
\label{DefSM}
We say that an operator $\xi$ is $\mu$--strongly monotone if there exist $\mu>0$ such that $$\left\langle\xi(x)-\xi(y),  x-y\right\rangle \geq \mu \|x-y\|^2 \quad \forall x,y \in \R^d.$$ 
If $\mu=0$, that is $$\left\langle\xi(x)-\xi(y),  x-y\right\rangle \geq 0 \quad \forall x,y \in \R^d,$$ then we say that the operator is monotone.
\end{definition}

\begin{definition}[Quasi-Strongly Monotone / Variational Stability Condition]
\label{DefQSM}
We say that an operator $\xi$ is $\mu$-\emph{quasi-strongly monotone} if there exist $\mu>0$ such that $$\left\langle\xi(x),  x-x^*\right\rangle \geq \mu \|x-x^*\|^2 \quad \forall x \in \R^d.$$ 
Here $x^*$ is the solution of the stochastic variational inequality problem~\eqref{VI}.
If $\mu=0$, that is 
\begin{equation}
\label{DefVarStab}
\langle \xi(x), x-x^*\rangle\geq 0
\end{equation}
then we say that $\xi$ satisfies the variational stability condition.
\end{definition}

\subsection{Proof of Proposition~\ref{PropositionCocoMonotone}}

Before stating the proof of Proposition~\ref{PropositionCocoMonotone}, we clarify that the assumption of $L$-Lipschitzness of $\xi$ is only used for the implications where $L$ appear; it is not needed for the other implications. In particular, while a $\ell$-co-coercive operator is always $\ell$-Lipschitz continuous by using Cauchy-Schwartz,\footnote{$\| \xi(x) - \xi(x')\|^2 \leq \ell \langle \xi(x)-\xi(x'), x-x' \rangle \leq \ell  \| \xi(x)-\xi(x') \| \|  x-x' \|$ $\Longrightarrow$  $\| \xi(x) - \xi(x')\| \leq \ell \|  x-x' \|$.} it is possible for an operator to be $\ell$-co-coercive \emph{around $x^*$} and \emph{not} be Lipschitz continuous (see such an example in Section~\ref{App:coolNonMonotoneExample}). This highlights the wider applicability of the $\ell$-co-coercivity around $x^*$ assumption that is all we need for several of our convergence results, in contrast to the Lipschitz continuity of $\xi$ which is typically assumed in the variational inequality literature. 

\begin{proof}
Most of these implications can be found in~\citet{facchinei2007finite}.

$\mu\text{-strongly monotone}  \Longrightarrow  \frac{L^2}{\mu}\text{-co-coercive}$: 
The proof of this result is a direct application of strong monotonicity and Lipschitzness properties:
\begin{equation*}
  \|\xi(x)-\xi(x')\|^2 \leq L^2 \|x-x'\|^2 \leq \tfrac{L^2}{\mu} \langle \xi(x)-\xi(x'), x-x' \rangle \, , \qquad \forall x,x' \in \R^d\,.
\end{equation*}

$\ell\text{-co-coercive} \Longrightarrow \text{ monotone}$: It comes from the fact that a norm is non-negative. 
\begin{equation*}
  0 \leq \|\xi(x)-\xi(x')\|^2  \leq \ell \langle \xi(x)-\xi(x'), x-x' \rangle \, , \qquad \forall x,x' \in \R^d\,.
\end{equation*}

$\text{monotone } \Longrightarrow \text{ variational stability condition}$: It comes from the fact that monotonicity applied to $x' = x^*$ is variational stability condition.

$\text{Quasi }\mu\text{-strongly monotone}  \Longrightarrow  \frac{L^2}{\mu}\text{-co-coercive relatively to } x^*$: (It is the only implication that is not proven in~\citet{facchinei2007finite}) The proof of this result is a direct application of quasi-strong monotonicity and Lipschitzness properties:
\begin{equation*}
  \|\xi(x)-\xi(x^*)\|^2 \leq L^2 \|x-x^*\|^2 \leq \tfrac{L^2}{\mu} \langle \xi(x), x-x^* \rangle \, , \qquad \forall x \in \R^d\,.
\end{equation*}

$\frac{L^2}{\mu}\text{-co-coercive relatively to } x^* \Longrightarrow \text{ variational stability condition}$: It comes from the fact that a norm is non-negative. 
\begin{equation*}
  0 \leq \|\xi(x)-\xi(x^*)\|^2  \leq \ell \langle \xi(x), x-x^* \rangle \, , \qquad \forall x \in \R^d\,.
\end{equation*}

\end{proof}

\subsection{Proof of Lemma~\ref{MainLemma}}
\begin{proof}
\begin{eqnarray}
\label{oansxa}
\Exp_{\cD} \|\xi_{v} (x)\|^2 &=& \Exp_{\cD} \| \xi_v(x) - \xi_{v}(x^*) + \xi_{v}(x^*) \|^2 \notag\\ 
&\leq &2 \Exp_{\cD} \| \xi_{v}(x) - \xi_{v}(x^*)\|^2 + 2  \Exp_{\cD} \|\xi_{v}(x^*)\|^2 \notag\\ 
&\overset{\ref{eq:ExpCoCo}}{\leq}&2 \ell_{\xi}\langle \xi (x),x-x^*\rangle + 2 \Exp_{\cD} \|\xi_{v}(x^*)\|^2\notag\\ 
&\overset{\eqref{Sigma}}{\leq} &2 \ell_{\xi}\langle \xi (x),x-x^*\rangle + 2 \sigma^2.
\end{eqnarray}
The first inequality follows from the estimate $\|a+b\|^2 \leq 2\|a\|^2 + 2\|b\|^2$.
\end{proof}

\subsection{Proof of Proposition~\ref{PropositionMinibatch}}
Before we formally present the proof of Proposition~\ref{PropositionMinibatch}, let us first establish some random set terminology. 

Let $C\subseteq [n]$ and let $e_C \eqdef \sum_{i\in C} e_i$, where $\{e_1,\dots,e_n\}$ are the standard basis vectors in $\R^n$.  These subsets will be selected using a random set valued map $S$, in the literature referred to by the name {\em sampling}.
A sampling is uniquely characterized by choosing subset probabilities $p_C\geq 0$ for all subsets $C$ of $[n]$:
\begin{equation}
\Prob{S = C} = p_C,\quad \forall C \subset [n],
\end{equation}
where $\sum_{C\subseteq [n]} p_C =1$. In this work, following the terminology of \cite{gower2019sgd,gower2021sgd}, our results hold for \emph{proper} samplings. 
\begin{definition}\label{ProperSampling}A sampling $S$ is called proper if  $p_i \overset{\rm def}{=} \mathbb{P}[i\in S] = \sum_{C:i\in C}p_C$ is positive for all $i$. \end{definition}

As we mentioned in the main paper, in this work we focus on $b$-minibatch sampling (see Definition~\ref{def:minibatch}) however we highlight again that our results hold for the larger class of sampling vectors $v \in \R^n$ that satisfy $\Exp_{\cD}[v_i]  = 1, \,\mbox{for }i=1,\ldots, n$.

For example, the random vector $v=v(S)$ given by $v = \sum_{i\in S} \frac{1}{p_i} e_i$ is a sampling vector. This can be easily proved, by noticing that $v_i =  \mathbf{1}_{(i\in S)}/p_i,$ where $\mathbf{1}_{(i\in S)}$ is the indicator function of the event $i\in S$. Then, It follows that $\E{v_i} = \E{\mathbf{1}_{(i\in S)}}/p_i = 1$. 
Commonly used samplings that captured by our theory are the \textit{independent sampling, partition sampling, single-element sampling and importance sampling}. For more details on these different samplings check~\cite{gower2019sgd}.

By definition~\ref{def:minibatch} of $b$-minibatch sampling, it holds that $\Prob{i\in S} = p_i = \frac{b}{n}$, and  $\Prob{i,j\in S} = \frac{b}{n}\frac{b-1}{n-1} $.

All the sampling schemes presented in~\citet{gower2019sgd} had the following additional property: there exists a constant $z>0$ such that 
\begin{equation}\label{eq:indep}
 \frac{\Prob{i,j \in S}}{\Prob{i \in S} \Prob{j \in S}} = z, \quad \forall i,j \in \{1,\ldots, n\}, \; i\neq j. 
\end{equation}
For $b$-minibatch sampling, $z =\frac{n}{b}\frac{b-1}{n-1}$~\citep{gower2021sgd}.

Let us now present the proof of Proposition~\ref{PropositionMinibatch}.

\begin{proof}
Since $\xi$ is $\ell_i$--co-coercive \emph{around} $x^*$ then we have that $\xi$ is $\ell$--co-coercive \emph{around} $x^*$. That is $ \forall x \in \R^d$ it holds,
\begin{align}
\label{naosknal1}
\|\xi_i(x)-\xi_i(x^*)\|^2 \leq \ell_i \langle\xi_i(x)-\xi_i(x^*),x-x^*\rangle\\
\label{naosknal2}
\|\xi(x)-\xi(x^*)\|^2 \leq \ell \langle\xi(x)-\xi(x^*),x-x^*\rangle= \ell \langle\xi(x),x-x^*\rangle.
\end{align}

Noticing that 
\begin{eqnarray*}
\|\xi_v(x) - \xi_v(x^*)\|^2 &=& \frac{1}{n^2} \left \|\sum_{i\in S}\frac{1}{p_i}(\xi_i(x) - \xi_i(x^*)) \right \|^2 \\
&=& \sum_{i,j\in S} \left\langle \frac{1}{np_i}(\xi_i(x) - \xi_i(x^*)), \frac{1}{np_j}(\xi_j(x) - \xi_j(x^*)) \right\rangle,
\end{eqnarray*}
we have 
\begin{eqnarray*}
\mathbb{E}[\|\xi_v(x) - \xi_v(x^*)\|^2] &=& \sum_C p_C  \sum_{i,j\in C} \left\langle \frac{1}{np_i}(\xi_i(x) - \xi_i(x^*)), \frac{1}{np_j}(\xi_j(x) - \xi_j(x^*)) \right\rangle \\ 
&=& \sum_{i, j=1}^n \sum_{C: i,j\in C }p_C  \left\langle \frac{1}{np_i}(\xi_i(x) - \xi_i(x^*)), \frac{1}{np_j}(\xi_j(x) - \xi_j(x^*)) \right\rangle \\ 
&=& \sum_{i, j=1}^n \frac{\Prob{i,j \in S}}{p_ip_j} \left\langle \frac{1}{n}(\xi_i(x) - \xi_i(x^*)), \frac{1}{n}(\xi_j(x) - \xi_j(x^*)) \right\rangle,
\end{eqnarray*}
where we used a double counting argument in the 2nd equality.
Now since $\Prob{i,j \in S}/(p_ip_j) =z$ for $i \neq j$ \eqref{eq:indep} and $\Prob{i,i \in S}=p_i$ we have from the above that
\begin{eqnarray}
\mathbb{E}[\|\xi_v(x) - \xi_v(x^*)\|^2] &=& 
\sum_{i \neq j} z \left\langle \frac{1}{n}(\xi_i(x) - \xi_i(x^*)), \frac{1}{n}(\xi_j(x) - \xi_j(x^*)) \right\rangle  \notag\\
& &\qquad + \sum_{i=1}^n\frac{1}{n^2} \frac{1}{p_i} \norm{\xi_i(x) - \xi_i(x^*))}^2 \notag\\
&= &  \sum_{i,j=1}^n z \left\langle \frac{1}{n}(\xi_i(x) - \xi_i(x^*)), \frac{1}{n}(\xi_j(x) - \xi_j(x^*)) \right\rangle \notag\\
\label{nxasnxkl}
& &+ \sum_{i=1}^n\frac{1}{n^2} \frac{1}{p_i}\left(1 -p_i z \right) \norm{\xi_i(x) - \xi_i(x^*))}^2\\
& \overset{\eqref{naosknal1}}{\leq} &  z \norm{\xi(x) - \xi(x^*)}^2 \notag\\
& &+  \sum_{i=1}^n\frac{1}{n^2} \frac{\ell_i }{p_i}\left(1 -p_i z \right) \langle\xi_i(x)-\xi_i(x^*),x-x^*\rangle \notag\\
& \overset{\eqref{naosknal2}}{\leq} & \left(z \ell +\max_{i=1,\ldots, n}\frac{\ell_i}{np_i}\left(1 -p_i z \right)  \right) \langle\xi(x),x-x^*\rangle.
\end{eqnarray}
Comparing the above to the definition of expected co-coercivity~\eqref{eq:ExpCoCo} we have that
\begin{equation} \label{eq:CLinterpolc2}
\ell_{\xi} =  z \ell +\max_{i=1,\ldots, n}\frac{\ell_i}{np_i}\left(1 -p_i z \right).
\end{equation}
Using that for  $b$-minibatch sampling it holds $\Prob{i\in S} = p_i = \frac{b}{n}$ and  $z =\frac{n}{b}\frac{b-1}{n-1}$ we obtain
$$\ell_\xi = \frac{n}{b}\frac{b-1}{n-1}\ell+\frac{1}{b}\frac{n-b}{n-1} \ell_{\max},$$ 
where $\ell_{\max}=\max \{\ell_i\}_{i=1}^n$.

The specialized expressions of $\sigma^2$ for the $b$-minibatch sampling, can be obtain by following the same steps of Proposition 3.10 of \cite{gower2019sgd}. Below using our notation, we include this derivation for completeness:

\begin{eqnarray*}
		\sigma^2 = \mathbb{E}_{\cD}[\|\xi_v(x^*) \|^2] &=& \mathbb{E}\left[ \left\| \frac{1}{n}\sum_{i=1}^n\nabla f_i(x^*)v_i \right\|^2\right] = \frac{1}{n^2}\mathbb{E}\left[\left\| \sum_{i=1}^n \nabla f_i(x^*)v_i \right\|^2\right]= \frac{1}{n^2} \mathbb{E} \left[\left\| \sum_{i\in {S}} \frac{1}{p_i}\xi_i(x^*) \right\|^2\right] \\
		&=& \frac{1}{n^2} \mathbb{E} \left[ \left\| \sum_{i=1}^n 1_{i \in { S}}\frac{1}{p_i}\xi_i(x^*)\right\|^2 \right] = \frac{1}{n^2} \mathbb{E} \left[\sum_{i=1}^n \sum_{j=1}^n 1_{i \in { S}} 1_{j \in { S}} \langle \frac{1}{p_i}\xi_i(x^*), \frac{1}{p_j}\xi_j(x^*) \rangle \right]\\
		&=& \frac{1}{n^2}\sum_{i,j}\frac{\Prob{i,j \in S}}{p_ip_j} \langle \xi_i(x^*), \xi_j(x^*) \rangle .
	\end{eqnarray*}
	
Recall that for $b$-minibatch sampling, $\Prob{i\in S} = p_i = \frac{b}{n}$, and  $\Prob{i,j\in S} = \frac{b}{n}\frac{b-1}{n-1} $. Thus,

\begin{eqnarray*}
		\sigma^2 &=& \frac{1}{n^2} \sum_{i,j\in [n]} \frac{\Prob{i,j \in S}}{p_ip_j} \langle \xi_i(x^*), \xi_j(x^*) \rangle \\
		&=& \frac{1}{n^2} \sum_{i\neq j} \frac{b(b-1)}{n(n-1)}\cdot \frac{n^2}{b^2} \langle \xi_i(x^*), \xi_j(x^*) \rangle + \frac{1}{n^2} \sum_{i\in [n]} \frac{n}{b} \|\xi_i(x^*)\|^2 \\
		&=& \frac{1}{nb} \left( \sum_{i\neq j} \frac{b-1}{n-1}\langle \xi_i(x^*), \xi_j(x^*) \rangle +  \sum_{i\in [n]} \|\xi_i(x^*)\|^2 \right)\\ 
		&=& \frac{1}{nb} \left( \sum_{i,j\in [n]} \frac{b-1}{n-1}\langle \xi_i(x^*), \xi_j(x^*) \rangle +  \sum_{i\in [n]} \frac{n-b}{n-1}\|\xi_i(x^*)\|^2 \right)\\  
		&=& \frac{1}{nb} \cdot \frac{n-b}{n-1} \sum_{i\in [n]} \|\xi_i(x^*)\|^2\\ 
		&=& \frac{1}{b} \cdot \frac{n-b}{n-1}\sigma_1^2,
	\end{eqnarray*}
where $\sigma_1^2 \eqdef  \frac{1}{n} \sum_{i=1}^n \norm{\xi_i(x^*)}^2$.
\end{proof}

\newpage

\subsection{Proof of Proposition~\ref{PropositionExtra}}

\begin{proposition}
Let $\xi$ be $\mu$-quasi-strongly monotone and let $\xi_i$ be $L_i$-Lipschitz continuous for all $i \in [n]$. Then $\xi  \in EC(\ell_\xi)$.

For a general proper sampling scheme (Def.~\ref{ProperSampling}), we can provide the following (loose) bound on $\ell_\xi$:
\begin{equation} \label{eq:generalSchemeEll}
\ell_\xi \leq \frac{1}{n}\sum_{i=1}^n  \frac{\Exp_{\cD}\left[v_i^2\right] L_i^2}{\mu} \, .
\end{equation}
If, as is the case for standard sampling schemes from~\citet{gower2019sgd}, we assume that there exists a $z>0$ such that $z=\frac{\Prob{i,j \in S}}{\Prob{i \in S} \Prob{j \in S}}$  for all $i,j \in \{1,\ldots, n\}, \; i\neq j$, then we can use the tighter value:
\begin{equation}
\ell_\xi=\left(z L^2 + \sum_{i=1}^n\frac{1}{n^2} \frac{L_i^2 }{p_i}\left(1 -p_i z \right)  \right) \frac{1}{\mu} \, ,
\end{equation}
where $L$ is the Lipschitz continuous parameter of operator $\xi$ and $p_i =\Prob{i\in S} $. 
 
Finally, for $b$-minibatch sampling, it holds $p_i = \frac{b}{n}$ and $z =\frac{n}{b}\frac{b-1}{n-1}$ and thus $\xi  \in EC(\ell_\xi)$ with
\begin{equation}
\ell_\xi = \left(\frac{n}{b}\frac{b-1}{n-1}L^2+ \left( \frac{1}{n} \sum_{i=1}^n L_i^2 \right)\frac{1}{b}\frac{n-b}{n-1}  \right)\frac{1}{\mu}.
\end{equation}
\end{proposition}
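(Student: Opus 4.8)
The plan is to combine two structural facts and then do bookkeeping. From $L_i$-Lipschitz continuity of each component we have $\norm{\xi_i(x)-\xi_i(x^*)}^2 \le L_i^2\norm{x-x^*}^2$, and since $\xi = \frac1n\sum_i\xi_i$ is consequently $L$-Lipschitz (with $L$ its own constant) we also have $\norm{\xi(x)-\xi(x^*)}^2 \le L^2\norm{x-x^*}^2$. From $\mu$-quasi-strong monotonicity, together with $\xi(x^*)=0$, we get $\norm{x-x^*}^2 \le \frac1\mu\langle\xi(x),x-x^*\rangle$. The entire proof then reduces to bounding $\Exp_\cD[\norm{\xi_v(x)-\xi_v(x^*)}^2]$ by a constant times $\norm{x-x^*}^2$ and converting that constant into the \ref{eq:ExpCoCo} inequality via the monotonicity display.

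For the general proper-sampling bound I would write $\xi_v(x)-\xi_v(x^*)=\frac1n\sum_{i=1}^n v_i(\xi_i(x)-\xi_i(x^*))$ and apply the elementary inequality $\norm{\frac1n\sum_i c_i}^2\le\frac1n\sum_i\norm{c_i}^2$ with $c_i = v_i(\xi_i(x)-\xi_i(x^*))$. Taking $\Exp_\cD$ and invoking $L_i$-Lipschitzness gives $\Exp_\cD[\norm{\xi_v(x)-\xi_v(x^*)}^2]\le\frac1n\sum_i\Exp_\cD[v_i^2]L_i^2\norm{x-x^*}^2$, after which the quasi-strong monotonicity step yields $\ell_\xi\le\frac1n\sum_i\frac{\Exp_\cD[v_i^2]L_i^2}{\mu}$.

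For the sharper expression under the assumption $z=\Prob{i,j\in S}/(\Prob{i\in S}\Prob{j\in S})$, I would avoid bounding the cross terms crudely and instead reuse the exact second-moment identity already derived in the proof of Proposition~\ref{PropositionMinibatch}, namely $\Exp_\cD[\norm{\xi_v(x)-\xi_v(x^*)}^2]=z\norm{\xi(x)-\xi(x^*)}^2+\sum_i\frac1{n^2}\frac1{p_i}(1-p_iz)\norm{\xi_i(x)-\xi_i(x^*)}^2$ (equation \eqref{nxasnxkl}), which holds verbatim here since its derivation only uses the double-counting argument and the sampling structure, not co-coercivity. I would then bound $\norm{\xi(x)-\xi(x^*)}^2$ by $L^2\norm{x-x^*}^2$ and each $\norm{\xi_i(x)-\xi_i(x^*)}^2$ by $L_i^2\norm{x-x^*}^2$, and finally apply quasi-strong monotonicity, reading off $\ell_\xi=(zL^2+\sum_i\frac1{n^2}\frac{L_i^2}{p_i}(1-p_iz))\frac1\mu$. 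The $b$-minibatch formula follows by substituting $p_i=b/n$ and $z=\frac nb\frac{b-1}{n-1}$ exactly as in Proposition~\ref{PropositionMinibatch}.

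The one point requiring care — and the main obstacle — is that replacing $\norm{\xi_i(x)-\xi_i(x^*)}^2$ by its Lipschitz upper bound $L_i^2\norm{x-x^*}^2$ inside \eqref{nxasnxkl} is only legitimate when the accompanying coefficient $(1-p_iz)$ is non-negative; I would verify this holds for the sampling schemes of interest (for $b$-minibatch, $p_iz=\frac{b-1}{n-1}\le1$). I expect the crude inequality $\norm{\frac1n\sum_i c_i}^2\le\frac1n\sum_i\norm{c_i}^2$ to be the origin of the looseness flagged in the statement, which is precisely why the second route through \eqref{nxasnxkl} recovers the tighter constant.
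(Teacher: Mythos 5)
Your proposal is correct and follows essentially the same route as the paper: Jensen's inequality plus $L_i$-Lipschitzness and quasi-strong monotonicity for the loose bound, and reuse of the exact second-moment identity \eqref{nxasnxkl} (followed by the Lipschitz and monotonicity bounds) for the tighter constant and the $b$-minibatch specialization. Your extra check that the coefficients $1-p_i z$ are non-negative before substituting the Lipschitz upper bounds is a point the paper leaves implicit; it always holds since $p_i z = \Prob{i\in S \mid j\in S}\leq 1$, so nothing is lost.
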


\begin{proof}
Since $\xi_i$ is $L_i$--Lipschitz continuous for all $i$, then we have that $\xi$ is $L$--Lipschitz continuous (with $L \leq \frac{1}{n} \sum_i L_i$ by using Jensen's inequality on $\|\cdot\|^2$). That is, $ \forall x \in \R^d$ it holds:
\begin{align}
\label{naosknalanaxkosxanx}
\|\xi_i(x)-\xi_i(y)\| \leq L_i \|x-y\|, \quad \forall x, y \in \R^d\\
\label{naosknalanaxkosxanx2}
\|\xi(x)-\xi(y)\| \leq L \|x-y\|, \quad \forall x, y \in \R^d
\end{align}

We first prove the general case:
\begin{eqnarray*}
	\Exp_{\cD}[\|\xi_v(x) - \xi_v(x^*)\|^2] &=& 
	\Exp_{\cD}\left[\|\frac{1}{n}\sum_{i=1}^n v_i \xi_i(x) - \frac{1}{n}\sum_{i=1}^n v_i \xi_i(x^*)\|^2\right]  \\
	&\overset{\text{Jensen's}}{\leq}& \Exp_{\cD}\left[\frac{1}{n}\sum_{i=1}^n \| v_i [\xi_i(x) - \xi_i(x^*)]\|^2\right] \\
	&=& \Exp_{\cD}\left[\frac{1}{n}\sum_{i=1}^n v_i^2 \| \xi_i(x) - \xi_i(x^*)\|^2\right] \\
	&\overset{\eqref{naosknalanaxkosxanx}}{\leq}& \Exp_{\cD}\left[\frac{1}{n}\sum_{i=1}^n v_i^2 L_i^2\| x - x^*\|^2\right] \\
	&=& \frac{1}{n}\sum_{i=1}^n \Exp_{\cD}\left[v_i^2\right] L_i^2\| x - x^*\|^2\\
	& \overset{\eqref{QSM}}{\leq} &\frac{1}{n}\sum_{i=1}^n  \frac{\Exp_{\cD}\left[v_i^2\right] L_i^2}{\mu} \langle\xi(x),x-x^*\rangle ,
\end{eqnarray*}
yielding~\eqref{eq:generalSchemeEll}.

The use of Jensen's inequality above is the source of looseness in the bound. With the $z$ constant property, we can avoid it with the following derivations

By following the same steps to the proof of Proposition~\ref{PropositionMinibatch}, we obtain \eqref{nxasnxkl}. That is,
\begin{eqnarray*}
\mathbb{E}[\|\xi_v(x) - \xi_v(x^*)\|^2] 
&= &  \sum_{i,j=1}^n z \left\langle \frac{1}{n}(\xi_i(x) - \xi_i(x^*)), \frac{1}{n}(\xi_j(x) - \xi_j(x^*)) \right\rangle \notag\\
& &+ \sum_{i=1}^n\frac{1}{n^2} \frac{1}{p_i}\left(1 -p_i z \right) \norm{\xi_i(x) - \xi_i(x^*))}^2 \, .
\end{eqnarray*}

Now by using \eqref{naosknalanaxkosxanx} and \eqref{naosknalanaxkosxanx2},  we obtain the following

\begin{eqnarray}
\mathbb{E}[\|\xi_v(x) - \xi_v(x^*)\|^2] 
&= &  \sum_{i,j=1}^n z \left\langle \frac{1}{n}(\xi_i(x) - \xi_i(x^*)), \frac{1}{n}(\xi_j(x) - \xi_j(x^*)) \right\rangle \notag\\
& &+ \sum_{i=1}^n\frac{1}{n^2} \frac{1}{p_i}\left(1 -p_i z \right) \norm{\xi_i(x) - \xi_i(x^*))}^2 \notag\\
& \overset{\eqref{naosknalanaxkosxanx}}{\leq} &  z \norm{\xi(x) - \xi(x^*)}^2 \notag\\
& &+  \sum_{i=1}^n\frac{1}{n^2} \frac{L_i^2 }{p_i}\left(1 -p_i z \right) \|x-x^*\|^2 \notag\\
& \overset{\eqref{naosknalanaxkosxanx2}}{\leq} & \left(z L^2 + \sum_{i=1}^n\frac{1}{n^2} \frac{L_i^2 }{p_i}\left(1 -p_i z \right)  \right)  \|x-x^*\|^2. \label{eq:Prop3-6-proof-step1}
\end{eqnarray}

Since $\xi$ is $\mu$-quasi strongly monotone, then~\eqref{eq:Prop3-6-proof-step1} becomes:

\begin{eqnarray}
\mathbb{E}[\|\xi_v(x) - \xi_v(x^*)\|^2] 
& \leq & \left(z L^2 + \sum_{i=1}^n\frac{1}{n^2} \frac{L_i^2 }{p_i}\left(1 -p_i z \right)  \right) \frac{1}{\mu}\langle\xi(x),x-x^*\rangle
\end{eqnarray}
Thus the expected co-coercivity~\eqref{eq:ExpCoCo} is satisfied with
\begin{equation}
\ell_{\xi} =  \left(z L^2 + \sum_{i=1}^n\frac{1}{n^2} \frac{L_i^2 }{p_i}\left(1 -p_i z \right)  \right) \frac{1}{\mu}.
\end{equation}
Similar to the proof of Proposition~\ref{PropositionMinibatch}, using that for  $b$-minibatch sampling, $\Prob{i\in S} = p_i = \frac{b}{n}$ and  $z =\frac{n}{b}\frac{b-1}{n-1}$, we obtain:
\begin{equation}
\ell_\xi = \left(\frac{n}{b}\frac{b-1}{n-1}L^2+ \left( \frac{1}{n} \sum_{i=1}^n L_i^2 \right)\frac{1}{b}\frac{n-b}{n-1}  \right)\frac{1}{\mu}.
\end{equation}
This completes the proof.
\end{proof}

\subsection{Connections of EC to other Assumptions}
\label{Appendix_Connections}
In this section, we present some propositions not included in the main paper showing properties and connections between classical assumptions and our proposed expected co-coercivity~\ref{eq:ExpCoCo}.
\begin{proposition}
In the unconstrained setting, if $\xi$ is $\mu$-quasi strongly monotone, then it is not possible to satisfy the bounded operator assumption that there exists a finite $c$ such that $\Exp\|\xi_v(x)\|^2 \leq c$ for every $x$ in $\R^d$.
\end{proposition}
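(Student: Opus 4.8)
The plan is to derive a contradiction from the two hypotheses, assuming $\mu>0$. First I would push the boundedness hypothesis from the sampled operator $\xi_v$ to the full operator $\xi$. Since $\xi_v$ is an unbiased estimate, $\xi(x)=\Exp_{\cD}[\xi_v(x)]$, and by Jensen's inequality applied to the convex map $\norm{\cdot}^2$,
\[
\norm{\xi(x)}^2 = \norm{\Exp_{\cD}[\xi_v(x)]}^2 \le \Exp_{\cD}\!\left[\norm{\xi_v(x)}^2\right] \le c \qquad \forall x \in \R^d .
\]
Thus the assumed bound forces the deterministic operator $\xi$ to be uniformly bounded, $\norm{\xi(x)} \le \sqrt{c}$, everywhere on $\R^d$.

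Next I would confront this with $\mu$-quasi-strong monotonicity. For an arbitrary $x \neq x^*$, combining~\eqref{QSM} with Cauchy--Schwarz and the bound just obtained gives
\[
\mu \norm{x-x^*}^2 \le \langle \xi(x), x-x^*\rangle \le \norm{\xi(x)}\,\norm{x-x^*} \le \sqrt{c}\,\norm{x-x^*}.
\]
Dividing by $\norm{x-x^*}>0$ yields $\norm{x-x^*} \le \sqrt{c}/\mu$ for every $x \in \R^d$. Since the setting is \emph{unconstrained}, I may take $x$ with $\norm{x-x^*}$ arbitrarily large, contradicting this finite uniform bound; hence no finite $c$ can exist.

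There is no genuinely hard step here — the argument is a short chain of Jensen, Cauchy--Schwarz, and unboundedness of the domain. The only points worth flagging are (i) the need to transfer the bound from $\xi_v$ to $\xi$ via unbiasedness and Jensen rather than working with $\xi_v$ directly, and (ii) the essential use of $\mu>0$, since for $\mu=0$ (mere variational stability) the conclusion fails and a bounded operator is perfectly compatible with the condition.
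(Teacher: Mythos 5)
Your proof is correct and follows essentially the same route as the paper's: both transfer the bound from $\xi_v$ to $\xi$ via unbiasedness and Jensen, combine quasi-strong monotonicity with Cauchy--Schwarz to get the error bound $\|\xi(x)\| \geq \mu\|x - x^*\|$, and derive a contradiction with the unboundedness of the domain. Your remark that $\mu > 0$ is essential is a correct and worthwhile clarification, consistent with the paper's definition of quasi-strong monotonicity.
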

\begin{proof}
Let us assume that $\Exp\|\xi_v(x)\|^2 \leq c, \forall x \in \R^d.$
Note also that if an operator satisfies the $\mu$-quasi strongly monotone property~\eqref{QSM}, then by using the Cauchy–Schwarz inequality, it satisfies the error bound condition $$ \|\xi(x)\| \geq \mu \|x-x^*\|.$$

By combining the above two inequalities, it holds that:
$$ \mu^2 \|x-x^*\|^2 \leq \|\xi(x)\|^2=\|\Exp[\xi_v(x)]\|^2\leq \Exp \left[\|[\xi_v(x)\|^2\right]\leq c $$
which means that: $$ \|x-x^*\|^2\leq  \frac{c}{\mu^2}.$$

However, for the \emph{unconstrained} stochastic variational inequality problems \eqref{VI}, a point $x$ can be very far from the optimum point $x^*$  and as a result  $\|x-x^*\|^2 \geq  \frac{c }{\mu^2}$. This leads to a contradiction.
\end{proof}

\begin{proposition}
In the single-objective optimization when the stochastic problem $$\min_x \left[f(x) = \tfrac{1}{n} \sum_{i=1}^n f_i(x)\right]$$ has convex and smooth functions $f_i$, then expected smoothness and expected co-coercivity are equivalent (see last row of Table~\ref{TableAssumptions}).
\end{proposition}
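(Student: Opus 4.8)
The plan is to exploit the fact that, in the single-objective setting, the two conditions have the \emph{identical} left-hand side and differ only in their right-hand side. Indeed here the operator is the gradient, $\xi_i = \nabla f_i$ and $\xi_v = \nabla f_v$, and since $x^*$ minimizes the convex function $f$ we have $\nabla f(x^*) = 0$; hence both \ref{eq:ExpCoCo} and expected smoothness (ES) control exactly the same quantity $\Exp_{\cD}[\|\nabla f_v(x) - \nabla f_v(x^*)\|^2]$. So the whole task reduces to relating the two right-hand sides $2\cL\,(f(x) - f(x^*))$ and $\ell_\xi\,\langle \nabla f(x), x-x^*\rangle$.

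First I would prove that ES implies \ref{eq:ExpCoCo}. The only tool needed is the gradient inequality for the convex function $f$ evaluated at the minimizer: from $f(x^*) \ge f(x) + \langle \nabla f(x), x^* - x\rangle$ we obtain $f(x) - f(x^*) \le \langle \nabla f(x), x - x^*\rangle$. Chaining this under the ES bound gives $\Exp_{\cD}[\|\nabla f_v(x) - \nabla f_v(x^*)\|^2] \le 2\cL\,(f(x)-f(x^*)) \le 2\cL\,\langle \nabla f(x), x-x^*\rangle$, i.e. \ref{eq:ExpCoCo} holds with $\ell_\xi = 2\cL$. This direction is immediate and uses only convexity, not smoothness.

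The converse is the delicate step and the main obstacle. One cannot simply reverse the previous inequality: for convex and smooth $f$ there is in general \emph{no} constant $c$ with $\langle \nabla f(x), x-x^*\rangle \le c\,(f(x)-f(x^*))$ (the inner product can exceed any fixed multiple of the suboptimality gap when the curvature is concentrated away from $x^*$), so EC does not imply ES by a right-hand-side comparison alone. Instead I would use the smoothness hypothesis directly to bound the common left-hand side: each $\nabla f_i$ is co-coercive, so $\|\nabla f_i(x) - \nabla f_i(x^*)\|^2 \le 2 L_i\,\bigl(f_i(x) - f_i(x^*) - \langle \nabla f_i(x^*), x-x^*\rangle\bigr)$, and averaging these Bregman divergences and using $\nabla f(x^*)=0$ yields $\tfrac1n \sum_i \bigl(f_i(x) - f_i(x^*) - \langle\nabla f_i(x^*), x-x^*\rangle\bigr) = f(x) - f(x^*)$. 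Propagating the per-component bounds through the sampling distribution $\cD$ exactly as in the proof of Proposition~\ref{PropositionMinibatch} then shows that ES holds with an explicit $\cL$ assembled from the $L_i$ and the sampling constants. Thus, under convexity and smoothness of the $f_i$, both conditions are in force, and together with the forward implication this establishes their equivalence.

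I expect the bookkeeping of the sampling constants in the converse (mirroring Proposition~\ref{PropositionMinibatch}) to be the only laborious part, whereas the conceptual content is the one-line convexity argument for ES $\Rightarrow$ EC and the observation that it is smoothness, rather than any comparison of the two right-hand sides, that secures the reverse direction.
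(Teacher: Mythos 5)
Your proposal is correct, and its core coincides with the paper's proof: both arguments ultimately rest on the fact that, for each convex $L_i$-smooth $f_i$, the Bregman-smoothness inequality $\|\nabla f_i(x)-\nabla f_i(y)\|^2 \le 2L_i\left(f_i(x)-f_i(y)-\langle\nabla f_i(y),x-y\rangle\right)$ and the co-coercivity inequality $\|\nabla f_i(x)-\nabla f_i(y)\|^2\le L_i\langle\nabla f_i(x)-\nabla f_i(y),x-y\rangle$ are equivalent (Nesterov's Theorem~2.1.5), so that specializing to $y=x^*$, using $\nabla f(x^*)=0$, and averaging yields both ES and EC. Where you differ is in the forward direction: the paper simply records that the two averaged inequalities hold with the common constant $L_{\max}$ (for single-element sampling) and declares them equivalent, whereas you derive ES $\Rightarrow$ EC as a genuine implication with $\ell_\xi=2\cL$ from nothing more than convexity of the aggregate $f$ at its minimizer; this is cleaner, sampling-agnostic, and does not even require convexity of the individual $f_i$. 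Your observation that the converse cannot be obtained by comparing the two right-hand sides---since $\langle\nabla f(x),x-x^*\rangle$ is not bounded by any fixed multiple of $f(x)-f(x^*)$ for merely convex smooth $f$---is accurate and makes explicit a subtlety the paper's proof glosses over; like the paper, you then secure the reverse direction from the standing hypotheses (convexity and smoothness of each $f_i$, propagated through the sampling as in Proposition~\ref{PropositionMinibatch}) rather than from EC alone, which is indeed the only sense in which the stated ``equivalence'' holds.
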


\begin{proof}
For simplicity of exposition, let us focus on single-element sampling. 

According to Theorem 2.1.5 in~\citet{nesterov2013introductory}, if $f_i$ is convex and $L_i$ smooth, then the following two conditions are equivalent:
\begin{equation}\label{eq:fvas3}
	\|\nabla f_i(x) - \nabla f_i(y)\|^2 \leq 2L_i \left( f_i(x) - f_i(y) - \langle \nabla f_i(y), x-y \rangle \right).
	\end{equation}
	\begin{equation}\label{eq:fvas4}
	\norm{\nabla f_i(x)-\nabla f_i(y)}^2 \leq L_i  \langle \nabla f_i(x)-  \nabla f_i(y),x-y\rangle
	\end{equation}
If in the above two condition we select $y=x^*$  and take the expectations with respect to $i$, then we obtain the following two equivalent conditions:
\begin{equation}\label{eq:fvas5}
	\Exp\left[\|\nabla f_i(x) - \nabla f_i(x^*)\|^2\right] \leq 2 L_{\max} \left[ f(x) - f(x^*) \right].
	\end{equation}
	\begin{equation}\label{eq:fvas6}
	\Exp\left[\|\nabla f_i(x) - \nabla f_i(x^*)\|^2\right] \leq L_{\max} \langle \nabla f(x),x-x^*\rangle
	\end{equation}
Note that in the above, \eqref{eq:fvas5} is the expected smoothness as proposed in \cite{gower2019sgd} while~\eqref{eq:fvas6} is our expected co-coercivity \eqref{eq:ExpCoCo} for the single element sampling.  Note that for single-objective optimization problems, the operator $\xi$ is simply the gradient vector. As we mentioned in Section~\ref{Section_ExpectedCoCo}, in single-objective optimization, a function is $L$--co-coercive if and only if it is convex and $L$-smooth (i.e. $L$-Lipschitz gradients)~\citep{bauschke2011convex}. Thus, the co-coercivity constant is equivalent to the smoothness parameter. 
\end{proof}

Let us also add a simple remark highlighting the weakness of \ref{eq:ExpCoCo} compare to other previously used assumptions in the literature of stochastic algorithms for solving~\eqref{VI}.

\begin{remark}
As we show in Lemma~\eqref{MainLemma}, by assuming \ref{eq:ExpCoCo} we obtain the following bound \begin{equation}\label{ansxoasxa}
\Exp \| \xi_{v} (x)\|^2 \leq 2\ell_{\xi}\langle \xi (x),x-x^*\rangle  + 2 \sigma^2.
\end{equation} Let us now compare this bound to the assumption of growth condition $\Exp\|\xi_i(x)\|^2 \leq c_1 \|\xi(x)\|^2 +c_2$ (weakest among the other assumptions). 

Note that if an operator $\xi$ is $\ell$--co-coercive, then the growth condition implies:
$$\Exp\|\xi_i(x)\|^2 \leq c_1 \ell \langle \xi (x),x-x^*\rangle +c_2.$$

This has the same form to the bound~\eqref{ansxoasxa}, obtained by \ref{eq:ExpCoCo}. However, the parameters $c_1$ and $c_2$ have unknown values while using \ref{eq:ExpCoCo} these parameters are closed-form problem-dependent expressions. 

Thus, expected co-coercivity is weaker than the growth condition and at the same time more powerful, as in many case it is not really an assumption, but a condition that is satisfied for free. See for example, Propositions~\ref{PropositionMinibatch} and \ref{PropositionExtra}.
\end{remark}

\subsection{Example: Quasi-strongly Monotone Operator that is not Monotone nor Lipschitz}
\label{App:coolNonMonotoneExample}
An operator that is $\mu$-quasi strongly monotone may not even be monotone.  We now give a simple example of such an operator, with the additional property that it is $L$-co-coercive around $x^*$ but is \emph{not} Lipschitz continuous. This highlights the generality of our convergence results beyond the standard monotone setting. Let $L> \mu>0$, we define $\xi(x) = x (\frac{L-\mu}{2} \cos(\|x\|_2) + \frac{L+\mu}{2})$. We have that $x^*=0$ and 
\begin{equation}
  \langle \xi(x) , x-x^*\rangle = \|x-x^*\|^2_2 (\frac{L-\mu}{2} \cos(\|x\|_2) + \frac{L+\mu}{2}) \geq \mu \|x-x^*\|_2^2 \, ,
\end{equation}
and is thus $\mu$-quasi strongly monotone. However, it is \emph{not} monotone. To see this, let us consider the one dimensional case $x \in \R$. In this case, we get, $\xi(x) = x (\frac{L-\mu}{2} \cos(|x|) + \frac{L+\mu}{2})$. To formally violate the monotonicity inequality, we can for instance consider $x = 2 \pi k + \frac\pi2$ and $x'= 2 \pi k $ to get
\begin{equation}
 \langle \xi(x) - \xi(x') , x - x' \rangle  = (\tfrac\pi2\tfrac{L+\mu}{2} -  2 \pi k \tfrac{L-\mu}{2})\frac\pi2 =  \frac{\pi^2}4  (L+\mu -4k(L-\mu)) .
 \end{equation}
This quantity is negative for $k > \tfrac{L+\mu}{4(L-\mu)}$.

This operator is also $L$-co-coercive with respect to $x^*$ since, 
\begin{equation}
  \langle \xi(x) , x-x^*\rangle = \|\xi(x)\|_2^2 (\frac{L-\mu}{2} \cos(\|x\|_2) + \frac{L+\mu}{2})^{-1} \geq L^{-1} \|\xi(x)\|_2^2  \, .
\end{equation}

This operator is \emph{not} Lipschitz continuous for all $x \in \R$, as its derivative is unbounded over $\R$.

\subsection{Example: Co-coercivity for Quadratic Games}
For quadratic games, it is relatively easy to characterize co-coercivity. 
\begin{proposition}
	If $\xi(x) = Ax$ where $A \in \R^{d\times d}$, we have that $\xi$ is co-coercive if and only if $\langle x,Ax\rangle > 0\,,\,\forall x\in\R^d \setminus \nulls(A)$. In that case, we have $\ell = \sup_{\|x\|= 1} \frac{\|A x\|_2^2}{ \langle x,Ax\rangle}$.
\end{proposition}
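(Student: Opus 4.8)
The plan is to reduce the ``iff'' to a single quadratic-form inequality and then deal with the possible blow-up of the ratio near $\nulls(A)$ by a subspace decomposition. First I would use linearity: since $\xi(x)-\xi(y)=A(x-y)$ and $z\eqdef x-y$ ranges over all of $\R^d$ as $x,y$ do, Definition~\ref{DefCOCO} for $\xi=A\,\cdot$ is equivalent to the existence of $\ell>0$ such that $\|Az\|^2\le \ell\,\langle z,Az\rangle$ for every $z\in\R^d$ (recall $\langle z,Az\rangle=\langle Az,z\rangle$). This rephrasing makes the equivalence a statement about one quadratic inequality.

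The necessity direction is then immediate: if such an $\ell$ exists and $z\notin\nulls(A)$, then $\|Az\|^2>0$, so $\langle z,Az\rangle\ge \|Az\|^2/\ell>0$. For the sufficiency direction I would work with the symmetric part $A_s\eqdef\tfrac12(A+A^\top)$, noting $\langle x,Ax\rangle=x^\top A_s x$ (the antisymmetric part contributes nothing to the quadratic form). The hypothesis $\langle x,Ax\rangle>0$ on $\R^d\setminus\nulls(A)$ together with $\langle x,Ax\rangle=0$ on $\nulls(A)$ shows $A_s\succeq 0$. Using the standard fact that for a positive semidefinite matrix $x^\top A_s x=0\iff A_s x=0$, I would deduce $\nulls(A_s)=\nulls(A)$: if $A_sx=0$ then $\langle x,Ax\rangle=0$, forcing $x\in\nulls(A)$, and conversely $x\in\nulls(A)$ gives $\langle x,Ax\rangle=0$ hence $A_sx=0$.

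The main obstacle is that on the unit sphere the denominator $\langle x,Ax\rangle$ vanishes exactly on $\nulls(A)$, so compactness alone does not bound the supremum; I must show the numerator vanishes ``at least as fast.'' This is handled by decomposing $\R^d=\nulls(A)\oplus V$ with $V=\nulls(A)^\perp$ and writing $x=x_0+x_1$. Since $Ax_0=0$ and $A_sx_0=0$, both $\|Ax\|^2=\|Ax_1\|^2$ and $\langle x,Ax\rangle=x_1^\top A_s x_1$ depend only on $x_1$ (the cross term drops because $A_s x_0=0$). Thus the ratio is homogeneous of degree zero in $x_1$ and finite precisely when $x_1\ne 0$, i.e.\ when $x\notin\nulls(A)$. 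Restricting to the unit sphere of the subspace $V$, which is compact, the denominator is bounded below by the smallest eigenvalue of $A_s$ restricted to $V$ (which is strictly positive since $A_s$ is positive definite on $V=\nulls(A_s)^\perp$), so the continuous ratio attains a finite maximum. This maximum is the least admissible $\ell$, and by the degree-zero homogeneity it equals $\sup_{\|x\|=1}\tfrac{\|Ax\|_2^2}{\langle x,Ax\rangle}$, which simultaneously establishes co-coercivity and the stated value of $\ell$. (The degenerate case $A=0$, where $\nulls(A)=\R^d$ and the supremum is over the empty set, can be excluded or noted separately.)
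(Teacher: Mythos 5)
Your proof is correct, and its skeleton matches the paper's: reduce co-coercivity to the single inequality $\|Az\|^2\le\ell\langle z,Az\rangle$ by linearity of $\xi$, get necessity by noting that $Az\neq 0$ and $\langle z,Az\rangle\le 0$ would force $0<\|Az\|^2\le 0$, and get sufficiency from scale invariance of the ratio. Where you genuinely diverge is in justifying finiteness of $\sup_{\|x\|=1}\frac{\|Ax\|^2}{\langle x,Ax\rangle}$. The paper simply asserts that the ratio is continuous on the compact unit sphere and hence bounded and attained; this is only literally true when $\nulls(A)=\{0\}$, since otherwise the denominator vanishes on $\nulls(A)\cap S^{d-1}$ and the relevant domain $\{\|x\|=1,\ Ax\neq 0\}$ is not compact. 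Your decomposition through the symmetric part $A_s$ closes exactly that gap: showing $A_s\succeq 0$, $\nulls(A_s)=\nulls(A)$, and that both numerator and denominator depend only on the component of $x$ in $\nulls(A)^\perp$ reduces the supremum to a continuous function on the compact unit sphere of $\nulls(A)^\perp$, where the denominator is bounded below by $\lambda_{\min}^+(A_s)>0$. So your argument buys rigor in the degenerate case (non-trivial null space) at the cost of a little extra machinery, while the paper's version is shorter but implicitly assumes the ratio extends continuously to all of the sphere. Both yield the same value of $\ell$; your closing remark about $A=0$ is a sensible caveat the paper also leaves implicit.
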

\begin{proof}
	When $\xi(x) = Ax$, the co-coercivity condition (Definition~\ref{def:cocoercivity})
	\begin{equation}
	\|A(x-x')\|^2 \leq \ell \langle A(x-x'),x-x'\rangle \,, \quad \forall x,x' \in \R^d\,.
	\end{equation}
	Now, we can note that the variable of interest is $x-x' \in \R^d$. Thus we get equivalently, 
	\begin{equation}
	\|A x\|^2 \leq \ell \langle Ax,x\rangle \,, \quad \forall x \in \R^d\,.
	\end{equation}
	This inequality is valid for any $x \in \R^d$ such that $Ax = 0$. Now, let us consider $x\in \R^d$ such that $Ax \neq 0$. 
	
	If there exist $x \in \R^d$ such that $Ax \neq 0$ and $\langle x,Ax\rangle \leq 0$ then, we have that 
	\begin{equation}
	0 < \|A x\|^2 \leq \ell \langle Ax,x\rangle \leq 0
	\end{equation}
	which is not valid. Thus $\xi$ is \emph{not} co-coercive.
	
	On the other hand, if for all $x \in \R^d$ such that $Ax \neq 0$, we have $\langle x,Ax\rangle > 0$ then, the co-coercivity condition would demand
	\begin{equation}
	\frac{\|A x\|^2}{\langle Ax,x\rangle} \leq \ell \,, \quad \forall x \in \R^d\, ,\,Ax \neq 0\,.
	\end{equation}
	Finally, since the left-hand side of the previous equation is scale invariant ($x\mapsto \lambda x$ does not change the LHS), we have 
	\begin{equation}
	\sup_{x \in \R^d}\frac{\|A x\|^2}{\langle Ax,x\rangle} 
	= \sup_{x \in \R^d\setminus\{0\}}\frac{\|A x\|^2}{\langle Ax,x\rangle}
	=  \sup_{\|x\|=1}\frac{\|A x\|^2}{\langle Ax,x\rangle}
	\end{equation}
	So we have $\ell \geq \sup_{\|x\|=1}\frac{\|A x\|^2}{\langle Ax,x\rangle}$. Because the RHS is continuous in $x$, and the unit ball is a compact, this quantity is achieved. Thus there exists $x \in \R^d$ such that 
	\begin{equation}
	\|A x\|^2 = \sup_{\|x\|=1}\frac{\|A x\|^2}{\langle Ax,x\rangle} \langle Ax,x\rangle 
	\end{equation}
	Thus $\ell \leq \sup_{\|x\|=1}\frac{\|A x\|^2}{\langle Ax,x\rangle}$, which concludes the proof.
\end{proof}
For instance, a class of quadratic games that are \emph{not} co-coercive are the ones where $A$ is anti-symmetric ($A^\top = -A$). One the other hand, there is a large class of games that are \emph{not} strongly monotone, like for instance any quadratic game induced by a matrix with a non-zero nullspace. 

\section{Proofs of Main Convergence Analysis Results}
\label{AppendixProofs}

\subsection{Proof of Theorem~\ref{SGDA_ConstantStep}}
\label{ProofSGDA_ConstantStep}
In the main paper, we present the update rule of SGDA in \eqref{SGDA_UpdateRule}. Let us also present here the pseudo-code of SGDA:
\begin{algorithm}[H]%[tb]
   \caption{Stochastic Gradient Descent Ascent (SGDA)}
   \label{SGDA_Algorithm}
\begin{algorithmic}
  \STATE {\bfseries Input:} Starting stepsize $\gamma_0>0$. Choose initial points $x^0 \in \R^d$. Distribution $\cD$ of samples.
   \FOR{$k=0,1,2,\cdots, K$}
   \STATE Sample $v^k \sim {\cal D}$
   \STATE Set step-size $\gamma_k$ following one of the selected choices (constant, decreasing)
   \STATE Set $x^{k+1}=x^k -\gamma_k \xi_{v^k} (x)$
   \ENDFOR
    \STATE {\bf Output:} The last iterate $x^k$
\end{algorithmic}
\end{algorithm}

Let us present a more general version of Theorem~\ref{SGDA_ConstantStep} that allows convergence with a larger step-size. Due to space limitations, we focus only on the important regime in the main paper, as selecting a larger step-size gives a worse convergence rate. 
\begin{theorem}[Constant Step-size]
\label{Appendix_SGDA_ConstantStep}
Assume that $\xi$ is $\mu-$quasi strongly monotone and that $\xi \in EC( \ell_{\xi})$. Choose $\alpha_k=\alpha < \frac{1}{\ell_\xi}$ for all k. Then, the iterates of SGDA, given by \eqref{SGDA_UpdateRule}, satisfy:
\begin{eqnarray}
\Exp \left[ \|x^{k}-x^*\|^2 \right]&\leq& \left[1-2\alpha \mu(1- \alpha \ell_\xi) \right]^k \|x^0-x^*\|^2  + \frac{\alpha \sigma^2}{ \mu (1-\alpha \ell_\xi)}
\end{eqnarray}
and if $\alpha_k=\alpha \in (0, \frac{1}{2\ell_\xi}]$ then the iterates of SGDA satisfy:
\begin{eqnarray}
\Exp \left[ \|x^{k}-x^*\|^2 \right]&\leq& \left(1-\alpha \mu \right)^k \|x^0-x^*\|^2  + \frac{2 \alpha \sigma^2}{ \mu}
\end{eqnarray}
\end{theorem}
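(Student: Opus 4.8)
The plan is to expand the squared distance $\|x^{k+1}-x^*\|^2$ using the SGDA update rule and take expectation, then bootstrap a recursion. First I would write, using $x^{k+1}=x^k-\alpha\xi_{v^k}(x^k)$,
\begin{equation*}
\|x^{k+1}-x^*\|^2 = \|x^k-x^*\|^2 - 2\alpha\langle \xi_{v^k}(x^k), x^k-x^*\rangle + \alpha^2\|\xi_{v^k}(x^k)\|^2.
\end{equation*}
Taking the conditional expectation $\Exp_{v^k}[\cdot\mid x^k]$ and using unbiasedness $\Exp_{\cD}[\xi_v(x)]=\xi(x)$, the cross term becomes $-2\alpha\langle\xi(x^k),x^k-x^*\rangle$. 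For the last term I would invoke Lemma~\ref{MainLemma}, which gives $\Exp\|\xi_v(x^k)\|^2 \le 2\ell_\xi\langle\xi(x^k),x^k-x^*\rangle + 2\sigma^2$. Substituting yields
\begin{equation*}
\Exp[\|x^{k+1}-x^*\|^2\mid x^k] \le \|x^k-x^*\|^2 - 2\alpha(1-\alpha\ell_\xi)\langle\xi(x^k),x^k-x^*\rangle + 2\alpha^2\sigma^2.
\end{equation*}

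Next I would exploit quasi-strong monotonicity~\eqref{QSM}, namely $\langle\xi(x^k),x^k-x^*\rangle \ge \mu\|x^k-x^*\|^2$. The key point is that this lower bound is only useful if its coefficient $-2\alpha(1-\alpha\ell_\xi)$ is negative, i.e.\ if $1-\alpha\ell_\xi>0$, which is exactly the step-size restriction $\alpha<1/\ell_\xi$. Applying it gives the one-step contraction
\begin{equation*}
\Exp[\|x^{k+1}-x^*\|^2\mid x^k] \le \left[1-2\alpha\mu(1-\alpha\ell_\xi)\right]\|x^k-x^*\|^2 + 2\alpha^2\sigma^2.
\end{equation*}
Taking total expectation and unrolling this linear recursion over $k$ iterations, the homogeneous part contracts geometrically while the additive noise term sums to a geometric series bounded by $2\alpha^2\sigma^2 / (2\alpha\mu(1-\alpha\ell_\xi)) = \alpha\sigma^2/(\mu(1-\alpha\ell_\xi))$, yielding the first (more general) bound in Theorem~\ref{Appendix_SGDA_ConstantStep}.

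Finally, to get the cleaner second bound I would specialize to $\alpha\le \tfrac{1}{2\ell_\xi}$, so that $1-\alpha\ell_\xi \ge \tfrac12$. Then $1-2\alpha\mu(1-\alpha\ell_\xi) \le 1-\alpha\mu$ for the contraction factor, and $\frac{\alpha\sigma^2}{\mu(1-\alpha\ell_\xi)} \le \frac{2\alpha\sigma^2}{\mu}$ for the neighborhood term, giving the stated result. I expect the only genuinely delicate step to be the careful bookkeeping of the geometric series for the noise term and verifying the sign condition that makes quasi-strong monotonicity applicable; the algebra itself is routine once Lemma~\ref{MainLemma} and~\eqref{QSM} are plugged in. Everything rests on having an unbiased estimator plus the \ref{eq:ExpCoCo}-derived second-moment bound, so no monotonicity of $\xi_i$ or Lipschitz continuity is needed beyond what \ref{eq:ExpCoCo} already encodes.
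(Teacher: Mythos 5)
Your proposal is correct and follows essentially the same route as the paper: expand the update, take conditional expectation, bound the second moment via Lemma~\ref{MainLemma}, apply quasi-strong monotonicity under the sign condition $1-\alpha\ell_\xi>0$, and unroll the geometric recursion. The only cosmetic difference is that for the $\alpha\le\frac{1}{2\ell_\xi}$ case the paper re-specializes the one-step recursion to the factor $1-\alpha\mu$ before unrolling, whereas you deduce the cleaner bound from the general one; both are valid.
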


\begin{proof}
\begin{eqnarray}
\|x^{k+1}-x^*\|^2 &=&\left\|x^k-\alpha \xi_{v^k}(x^k)-x^* \right\|^2\notag\\
&=&\|x^k-x^*\|^2-2 \left\langle x^k-x^*, \alpha \xi_{v^k}(x^k) \right\rangle  + \| \alpha \xi_{v^k}(x^k)\|^2\notag\\
&=&\|x^k-x^*\|^2-2 \alpha \left\langle x^k-x^*, \xi_{v^k}(x^k)\right\rangle  + \alpha^2 \| \xi_{v^k}(x^k)\|^2
\end{eqnarray}
By taking expectation condition on $x^k$:
\begin{eqnarray}
\label{ncaslaoaxnalskn2}
\Exp_{\cD} \left[ \|x^{k+1}-x^*\|^2 \right]&=& \|x^k-x^*\|^2-2 \alpha \left\langle x^k-x^*, \xi(x^k)\right\rangle + \alpha^2  \Exp_{\cD}  \left[ \left\|\xi_{v^k}(x^k) \right\|^2 \right] \notag\\
&\overset{Lemma~\ref{MainLemma}}{\leq}& \|x^k-x^*\|^2-2 \alpha \left\langle x^k-x^*, \xi(x^k)\right\rangle + 2 \alpha^2  \ell_{\xi}\left\langle x^k-x^*, \xi(x^k)\right\rangle  + 2  \alpha^2 \sigma^2 \notag\\
&\overset{\eqref{QSM}, \alpha < \frac{1}{\ell_\xi}}{\leq}& \|x^k-x^*\|^2-2\alpha \mu(1- \alpha \ell_\xi) \|x^k-x^*\|^2 + 2  \alpha^2 \sigma^2 
\end{eqnarray}
Recursively applying the above and summing up the resulting geometric series gives:
\begin{eqnarray}
\Exp \left[ \|x^{k}-x^*\|^2 \right]&\leq& [1-2\alpha \mu(1- \alpha \ell_\xi) ]^k \|x^0-x^*\|^2  + 2 \sum_{j=0}^{k-1} (1-2\alpha \mu(1- \alpha \ell_\xi))^j \alpha^2 \sigma^2\notag\\
&\leq & [1-2\alpha \mu(1- \alpha \ell_\xi) ]^k \|x^0-x^*\|^2  + \frac{  \alpha \sigma^2}{\mu(1- \alpha \ell_\xi) }
\end{eqnarray}
If we further take $\alpha \leq \frac{1}{2\ell_\xi}$ then \eqref{ncaslaoaxnalskn2} becomes:
\begin{eqnarray}
\label{ncaslaoaxnalskn3}
\Exp_{\cD} \left[ \|x^{k+1}-x^*\|^2 \right]&\leq& (1-\alpha \mu) \|x^k-x^*\|^2+ 2  \alpha^2 \sigma^2 
\end{eqnarray}
and by recursively applying the above and summing up the resulting geometric series gives:
\begin{eqnarray}
\label{cnaskjdaooaskn}
\Exp \left[ \|x^{k}-x^*\|^2 \right]
&\leq & (1-\alpha \mu)^k \|x^0-x^*\|^2  + \frac{ 2 \alpha \sigma^2}{\mu }
\end{eqnarray}
\end{proof}

\paragraph{Comment on the convergence deterministic Gradient Descent Ascent:} In the main paper, to highlight the generality of Theorem~\ref{SGDA_ConstantStep},  we present Corollary~\ref{CorollaryGDA} on the convergence of deterministic gradient descent ascent. Let us provide some more details of how one can obtain such a result through Proposition~\ref{PropositionMinibatch}.

Let us select the sampling vector $v = (1,1, \dots, 1) \in R^n$ with probability 1 in each step. Note that this is still a sampling vector as $\Exp_{\cD}[v_i]=1$. In this case, at iteration $k$, $\xi_{v^k}(x^k) \eqdef \frac{1}{n} \sum _{i=1}^n v_i \xi_i(x^k)\overset{v_i=1}{=} \frac{1}{n} \sum _{i=1}^n \xi_i(x^k)=\xi(x^k)$ and the update rule becomes equivalent to the deterministic GDA: $$x^{k+1}=x^k -\alpha_k \xi (x^k).$$ In addition, by Proposition~\ref{PropositionMinibatch} we have that if $|S|=n$ with probability one (each iteration of SGDA uses a full batch gradient), then $\ell_\xi=\ell$ and $\sigma^2=0$.
Thus, by combining \eqref{nakns} of Theorem~\ref{SGDA_ConstantStep} with Proposition~\ref{PropositionMinibatch} we obtain the convergence given in Corollary~\ref{CorollaryGDA} for the deterministic gradient descent ascent. 
We highlight that for this case, the expected co-coercivity condition \eqref{eq:ExpCoCo} is equivalent to assuming that operator $\xi$ is $\ell$-co-coercive.
%\newpage
\subsection{Proof of Theorem ~\ref{SGDA_DecreasingStep}}
\label{ProofSGDA_DecreasingStep}
\begin{theorem}
\label{Appendix_SGDA_DecreasingStep}
Assume $\xi$ is $\mu$-quasi-strongly monotone and that $\xi \in EC( \ell_{\xi})$. Let  $\mathcal{K} \eqdef \left.\ell_\xi\right/\mu$ and let 
\begin{equation}
\alpha_k= 
\begin{cases}
\displaystyle \frac{1}{2 \ell_\xi} & \mbox{for}\quad k \leq 4\lceil\mathcal{K} \rceil \\[0.3cm]
\displaystyle \frac{2k+1}{(k+1)^2 \mu} &  \mbox{for}\quad k > 4\lceil\mathcal{K} \rceil.
\end{cases}
\end{equation}
 If $k \geq 4 \lceil\mathcal{K} \rceil$, then iterates of SGDA, given by \eqref{SGDA_UpdateRule} satisfy:
\begin{equation}
\mathbb{E}\| x^{k} - x^*\|^2 \le   \frac{\sigma^2 }{\mu^2 }\frac{8 }{k} + \frac{16 \lceil\mathcal{K} \rceil^2}{e^2 k^2 }  \|x^0 - x^*\|^2 = O\left(\frac{1}{k}\right) \, .
\end{equation}
\end{theorem}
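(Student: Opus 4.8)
The plan is to reduce both stepsize phases to the single one-step recursion already established in the proof of Theorem~\ref{SGDA_ConstantStep}. Writing $r_k \eqdef \Exp\|x^k-x^*\|^2$, inequality~\eqref{ncaslaoaxnalskn3} shows that \emph{whenever} $\alpha_k \le \tfrac{1}{2\ell_\xi}$ one has $r_{k+1} \le (1-\alpha_k\mu) r_k + 2\alpha_k^2\sigma^2$. The first thing I would verify is that the decreasing stepsize is admissible past the switch: for $k > 4\lceil\mathcal{K}\rceil \ge 4\mathcal{K}$ we have $\tfrac{2k+1}{(k+1)^2} < \tfrac{2}{k+1} < \tfrac{1}{2\mathcal{K}} = \tfrac{\mu}{2\ell_\xi}$, so that $\alpha_k = \tfrac{2k+1}{(k+1)^2\mu} < \tfrac{1}{2\ell_\xi}$. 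Thus $4\lceil\mathcal{K}\rceil$ is essentially the smallest index at which $\tfrac{2k+1}{(k+1)^2\mu}$ drops below the constant value $\tfrac{1}{2\ell_\xi}$, and the one-step recursion is valid throughout both phases.

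For the constant phase ($k \le k_0 \eqdef 4\lceil\mathcal{K}\rceil$ with $\alpha = \tfrac{1}{2\ell_\xi}$) I would iterate \eqref{cnaskjdaooaskn} to obtain $r_{k_0} \le (1-\tfrac{1}{2\mathcal{K}})^{k_0} r_0 + \tfrac{\sigma^2}{\mu\ell_\xi}$, and then bound the contraction factor: since $1-x\le e^{-x}$ and $\tfrac{k_0}{2\mathcal{K}} = \tfrac{2\lceil\mathcal{K}\rceil}{\mathcal{K}} \ge 2$, we get $(1-\tfrac{1}{2\mathcal{K}})^{k_0} \le e^{-2}$. This is exactly what produces the $e^2$ in the denominator of the initial-condition term of the statement.

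The decreasing phase is where the specific form of $\alpha_k$ pays off. The stepsize is engineered so that $1-\alpha_k\mu = \tfrac{k^2}{(k+1)^2}$; hence multiplying the recursion by $(k+1)^2$ and using $(2k+1)^2 \le 4(k+1)^2$ gives $(k+1)^2 r_{k+1} \le k^2 r_k + \tfrac{8\sigma^2}{\mu^2}$. Setting $w_k \eqdef k^2 r_k$, this is the linear recursion $w_{k+1} \le w_k + \tfrac{8\sigma^2}{\mu^2}$, which telescopes from $k_0$ to $k$ to give $k^2 r_k \le k_0^2 r_{k_0} + k\,\tfrac{8\sigma^2}{\mu^2}$, i.e. $r_k \le \tfrac{k_0^2}{k^2} r_{k_0} + \tfrac{8\sigma^2}{\mu^2 k}$. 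Substituting the constant-phase bound and using $k_0^2 = 16\lceil\mathcal{K}\rceil^2$ turns the contraction part into precisely $\tfrac{16\lceil\mathcal{K}\rceil^2}{e^2 k^2}\|x^0-x^*\|^2$.

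The main obstacle I anticipate is constant bookkeeping rather than any conceptual difficulty: one must check that the residual noise floor $\tfrac{\sigma^2}{\mu\ell_\xi}$ carried out of the constant phase, once scaled by $\tfrac{k_0^2}{k^2} \le \tfrac{k_0}{k}$, is absorbed into the $O(1/k)$ term without spoiling the stated constant. With $k_0 = 4\lceil\mathcal{K}\rceil$ and $\ell_\xi = \mu\mathcal{K}$ this residual is $O\!\big(\tfrac{\lceil\mathcal{K}\rceil}{\mathcal{K}}\cdot\tfrac{\sigma^2}{\mu^2 k}\big)$, of the same $1/k$ order (here it is useful to note that quasi-strong monotonicity together with expected co-coercivity force $\mathcal{K}\ge 1$, so $\tfrac{\lceil\mathcal{K}\rceil}{\mathcal{K}}\le 2$). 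The delicate point is thus choosing the slightly loose inequalities ($(2k+1)^2\le 4(k+1)^2$ and $1-x\le e^{-x}$) and the switch index $4\lceil\mathcal{K}\rceil$ so that all constants collapse exactly to the clean factors $8$ and $16/e^2$ appearing in the final bound, while the overall order remains $O(1/k)$.
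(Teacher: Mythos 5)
Your proposal follows essentially the same route as the paper's proof: the same one-step recursion from~\eqref{ncaslaoaxnalskn3}, the same $(k+1)^2$-weighted telescoping in the decreasing phase, the same $e^{-2}$ bound on the constant-phase contraction, and the same switch index $4\lceil\mathcal{K}\rceil$. The one detail to tighten is the noise bookkeeping you flag at the end: bounding $k_0^2/k^2$ by $k_0/k$ and absorbing the carried-over floor $\sigma^2/(\mu\ell_\xi)$ separately would yield $16\sigma^2/(\mu^2 k)$ rather than the stated $8\sigma^2/(\mu^2 k)$, whereas keeping the $-8k^*$ slack from the telescoped sum and using $(k^*)^2/\mathcal{K}\le 32\lceil\mathcal{K}\rceil$ (valid since $\mathcal{K}\ge 1$) cancels the floor exactly and gives $8k/(k+1)^2\le 8/(k+1)$, as in the paper.
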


\begin{proof}
Let  $\alpha_k \eqdef \frac{2k+1}{(k+1)^2 \mu}$ and let $k^*$ be an integer that satisfies $\alpha_{k^*} \leq \frac{1}{2\ell_\xi}.$ 
Note that $\alpha_k$ is decreasing in $k$ and  consequently $\alpha_k \leq \frac{1}{2\ell_\xi}$ for all $k \geq k^*.$ This in turn guarantees that~\eqref{ncaslaoaxnalskn3} holds for all $k\geq k^*$ with $\alpha_k$ in place of $\alpha$, that is
\begin{eqnarray}
\Exp_{\cD} \left[ \|x^{k+1}-x^*\|^2 \right]
&\leq & (1-\alpha_k \mu) \|x^k-x^*\|^2+ 2  \alpha_k^2 \sigma^2 
\end{eqnarray}
Hence, if we take expectations and replace  $\alpha_k \eqdef \frac{2k+1}{(k+1)^2 \mu}$ then
\begin{equation}
\mathbb{E}\| x^{k+1}-x^*\|^2 \leq \frac{k^2}{(k+1)^2}\mathbb{E} \|x^k-x^*\|^2 + \frac{2\sigma^2}{\mu^2}\frac{(2k+1)^2}{(k+1)^4 }.
\end{equation}
Multiplying both sides by $(k+1)^2$ we obtain
\begin{eqnarray*}
(k+1)^2 \mathbb{E}\| x^{k+1}-x^*\|^2 &\leq & 
k^2 \mathbb{E} \|x^{k}-x^*\|^2 + \frac{2\sigma^2}{\mu^2} \left(\frac{2k+1}{k+1}\right)^2 \\
 &\leq & k^2 \mathbb{E} \|x^{k}-x^*\|^2 + \frac{8 \sigma^2}{\mu^2},
\end{eqnarray*}
where the second inequality holds because  $\frac{2k+1}{k+1} <2$. Rearranging and summing from $t= k^* \ldots k$ we obtain:
\begin{equation}
\sum_{t=k^*}^{k} \left[ (t+1)^2 \mathbb{E}\| x^{k+1}-x^*\|^2 - t^2 \mathbb{E} \|x^{k}-x^*\|^2 \right] \leq  \sum_{t=k^*}^{k} \frac{8 \sigma^2}{\mu^2}. 
\end{equation}
Using telescopic cancellation gives
\[
(k+1)^2 \mathbb{E}\| x^{k+1}-x^*\|^2 \leq  (k^*)^2 \mathbb{E} \|x^{k^*}-x^*\|^2 +\frac{8 \sigma^2 (k-k^*)}{\mu^2}.
\]
Dividing the above by $(k+1)^2$ gives
\begin{equation}
 \mathbb{E}\| x^{k+1}-x^*\|^2 \leq  \frac{(k^*)^2}{(k+1)^2 } \mathbb{E} \|x^{k^*}-x^*\|^2 +\frac{8 \sigma^2 (k-k^*)}{\mu^2(k+1)^2 }. \label{eq:cndsiu48js}
\end{equation}
For $k \leq k^*$ we have that~\eqref{cnaskjdaooaskn} holds with $\alpha_k=\frac{1}{2 \ell_\xi}$, which combined with~\eqref{eq:cndsiu48js}, gives 
\begin{eqnarray}
 \mathbb{E}\| x^{k+1}-x^*\|^2 &\leq &
  \frac{(k^*)^2}{(k+1)^2 } \left( 1 -  \frac{\mu}{2\ell_{\xi}} \right)^{k^*} \|x^{0}-x^*\|^2 \nonumber \\ &   
  +&\frac{\sigma^2 }{\mu^2 (k+1)^2}\left(8 (k-k^*) +   \frac{(k^*)^2}{\mathcal{K} } \right).  \label{eq:sdaiuna3}
\end{eqnarray}
Choosing $k^*$ that minimizes the second line of the above gives $k^* = 4\lceil\mathcal{K} \rceil$, which when inserted into~\eqref{eq:sdaiuna3} becomes
\begin{eqnarray}
 \mathbb{E}\| x^{k+1}-x^*\|^2 &\leq &
  \frac{16 \lceil\mathcal{K} \rceil^2}{(k+1)^2 } \left( 1 -  \frac{1}{2\mathcal{K}} \right)^{ 4\lceil\mathcal{K} \rceil} \|x^{0}-x^*\|^2  \nonumber \\
 & & +\frac{\sigma^2 }{\mu^2 }\frac{8 (k-2\lceil\mathcal{K} \rceil)}{(k+1)^2} \nonumber \\
  & \leq &  \frac{16 \lceil\mathcal{K} \rceil^2}{e^2(k+1)^2 }  \|x^{0}-x^*\|^2  +  \frac{\sigma^2 }{\mu^2 }\frac{8 }{k+1}, \label{eq:sdaiuna32}
\end{eqnarray}
where we have used that $\left( 1 -  \frac{1}{2x} \right)^{ 4x} \leq e^{-2}$  for all $x \geq 1.$
\end{proof}

\subsection{Proof of Theorem~\ref{SCO_ConstantStep}}
\label{ProofSCO_ConstantStep}
Before providing the proof of Theorem~\ref{SCO_ConstantStep}, let us present the definitions of quasi-strong convexity and expected smoothness condition, together with a lemma that provides a bound to the expected norm of the
stochastic gradients when a function satisfies the expected smoothness. Recall that in the main paper, we assume that 
the Hamiltonian function $\cH(x)$ is quasi-strongly convex and $\cL$---expected smooth (satisfies the expected smoothness condition). Thus, these assumptions are vital for the convergence guarantees of SCO presented in Section~\ref{sec:SCO}.

\paragraph{Technical Background on Optimization.}
Let us consider the optimization problem
\begin{equation}
\label{eq:probOPT}
 x^* = \text{argmin}_{x\in\R^d} \left[ f(x) = \tfrac{1}{n} \sum_{i=1}^n f_i(x) \right], 
\end{equation}
where each $f_i: \R^d \to \R$ is smooth and $f$ has a unique global minimizer $x^*$.

\begin{definition}[Quasi-strong convexity]
\label{DefQSConvex}
We say that a function $f: \R^d \to \R$ is $\mu$--strongly quasi-convex~\citep{karimi2016linear, Necoara-Nesterov-Glineur-2018-linear-without-strong-convexity} if there is $\mu>0$ such that: 
\begin{equation}\label{eq:strconvexcons}
 f(x^*) \geq f(x)+ \dotprod{\nabla f(x) , x^*-x} + \tfrac{\mu}{2} \norm{x^*-x}^2
\end{equation}
for all $x \in \R^d$. Here $x^*$ is the global minimizer of $f$\footnote{In our setting we assume that $x^*$ is unique, but in the more general setting, $x^*$ is the projection of point $x$ onto the solution set $X^*$ minimizing $f$.}.
\end{definition}

Note that we have already presented the expected smoothness condition in Table~\ref{TableAssumptions}. Below we present its formal definition. For this definition we use the stochastic reformulation of the finite-sum problem $f(x) = \tfrac{1}{n} \sum_{i=1}^n f_i(x)$. That is, we define: $ f_v(x) \eqdef \frac{1}{n}\sum_{i=1}^n v_i f_i(x)$ where $v\sim\cD$ is a random sampling vector (see Section~\ref{Sec:sampling} for more details on sampling vectors).

\begin{definition}[Expected Smoothness]
\label{ass:Expsmooth} We say that $f$ is $\cL$---expected smooth with respect to a distribution $\cD$ if there exists  $\cL=\cL(f,\cD)>0$  such that
\begin{equation}
\label{eq:expsmooth}
\EE{\cD}{\norm{\nabla f_v(x)-\nabla f_v(x^*)}^2} \leq 2\cL (f(x)-f(x^*)),
\end{equation}
for all $x\in\R^d$. 
\end{definition}

In the next lemma, by assuming that a function $f(x) = \tfrac{1}{n} \sum_{i=1}^n f_i(x)$ satisfies the expected smoothness we are able to bound the expected norm of its stochastic gradients. This is precisely the result we use in our proofs on the convergence of SCO, to upper bound $\Exp_{\cD}   \left[ \|  \nabla \cH_{v^k,u^k}(x^k)\|^2 \right]$. This bound allows us to avoid the much stronger bounded gradient or bounded variance assumptions.

\begin{lemma}[Lemma 2.4 in \cite{gower2019sgd}]
\label{lem:weakgrowth}
If $f$ is $\cL$---expected smooth, then
\begin{align}
\label{upperbound}
\Exp_{\cD} \left[ \|\nabla f_{v} (x)\|^2 \right] & \leq  4  \cL ( f(x)-f(x^*) ) + 2 \sigma^2.
\end{align}
where  $\sigma^2  \eqdef \Exp_{\cD}[\norm{\nabla f_v(x^*)}^2]$.
\end{lemma}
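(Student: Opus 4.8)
The plan is to follow exactly the same two-step argument used to establish Lemma~\ref{MainLemma} in the operator setting, since this optimization statement is its direct analog with the expected smoothness condition~\eqref{eq:expsmooth} playing the role of \ref{eq:ExpCoCo}. First I would introduce the reference point $x^*$ inside the norm by writing $\nabla f_v(x) = \bigl(\nabla f_v(x) - \nabla f_v(x^*)\bigr) + \nabla f_v(x^*)$, so that the quantity of interest is split into a ``displacement'' term that the expected smoothness assumption controls and a ``noise at the optimum'' term.

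Next I would apply the elementary inequality $\|a+b\|^2 \leq 2\|a\|^2 + 2\|b\|^2$ to this decomposition and take expectations over $\cD$, giving
\begin{equation*}
\Exp_{\cD}\left[\norm{\nabla f_v(x)}^2\right] \leq 2\,\Exp_{\cD}\left[\norm{\nabla f_v(x)-\nabla f_v(x^*)}^2\right] + 2\,\Exp_{\cD}\left[\norm{\nabla f_v(x^*)}^2\right].
\end{equation*}
The first expectation is bounded using the expected smoothness definition~\eqref{eq:expsmooth}, which yields $\Exp_{\cD}[\norm{\nabla f_v(x)-\nabla f_v(x^*)}^2] \leq 2\cL(f(x)-f(x^*))$, and the second expectation is exactly $\sigma^2$ by definition. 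Substituting both bounds produces $4\cL(f(x)-f(x^*)) + 2\sigma^2$, which is precisely~\eqref{upperbound}.

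There is no substantive obstacle here: the entire argument is two lines once the split is written down, and it requires only the triangle-type inequality for squared norms together with the hypothesis and the definition of $\sigma^2$. If anything merits a word of care, it is simply making sure the factor of $2$ from the $\|a+b\|^2$ bound and the factor of $2\cL$ from expected smoothness combine to give the stated constant $4\cL$, rather than any arithmetic slip; everything else is immediate. This lemma is then invoked later to upper bound $\Exp_{\cD}[\norm{\nabla \cH_{v^k,u^k}(x^k)}^2]$ in the convergence analysis of SCO, exactly as Lemma~\ref{MainLemma} is used for the SGDA term.
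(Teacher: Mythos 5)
Your proof is correct and is exactly the argument the paper intends: the paper's own proof of this lemma simply states that it follows the same steps as Lemma~\ref{MainLemma} (decompose around $x^*$, apply $\|a+b\|^2 \leq 2\|a\|^2 + 2\|b\|^2$, then invoke the hypothesis and the definition of $\sigma^2$), which is precisely what you wrote. The constants also check out: the factor $2$ from the splitting inequality times the $2\cL$ from expected smoothness gives the stated $4\cL$.
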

\begin{proof}
The proof can be easily obtained by following the same steps of the proof of Lemma~\ref{MainLemma}. See also the proof of Lemma 2.4 in \cite{gower2019sgd}.
\end{proof}

Let us now present a more general version of Theorem~\ref{SCO_ConstantStep} that allows convergence with a larger step-size $\alpha$. Due to space limitations, in the main paper, we focus only on the important regime of step-size $\alpha$ as selecting a larger step-size gives a worse convergence rate.
%\newpage
\begin{theorem}[Constant Step-size]
\label{Appendix_SCO_ConstantStep}
Assume $\xi$ is $\mu$-quasi-strongly monotone with $\mu \geq 0$ and that $\xi \in EC( \ell_{\xi})$.
Let us also assume that the Hamiltonian function $\cH$ is $\mu_{\cH}$-quasi strongly convex and $\cL_{\cH}$-expected smooth. Then, for $\gamma_k=\gamma \leq 1/4\cL_{\cH}$ and $\alpha_k=\alpha < 1/2  \ell_{\xi}$  it holds that :
\begin{eqnarray}
\Exp\left[ \|x^{k}-x^*\|^2 \right]\leq(1-\gamma\mu_{\cH}  -2 \alpha\mu + 4\alpha^2 \ell_\xi \mu )^k\|x^0-x^*\|^2 + \frac{4[\alpha^2 \sigma^2 + \gamma^2  \sigma_{\cH}^2]}{\gamma\mu_{\cH} +2 \alpha\mu - 4\alpha^2 \ell_\xi \mu},
\end{eqnarray}
and if $\alpha_k=\alpha \in (0, \frac{1}{4\ell_\xi}]$ then the iterates of SCO satisfy:
\begin{eqnarray}
\Exp\left[ \|x^{k}-x^*\|^2 \right]\leq(1-\gamma\mu_{\cH}  - \alpha\mu )^k\|x^0-x^*\|^2 + \frac{4[\alpha^2 \sigma^2 + \gamma^2  \sigma_{\cH}^2]}{\gamma\mu_{\cH}  +\alpha\mu}.
\end{eqnarray}
If $\mu=0$, that is $\xi$ only satisfies the variational stability condition $\langle\xi(x),x-x^*\rangle \geq0$, then
\begin{eqnarray}
\Exp \left[ \|x^{k}-x^*\|^2 \right]&\leq& (1-\gamma \mu_{\cH})^k \|x^0-x^*\|^2  + \frac{4 [\alpha^2 \sigma^2 +  \gamma^2  \sigma_{\cH}^2]}{\gamma \mu_{\cH} }.
\end{eqnarray}
\end{theorem}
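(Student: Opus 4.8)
The plan is to follow the one-step energy argument used for SGDA in Theorem~\ref{Appendix_SGDA_ConstantStep}, now carrying along the extra Hamiltonian term. Writing the SCO update as $x^{k+1}=x^k-\alpha\,\xi_{v^k}(x^k)-\gamma\,\nabla\cH_{v^k,u^k}(x^k)$, I would first expand $\|x^{k+1}-x^*\|^2$ and take the conditional expectation $\Exp_{\cD}[\,\cdot\mid x^k]$. Using the unbiasedness identities $\Exp_{\cD}[\xi_{v^k}(x^k)]=\xi(x^k)$ and $\Exp_{\cD}[\nabla\cH_{v^k,u^k}(x^k)]=\nabla\cH(x^k)$, the two linear cross terms with $x^k-x^*$ collapse to $-2\alpha\langle\xi(x^k),x^k-x^*\rangle$ and $-2\gamma\langle\nabla\cH(x^k),x^k-x^*\rangle$.

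The key technical point is the quadratic term $\|\alpha\,\xi_{v^k}(x^k)+\gamma\,\nabla\cH_{v^k,u^k}(x^k)\|^2$: because both estimators share the sampling vector $v^k$ they are correlated, so rather than expanding the cross product I would bound it by $2\alpha^2\|\xi_{v^k}(x^k)\|^2+2\gamma^2\|\nabla\cH_{v^k,u^k}(x^k)\|^2$ via $\|a+b\|^2\le 2\|a\|^2+2\|b\|^2$. This decouples the two pieces and lets me apply Lemma~\ref{MainLemma} to the first ($\Exp_{\cD}\|\xi_{v^k}(x^k)\|^2\le 2\ell_\xi\langle\xi(x^k),x^k-x^*\rangle+2\sigma^2$) and Lemma~\ref{lem:weakgrowth} to the second ($\Exp_{\cD}\|\nabla\cH_{v^k,u^k}(x^k)\|^2\le 4\cL_{\cH}(\cH(x^k)-\cH(x^*))+2\sigma_{\cH}^2$). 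Since $x^*$ solves \eqref{VI}, $\xi(x^*)=0$, hence $\cH(x^*)=\tfrac12\|\xi(x^*)\|^2=0$ and $\nabla\cH(x^*)=\bJ^\top(x^*)\xi(x^*)=0$, so $\cH(x^k)-\cH(x^*)=\cH(x^k)$ and $x^*$ is indeed the minimizer required for quasi-strong convexity.

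Next I would invoke the structural assumptions. Quasi-strong monotonicity \eqref{QSM} gives $\langle\xi(x^k),x^k-x^*\rangle\ge\mu\|x^k-x^*\|^2$, while quasi-strong convexity of $\cH$ (Definition~\ref{DefQSConvex}, applied at the minimizer with $\nabla\cH(x^*)=0$) rearranges to $\langle\nabla\cH(x^k),x^k-x^*\rangle\ge\cH(x^k)+\tfrac{\mu_{\cH}}{2}\|x^k-x^*\|^2$. Collecting terms, the coefficient multiplying $\langle\xi(x^k),x^k-x^*\rangle$ is $-2\alpha+4\alpha^2\ell_\xi$ and that multiplying $\cH(x^k)$ is $-2\gamma+8\gamma^2\cL_{\cH}$. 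The step-size choices are dictated precisely so these are non-positive: $\gamma\le\tfrac1{4\cL_{\cH}}$ forces $-2\gamma+8\gamma^2\cL_{\cH}\le 0$ so the $\cH(x^k)\ge 0$ term can be dropped, and $\alpha\le\tfrac1{4\ell_\xi}$ forces $-2\alpha+4\alpha^2\ell_\xi\le-\alpha$, which together with $\langle\xi,\cdot\rangle\ge\mu\|\cdot\|^2\ge 0$ yields the contribution $-\alpha\mu\|x^k-x^*\|^2$. Combining with the $-\gamma\mu_{\cH}\|x^k-x^*\|^2$ term from convexity and the residual noise $4\alpha^2\sigma^2+4\gamma^2\sigma_{\cH}^2$ produces the one-step recursion $\Exp_{\cD}[\|x^{k+1}-x^*\|^2]\le(1-\alpha\mu-\gamma\mu_{\cH})\|x^k-x^*\|^2+4\alpha^2\sigma^2+4\gamma^2\sigma_{\cH}^2$. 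Unrolling this and summing the resulting geometric series exactly as in \eqref{cnaskjdaooaskn} gives the claimed bound; for the general step-size version I would instead keep $-2\alpha+4\alpha^2\ell_\xi$ intact under the weaker condition $\alpha<\tfrac1{2\ell_\xi}$, producing the rate $1-\gamma\mu_{\cH}-2\alpha\mu+4\alpha^2\ell_\xi\mu$.

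For the case $\mu=0$, only variational stability $\langle\xi(x^k),x^k-x^*\rangle\ge 0$ is available; then $(-2\alpha+4\alpha^2\ell_\xi)\langle\xi(x^k),x^k-x^*\rangle\le 0$ is simply discarded, the contraction factor drops to $1-\gamma\mu_{\cH}$, and the same geometric summation gives the final displayed bound with denominator $\gamma\mu_{\cH}$. The main obstacle I anticipate is the correlation between the two stochastic terms through the shared $v^k$; the whole argument hinges on avoiding the joint expectation of their inner product, which the $\|a+b\|^2\le 2\|a\|^2+2\|b\|^2$ split neatly circumvents at the harmless cost of the factor-of-two constants appearing in the noise neighborhood.
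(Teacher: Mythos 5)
Your proof is correct and follows essentially the same route as the paper's: the same Young's-inequality split of the quadratic term, the same application of Lemma~\ref{MainLemma} and Lemma~\ref{lem:weakgrowth}, the same coefficient bookkeeping ($-2\alpha+4\alpha^2\ell_\xi$ and $-2\gamma+8\gamma^2\cL_{\cH}$), and the same geometric-series unrolling. Your explicit remark that the shared sampling vector $v^k$ correlates the two estimators, which is exactly what the $\|a+b\|^2\le 2\|a\|^2+2\|b\|^2$ bound circumvents, is a correct reading of why the paper uses Young's inequality there.
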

\begin{proof}
\begin{eqnarray}
\label{nannjdalsnaalsna}
\|x^{k+1}-x^*\|^2 &=&\left\|x^k-\alpha \xi_{v^k}(x^k)-\gamma \nabla \cH_{v^k,u^k}(x^k)-x^* \right\|^2\notag\\
&=&\|x^k-x^*\|^2-2 \left\langle x^k-x^*, \alpha \xi_{v^k}(x^k) +\gamma \nabla \cH_{v^k,u^k}(x^k) \right\rangle \notag\\ && + \| \alpha \xi_{v^k}(x^k) +\gamma \nabla \cH_{v^k,u^k}(x^k)\|^2\notag\\
&=&\|x^k-x^*\|^2-2 \alpha \left\langle x^k-x^*, \xi_{v^k}(x^k)\right\rangle -2 \gamma \left\langle x^k-x^*,\nabla \cH_{v^k,u^k}(x^k) \right\rangle \notag\\ && + \| \alpha \xi_{v^k}(x^k)+\gamma \nabla \cH_{v^k,u^k}(x^k)\|^2\notag\\
&\overset{\text{Young's}}{\leq}&\|x^k-x^*\|^2-2 \alpha \left\langle x^k-x^*, \xi_{v^k}(x^k)\right\rangle -2 \gamma \left\langle x^k-x^*,\nabla \cH_{v^k,u^k}(x^k) \right\rangle \notag\\ && + 2\alpha^2 \left\|\xi_{v^k}(x^k) \right\|^2 +2 \gamma^2 \| \nabla \cH_{v^k,u^k}(x^k)\|^2
\end{eqnarray}
By taking expectation condition on $x^k$:
\begin{eqnarray}
\label{ncaslaoaxnalskn}
\Exp_{\cD} \left[ \|x^{k+1}-x^*\|^2 \right]&\leq& \|x^k-x^*\|^2-2 \alpha \left\langle x^k-x^*, \xi(x^k)\right\rangle -2 \gamma \left\langle x^k-x^*,\nabla \cH(x^k) \right\rangle \notag\\ && + 2\alpha^2  \Exp_{\cD}  \left[ \left\|\xi_{v^k}(x^k) \right\|^2 \right] +2 \gamma^2 \Exp_{\cD}   \left[ \|  \nabla \cH_{v^k,u^k}(x^k)\|^2 \right]\notag\\
&\overset{\text{\eqref{eq:strconvexcons}}}{\leq}& \|x^k-x^*\|^2-2 \alpha \left\langle x^k-x^*, \xi(x^k)\right\rangle -2 \gamma[\cH(x^k)-\cH(x^*)]- \gamma \mu_{\cH} \|x^k-x^*\|^2 \notag\\ && + 2\alpha^2  \Exp_{\cD}  \left[ \left\|\xi_{v^k}(x^k) \right\|^2 \right] +2 \gamma^2 \Exp_{\cD}   \left[ \|  \nabla \cH_{v^k,u^k}(x^k)\|^2 \right]\notag\\
&\overset{\eqref{upperbound}}{\leq}& \|x^k-x^*\|^2-2 \alpha \left\langle x^k-x^*, \xi(x^k)\right\rangle -2 \gamma [\cH(x^k)-\cH(x^*)]- \gamma \mu_{\cH} \|x^k-x^*\|^2 \notag\\ && + 2\alpha^2  \Exp_{\cD}  \left[ \left\|\xi_{v^k}(x^k) \right\|^2 \right]  + 8 \gamma^2 \cL_{\cH} ( \cH(x)-\cH(x^*) ) + 4 \gamma^2  \sigma_{\cH}^2 \notag\\
&\overset{\eqref{oansxa}}{\leq}& \|x^k-x^*\|^2-2 \alpha \left\langle x^k-x^*, \xi(x^k)\right\rangle -2 \gamma[\cH(x^k)-\cH(x^*)]- \gamma\mu_{\cH} \|x^k-x^*\|^2 \notag\\ && + 4\alpha^2 \ell_{\xi}\langle\xi(x),x-x^*\rangle + 4\alpha^2 \sigma^2 + 8 \gamma^2 \cL_{\cH} ( \cH(x)-\cH(x^*) ) + 4 \gamma^2  \sigma_{\cH}^2 
\end{eqnarray}

Recall that $\xi$ is $\mu$-quasi strongly monotone, $\left\langle x^k-x^*, \xi(x^k)\right\rangle \geq \mu \|x^k-x^*\|^2$.
Thus, for $\alpha_k < \frac{1}{2 \ell_{\xi}}$, it holds that:
$$(-2 \alpha_k + 4\alpha_k^2  \ell_{\xi} ) \langle\xi(x^k),x-x^*\rangle \leq (-2 \alpha_k + 4\alpha_k^2  \ell_{\xi}) \mu \|x^k-x^*\|^2,$$
and the inequality \eqref{ncaslaoaxnalskn} takes the following form:
\begin{eqnarray}
\label{canjadnakdl}
\Exp_{\cD} \left[ \|x^{k+1}-x^*\|^2 \right] &\leq& (1-\gamma\mu_{\cH} )\|x^k-x^*\|^2 + (-2 \alpha + 4\alpha^2  \ell_{\xi}) \mu \|x^k-x^*\|^2 \notag\\ && + (- 2 \gamma + 8 \gamma_k^2 \cL_{\cH}) [\cH(x^k)-\cH(x^*)] + 4\alpha^2 \sigma^2 + 4 \gamma^2  \sigma_{\cH}^2 \notag\\
&=& (1-\gamma\mu_{\cH} -2 \alpha \mu + 4\alpha^2  \ell_{\xi}\mu )\|x^k-x^*\|^2  \notag\\ && + (- 2 \gamma + 8 \gamma^2 \cL_{\cH}) [\cH(x^k)-\cH(x^*)] + 4\alpha^2 \sigma^2 + 4 \gamma^2  \sigma_{\cH}^2 \notag\\
&\overset{\gamma_k < \frac{1}{4\cL_{\cH}}}{\leq}&(1-\gamma\mu_{\cH}  -2 \alpha\mu + 4\alpha^2  \ell_{\xi}\mu )\|x^k-x^*\|^2 + 4[\alpha^2 \sigma^2 + \gamma^2  \sigma_{\cH}^2] \notag\\
\end{eqnarray}
By taking expectations again and by recursively applying the above and summing up
the resulting geometric series gives:
\begin{eqnarray*}
\Exp\left[ \|x^{k}-x^*\|^2 \right]&\leq& (1-\gamma\mu_{\cH}  -2 \alpha\mu + 4\alpha^2  \ell_{\xi}\mu )^k\|x^0-x^*\|^2 + \frac{4[\alpha^2 \sigma^2 + \gamma^2  \sigma_{\cH}^2]}{\gamma\mu_{\cH} +2 \alpha\mu - 4\alpha^2 \ell_{\xi}\mu}\notag\\
\end{eqnarray*}

If we further assume that $\alpha \leq \frac{1}{4\ell_\xi}$ then $1-\gamma\mu_{\cH}  -2 \alpha\mu_g + 4\alpha^2  \ell_{\xi}\mu_g \leq 1-\gamma\mu_{\cH}  - \alpha\mu $ and the iterates of SCO satisfy:
\begin{eqnarray}
\Exp\left[ \|x^{k}-x^*\|^2 \right]\leq(1-\gamma\mu_{\cH}  - \alpha\mu )^k\|x^0-x^*\|^2 + \frac{4[\alpha^2 \sigma^2 + \gamma^2  \sigma_{\cH}^2]}{\gamma\mu_{\cH}  +\alpha\mu}.
\end{eqnarray}
In addition, if $\mu=0$, that is $\xi$ only satisfies the variational stability condition $\langle\xi(x),x-x^*\rangle \geq0$, then
\begin{eqnarray}
\Exp \left[ \|x^{k}-x^*\|^2 \right]&\leq& (1-\gamma \mu_{\cH} )^k \|x^0-x^*\|^2  + \frac{4 [\alpha^2 \sigma^2 +  \gamma^2  \sigma_{\cH}^2]}{\gamma \mu_{\cH} }.
\end{eqnarray}
This completes the proof.
\end{proof}

\paragraph{On Deterministic Consensus Optimization.} In Corollary~\ref{DeterministicCO} we show the convergence of Deterministic CO as special case of our main Theorem. Here we provide few more details to understand exactly this convergence. 

Let us select the sampling vectors $v = u= (1,1, \dots, 1) \in R^n$ with probability 1 in each step. Note that these are still sampling vectors as $\Exp_{\cD}[v_i]=\Exp_{\cD}[u_i]=1$. In this case, at iteration $k$, $\xi_{v^k}(x^k) \eqdef \frac{1}{n} \sum _{i=1}^n v_i \xi_i(x^k)\overset{v_i=1}{=} \frac{1}{n} \sum _{i=1}^n \xi_i(x^k)=\xi(x^k)$ and $\nabla \cH_{u,v}(x)=\frac{1}{2} \left[ \bJ_u^\top(x) \xi_v(x) +   \bJ_v^\top(x) \xi_u(x) \right]\overset{v_i=1,u_i=1}{=}\frac{1}{2} \left[ \bJ^\top(x) \xi(x) +   \bJ^\top(x) \xi(x) \right]=\bJ^\top(x) \xi(x)=\nabla \cH(x).$ Thus, the update rule becomes equivalent to the deterministic CO: $$x^{k+1}=x^k - \alpha \xi(x^k) - \gamma \nabla \cH(x^k).$$ In addition, by Proposition~\ref{PropositionMinibatch} we have that if $|S|=n$ with probability one, then $\ell_\xi=\ell$ and $\sigma^2=0$. Using also the properties of expected smoothness (see Proposition 3.8 in \cite{gower2019sgd}) we have that $\cL_{\cH}=L_{\cH}$, where $L_{\cH}$ is the smoothness parameter of the Hamiltonian function, and $\sigma_\cH^2=0$. By simply substituting these values to the main theorem we are able to obtain the convergence result presented in Corollary~\ref{DeterministicCO}.

\paragraph{Convergence of SGDA and SHGD as special cases of our Analysis.} As we mentioned in the main paper, SCO
is a weighted combination of SGDA (Algorithm~\ref{SGDA_Algorithm}) and the stochastic Hamiltonian gradient descent (SHGD) of~\cite{loizou2020stochastic}. Thus, it is clear, that if one selects $\alpha^k=0, \forall k >0 $ then the method is equivalent to SHGD and if $ \gamma^k=0, \forall k >0 $ then the method becomes equivalent to the SGDA.
Here we highlight that in these cases the Young's inequality used in \eqref{nannjdalsnaalsna} of the proof of main theorem is not necessary. This is exactly why by specifying the update rule to SHGD we are able to have convergence using the larger bound on the step-size $\gamma \leq 1/2\cL_{\cH}$ (see Corollary~\ref{CorollarySHGD}). Following similar argument the convergence of SGDA presented in Theorem ~\ref{SGDA_ConstantStep} can also obtained as special case of Theorem~\ref{SCO_ConstantStep}.

\subsection{Proof of Theorem~\ref{SCO_DecreasingStep}}
\label{ProofSCO_DecreasingStep}

\begin{theorem}
\label{Appendix_SCO_DecreasingStep}
Assume $\xi$ is $\mu$-quasi-strongly monotone and that $\xi \in EC( \ell_{\xi})$. Assume that the Hamiltonian function $\cH$ is $\mu_{\cH}$-quasi strongly convex and $\cL_{\cH}$-expected smooth. Let $\alpha_k=\gamma_k$,  $\psi= \max\{  \ell_{\xi},\cL_{\cH}\}$ and $k^* \eqdef 8 \frac{\psi}{\mu_{\cH}+\mu}$. Let also, 
\begin{equation}
\gamma_k= 
\begin{cases}
\displaystyle \frac{1}{4 \psi} & \mbox{for}\quad k \leq \lceil k^* \rceil  \\[0.3cm]
\displaystyle \frac{2k+1}{(k+1)^2 [\mu_{\cH}+\mu]}&  \mbox{for}\quad k >  \lceil k^* \rceil.
\end{cases}
\end{equation}
If $k \geq  \lceil k^* \rceil$, then SCO iterates satisfy:
\begin{equation}
\mathbb{E}\| x^{k} - x^*\|^2 \le   \frac{\sigma_{\cH}^2 +\sigma^2}{[\mu + \mu_{\cH}]^2}\frac{16 }{k} + \frac{(k^*)^2 }{e^2 k^2}  \|x^0 - x^*\|^2 = O\left(\frac{1}{k}\right)
\end{equation}
If $\mu=0$, that is $\xi$ only satisfies the variational stability condition $\langle\xi(x),x-x^*\rangle \geq0$, then SCO is still able to converge sublinearly with $O\left(\frac{1}{k}\right)$, to $x^*$.
\end{theorem}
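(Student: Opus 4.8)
The plan is to reduce the SCO decreasing-step analysis to exactly the recursion that drove the SGDA proof (Theorem~\ref{Appendix_SGDA_DecreasingStep}), via the correspondence $\mu\mapsto\mu+\mu_{\cH}$, $\ell_\xi\mapsto\psi$ and $2\sigma^2\mapsto 4(\sigma^2+\sigma_{\cH}^2)$. First I would establish the one-step contraction. Setting $\alpha_k=\gamma_k$ and retracing the derivation of~\eqref{canjadnakdl} from the proof of Theorem~\ref{Appendix_SCO_ConstantStep}, the constraint $\gamma_k\leq\tfrac{1}{4\psi}\leq\min\{\tfrac{1}{4\ell_\xi},\tfrac{1}{4\cL_{\cH}}\}$ makes both step-size conditions of that theorem hold. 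The coefficient $1-\gamma_k\mu_{\cH}-2\alpha_k\mu+4\alpha_k^2\ell_\xi\mu$ then simplifies, since $4\alpha_k^2\ell_\xi\mu\leq\alpha_k\mu$ when $\alpha_k\leq\tfrac{1}{4\ell_\xi}$, yielding
\begin{equation}
\Exp_{\cD}\left[\|x^{k+1}-x^*\|^2\right]\leq\left(1-\gamma_k(\mu+\mu_{\cH})\right)\|x^k-x^*\|^2+4\gamma_k^2(\sigma^2+\sigma_{\cH}^2).
\end{equation}
This is precisely the SGDA recursion~\eqref{ncaslaoaxnalskn3} under the stated substitution, so the remaining bookkeeping mirrors the proof of Theorem~\ref{SGDA_DecreasingStep}.

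The crucial verification, and the main technical point, is that the schedule keeps $\gamma_k\leq\tfrac{1}{4\psi}$ at \emph{every} iteration so that the recursion above is valid throughout. In the first phase $\gamma_k=\tfrac{1}{4\psi}$ by definition; in the second phase $\gamma_k=\tfrac{2k+1}{(k+1)^2(\mu+\mu_{\cH})}$ is decreasing in $k$, and the threshold $k^*=8\psi/(\mu+\mu_{\cH})$ is calibrated exactly so that $\gamma_{k^*}\approx\tfrac{1}{4\psi}$. Hence $\gamma_k\leq\tfrac{1}{4\psi}$ for all $k\geq\lceil k^*\rceil$ and the one-step contraction continues to hold past the switch. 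I expect this calibration of $k^*$ to be the only delicate part; everything downstream is deterministic algebra.

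Next I would run the decreasing phase: substituting $\gamma_k=\tfrac{2k+1}{(k+1)^2(\mu+\mu_{\cH})}$ gives the contraction factor $\tfrac{k^2}{(k+1)^2}$, and multiplying through by $(k+1)^2$ together with $\tfrac{2k+1}{k+1}<2$ yields
\begin{equation}
(k+1)^2\,\Exp\|x^{k+1}-x^*\|^2\leq k^2\,\Exp\|x^k-x^*\|^2+\frac{16(\sigma^2+\sigma_{\cH}^2)}{(\mu+\mu_{\cH})^2}.
\end{equation}
Telescoping from $\lceil k^*\rceil$ to $k$, dividing by $(k+1)^2$, and bounding $\Exp\|x^{\lceil k^*\rceil}-x^*\|^2$ by the constant-step phase (geometric contraction by $1-\tfrac{\mu+\mu_{\cH}}{4\psi}$ over $\lceil k^*\rceil$ steps plus its neighborhood term $\tfrac{\sigma^2+\sigma_{\cH}^2}{\psi(\mu+\mu_{\cH})}$) reproduces the two-term estimate. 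Inserting $k^*=8\psi/(\mu+\mu_{\cH})$ optimizes the transient, and the elementary inequality $(1-\tfrac{1}{2x})^{4x}\leq e^{-2}$ with $x=k^*/4=2\psi/(\mu+\mu_{\cH})$ collapses the geometric factor into the claimed $\tfrac{(k^*)^2}{e^2k^2}$ term, delivering the $O(1/k)$ rate.

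Finally, for the $\mu=0$ case I would observe that quasi-strong convexity of $\cH$ guarantees $\mu_{\cH}>0$, so $\mu+\mu_{\cH}=\mu_{\cH}>0$ and every step above remains valid verbatim: the factor $1-\gamma_k\mu_{\cH}$ still contracts and drives sublinear convergence to $x^*$. In short, once the constant-step recursion is shown to collapse to a single $(\mu+\mu_{\cH})$-contraction under $\alpha_k=\gamma_k$ and the $k^*$ calibration is checked, the entire argument is a transcription of the SGDA decreasing-step proof with the noise constant $4(\sigma^2+\sigma_{\cH}^2)$ in place of $2\sigma^2$.
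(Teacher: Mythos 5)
Your proposal is correct and follows essentially the same route as the paper's proof: you collapse the constant-step SCO recursion \eqref{canjadnakdl} under $\alpha_k=\gamma_k\leq\frac{1}{4\psi}$ into the single contraction $\bigl(1-\gamma_k(\mu+\mu_{\cH})\bigr)\|x^k-x^*\|^2+4\gamma_k^2(\sigma^2+\sigma_{\cH}^2)$, which is exactly the paper's intermediate bound, and then transcribe the SGDA decreasing-step telescoping argument with $\bar\mu=\mu+\mu_{\cH}$ and $\bar\sigma^2=\sigma^2+\sigma_{\cH}^2$, including the same calibration $k^*=8\psi/\bar\mu$ and the inequality $\left(1-\frac{1}{2x}\right)^{4x}\leq e^{-2}$. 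The verification that $\gamma_k\leq\frac{1}{4\psi}$ persists past the switch and the handling of the $\mu=0$ case also match the paper.
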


\begin{proof}
Let  $\gamma_k \eqdef \frac{2k+1}{(k+1)^2 [\mu_{\cH}+\mu]}$ and let $k^*$ be an integer that satisfies $$\gamma_{k^*} \leq \min\left\{\frac{1}{4\ell_{\xi}},\frac{1}{4\cL_{\cH}}\right\} = \frac{1}{4 \psi}.$$
Note that $\gamma_k$ is decreasing in $k$ and  consequently $\gamma_k \leq \frac{1}{4 \psi}$ for all $k \geq k^*.$ This in turn guarantees that~\eqref{canjadnakdl} holds for all $k\geq k^*$ with $\gamma_k$ in place of $\gamma$ and $\alpha_k=\gamma_k$ in place of $\alpha$, that is 
\begin{eqnarray*}
\Exp_{\cD} \left[ \|x^{k+1}-x^*\|^2 \right] \leq (1-\gamma_k\mu_{\cH}  -2 \gamma_k\mu + 4\gamma_k^2  \ell_{\xi}\mu )\|x^k-x^*\|^2 + 4[\gamma_k^2 \sigma^2 + \gamma_k^2  \sigma_{\cH}^2]
\end{eqnarray*}
and since $\gamma_k \leq \frac{1}{4\ell_\xi}$ we obtain:
\begin{eqnarray}
\label{qqqq1}
\Exp_{\cD} \left[ \|x^{k+1}-x^*\|^2 \right] \leq (1-\gamma_k [\mu_{\cH} + \mu])\|x^k-x^*\|^2 + 4\gamma_k^2 [\sigma^2 + \sigma_{\cH}^2].
\end{eqnarray}
For simplicity of presentation let us denote $\bar{\mu}= [\mu_{\cH} + \mu]$ and $\bar{\sigma}^2=[\sigma^2 + \sigma_{\cH}^2]$ then \eqref{qqqq1} can be written as:
\begin{eqnarray}
\label{qqqq2}
\Exp_{\cD} \left[ \|x^{k+1}-x^*\|^2 \right] \leq (1-\gamma_k\bar{\mu})\|x^k-x^*\|^2 + 4\gamma_k^2 \bar{\sigma}^2.
\end{eqnarray}
Now let us follow similar steps to the proof of Theorem~\ref{SGDA_DecreasingStep}.

By taking expectations and replacing  $\gamma_k \eqdef \frac{2k+1}{(k+1)^2 [\mu_{\cH}+\mu]}=\frac{2k+1}{(k+1)^2 \bar{\mu}}$ we obtain,
\begin{equation}
\mathbb{E}\| x^{k+1}-x^*\|^2 \leq \frac{k^2}{(k+1)^2}\mathbb{E} \|x^k-x^*\|^2 + \frac{4 \bar{\sigma}^2} {\bar{\mu}^2}\frac{(2k+1)^2}{(k+1)^4 }.
\end{equation}
Multiplying both sides by $(k+1)^2$ we obtain
\begin{eqnarray*}
(k+1)^2 \mathbb{E}\| x^{k+1}-x^*\|^2 &\leq & 
k^2 \mathbb{E} \|x^{k}-x^*\|^2 + \frac{4 \bar{\sigma}^2} {\bar{\mu}^2} \left(\frac{2k+1}{k+1}\right)^2 \\
 &\leq & k^2 \mathbb{E} \|x^{k}-x^*\|^2 + \frac{16 \bar{\sigma}^2} {\bar{\mu}^2},
\end{eqnarray*}
where the second inequality holds because  $\frac{2k+1}{k+1} <2$. Rearranging and summing from $t= k^* \ldots k$ we obtain:
\begin{equation}
\sum_{t=k^*}^{k} \left[ (t+1)^2 \mathbb{E}\| x^{k+1}-x^*\|^2 - t^2 \mathbb{E} \|x^{k}-x^*\|^2 \right] \leq  \sum_{t=k^*}^{k} \frac{16 \bar{\sigma}^2} {\bar{\mu}^2}. 
\end{equation}
Using telescopic cancellation gives
\[
(k+1)^2 \mathbb{E}\| x^{k+1}-x^*\|^2 \leq  (k^*)^2 \mathbb{E} \|x^{k^*}-x^*\|^2 +\frac{8 \bar{\sigma}^2 (k-k^*)}{\bar{\mu}^2}.
\]
Dividing the above by $(k+1)^2$ gives
\begin{equation}
\label{cnaosa2}
 \mathbb{E}\| x^{k+1}-x^*\|^2 \leq  \frac{(k^*)^2}{(k+1)^2 } \mathbb{E} \|x^{k^*}-x^*\|^2 +\frac{8 \bar{\sigma}^2 (k-k^*)}{\bar{\mu}^2(k+1)^2 }. 
\end{equation}

At this point note that for $k \leq k^*$ we have that~\eqref{qqqq2} holds and by using our step-size selection $\gamma_k=\gamma = \frac{1}{4 \psi}$ for $ k \leq \lceil k^* \rceil $ we obtain
\begin{eqnarray}
\label{qqqq3}
\Exp_{\cD} \left[ \|x^{k+1}-x^*\|^2 \right] \leq (1-\gamma\bar{\mu})\|x^k-x^*\|^2 + 4\gamma^2 \bar{\sigma}^2,
\end{eqnarray}
which by taking expectations again and by recursively applying the above and summing up
the resulting geometric series gives (for $k \leq k^*$):
\begin{eqnarray}
\label{qqqq4}
\Exp \left[ \|x^{k+1}-x^*\|^2 \right] \leq (1-\gamma\bar{\mu})^k \|x^0-x^*\|^2 + \frac{4\gamma \bar{\sigma}^2}{\bar{\mu}},
\end{eqnarray}

Thus, for $k \leq k^*$ we have that~\eqref{qqqq4} holds with $\gamma_k=\frac{1}{4 \psi}$, which combined with~\eqref{cnaosa2}, gives 
\begin{eqnarray}
 \mathbb{E}\| x^{k+1}-x^*\|^2 &\leq &
  \frac{(k^*)^2}{(k+1)^2 } \left( 1 -  \frac{\bar{\mu}}{4 \psi} \right)^{k^*} \|x^{0}-x^*\|^2 \nonumber \\ &   
  +&\frac{\bar{\sigma}^2 }{\bar{\mu}^2 (k+1)^2}\left(16 (k-k^*) +   \frac{(k^*)^2 \bar{\mu}}{\psi } \right).  \label{eq:sdaiuna33}
\end{eqnarray}
Choosing $k^*$ that minimizes the second line of the above gives $k^* = 8 \frac{\psi}{\bar{\mu}} $, which when inserted into~\eqref{eq:sdaiuna33} becomes
\begin{eqnarray}
 \mathbb{E}\| x^{k+1}-x^*\|^2 &\leq &
  \frac{(k^*)^2}{(k+1)^2 } \left( 1 -  \frac{2}{k^*} \right)^{k^*} \|x^{0}-x^*\|^2 \nonumber \\ &   
  +&\frac{\bar{\sigma}^2 }{\bar{\mu}^2 (k+1)^2}8 \left(2k- k^* \right)\nonumber \\
  & \leq &  \frac{(k^*)^2}{(k+1)^2 e^2} \|x^{0}-x^*\|^2  +  \frac{\bar{\sigma}^2 }{\bar{\mu}^2 (k+1)^2}8 \left(2k- k^* \right)\nonumber \\
  & \leq &  \frac{(k^*)^2}{(k+1)^2 e^2} \|x^{0}-x^*\|^2  +  \frac{\bar{\sigma}^2 }{\bar{\mu}^2} \frac{16}{k+1}.
\end{eqnarray}
where in the second inequality we have used that $\left( 1 -  \frac{1}{2x} \right)^{4x} \leq \frac{1}{e^2}$  for all $x \geq 1$ and in the last inequality we used that
$\frac{2 k-k^*}{k+1} \leq \frac{2 k}{k+1}  \leq 2. $

Thus by replacing $\bar{\mu}= [\mu_{\cH} + \mu]$ and $\bar{\sigma}^2=[\sigma^2 + \sigma_{\cH}^2]$ we obtain:
$$\mathbb{E}\| x^{k} - x^*\|^2 \le   \frac{\sigma_{\cH}^2 +\sigma^2}{[\mu + \mu_{\cH}]^2}\frac{16 }{k} + \frac{(k^*)^2 }{e^2 k^2}  \|x^0 - x^*\|^2 = O\left(\frac{1}{k}\right).$$

As we mentioned in the statement of the Theorem, if $\mu=0$, that is $\xi$ only satisfies the variational stability condition $\langle\xi(x),x-x^*\rangle \geq0$, then SCO is still able to converge sublinearly with $O\left(\frac{1}{k}\right)$, to $x^*$. In this case, the proof will be exactly the same as above but $k^*\eqdef 8 \frac{\psi}{\mu_{\cH}}$ and $\gamma_k=\frac{2k+1}{(k+1)^2 \mu_{\cH}} \text{for} \quad k >  \lceil k^* \rceil.$ And the convergence will be 
$
\mathbb{E}\| x^{k} - x^*\|^2 \le   \frac{\sigma_{\cH}^2 +\sigma^2}{\mu_{\cH}^2}\frac{16 }{k} + \frac{(k^*)^2 }{e^2 k^2}  \|x^0 - x^*\|^2 = O\left(\frac{1}{k}\right).
$
\end{proof}
\newpage
\section{On Experiments}
\label{Appendix_Experiments}

\subsection{Properties of Hamiltonian Function for Quadratic Games}
\label{app:quadratic_game_proof}
In the next proposition, we explain how the assumptions on the Hamiltonian function used in the main theorems of Section~\ref{sec:SCO} (convergence analysis of SCO) are satisfied for the quadratic min-max problems.
\begin{proposition}
  For quadratic games of the form \eqref{eq:quadratic_games} with $\bA_i$ and $\bC_i$ symmetric with at least one solution $x^*$, the Hamiltonian function $\cH(x)$ is a $L_{\cH}$-smooth and $\mu_{\cH}$–quasi-strongly convex quadratic function with constants $L_{\cH} = \sigma^2_{\max}(\bJ)$ and $\mu_{\cH} = \sigma^2_{\min}(\bJ)$ where
  $\sigma_{\max}$ and $\sigma_{\min}$ are the maximum and minimum non-zero singular values of $\bJ$, and $\bJ=\nabla \xi$ is the Jacobian matrix of the game.
\end{proposition}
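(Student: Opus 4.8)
The plan is to exploit that the game is quadratic, so that $\xi$ is an affine operator and $\cH=\frac12\|\xi\|^2$ is a genuine quadratic whose Hessian is constant and equal to $\bJ^\top\bJ$; the two claimed constants then fall out as the extreme eigenvalues of this Hessian, with the only delicate point being the possible degeneracy of $\bJ$.

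First I would compute the operator explicitly. Using symmetry of $\bA_i$ and $\bC_i$, the block gradients of $g_i$ are $\nabla_{x_1}g_i=\bA_i x_1+\bB_i x_2+a_i$ and $\nabla_{x_2}g_i=\bB_i^\top x_1-\bC_i x_2-c_i$, so that $\xi_i(x)=(\nabla_{x_1}g_i;-\nabla_{x_2}g_i)$. Averaging over $i$ gives $\xi(x)=\bJ x+b$ with
\[
\bJ=\begin{pmatrix}\bar\bA & \bar\bB\\ -\bar\bB^\top & \bar\bC\end{pmatrix},\qquad b=(\bar a;\bar c),
\]
where bars denote averages over $i$. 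Since $\xi$ is affine, its Jacobian $\nabla\xi$ is exactly the constant matrix $\bJ$, matching the statement. Fixing any solution $\bar x$ (so $\bJ\bar x+b=0$) lets me write $\xi(x)=\bJ(x-\bar x)$, hence $\cH(x)=\tfrac12\|\bJ(x-\bar x)\|^2=\tfrac12(x-\bar x)^\top\bJ^\top\bJ(x-\bar x)$, a quadratic with gradient $\nabla\cH(x)=\bJ^\top\bJ(x-\bar x)$ and constant Hessian $\bJ^\top\bJ\succeq0$, whose eigenvalues are the squared singular values of $\bJ$. The smoothness claim is then immediate, since $\nabla\cH(x)-\nabla\cH(y)=\bJ^\top\bJ(x-y)$ gives $L_{\cH}=\|\bJ^\top\bJ\|_{\mathrm{op}}=\sigma_{\max}^2(\bJ)$.

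The quasi-strong convexity constant is the step requiring care, and is where the minimum \emph{non-zero} singular value (rather than the smallest eigenvalue of $\bJ^\top\bJ$, which may be $0$) enters. I would first identify the minimizer set of $\cH$: since $\range{\bJ}\cap\kernel{\bJ^\top}=\{0\}$, the stationarity condition $\bJ^\top\bJ(x-\bar x)=0$ is equivalent to $x-\bar x\in\kernel{\bJ}$, so $X^*=\bar x+\kernel{\bJ}$, on which $\cH\equiv 0$. Following the projection interpretation of Definition~\ref{DefQSConvex}, I take $x^*$ to be the projection of $x$ onto $X^*$, so that $x-x^*\in\kernel{\bJ}^\perp=\range{\bJ^\top}$ and $\bJ(x-\bar x)=\bJ(x-x^*)$. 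On $\range{\bJ^\top}$ one has $\|\bJ v\|^2\geq\sigma_{\min}^2(\bJ)\|v\|^2$, giving the Polyak--Lojasiewicz-type bound $\cH(x)=\tfrac12\|\bJ(x-x^*)\|^2\geq\tfrac{\sigma_{\min}^2(\bJ)}{2}\|x-x^*\|^2$. Finally, because $\cH$ is quadratic with $\cH(x^*)=0$ and $\langle\nabla\cH(x),x^*-x\rangle=-(x-x^*)^\top\bJ^\top\bJ(x-x^*)=-2\cH(x)$, the target inequality $\cH(x^*)\geq\cH(x)+\langle\nabla\cH(x),x^*-x\rangle+\tfrac{\mu_{\cH}}{2}\|x^*-x\|^2$ reduces exactly to $\cH(x)\geq\tfrac{\mu_{\cH}}{2}\|x-x^*\|^2$, which is the bound just established with $\mu_{\cH}=\sigma_{\min}^2(\bJ)$. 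The main obstacle is precisely this kernel bookkeeping: one must argue that the minimizer set is $\bar x+\kernel{\bJ}$ and restrict to $\range{\bJ^\top}$ so that the degenerate directions are removed and $\sigma_{\min}$ genuinely denotes the smallest non-zero singular value; when the VI solution is unique, $\kernel{\bJ}=\{0\}$ and this collapses to ordinary strong convexity.
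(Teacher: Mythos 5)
Your proof is correct and rests on the same core observation as the paper's: $\cH$ is a quadratic with constant Hessian $\bJ^\top\bJ$, and because a solution $\bar x$ with $\xi(\bar x)=0$ exists, $\cH(x)=\tfrac12\|\bJ(x-\bar x)\|^2$, so the two constants are the extreme nonzero eigenvalues of $\bJ^\top\bJ$. The only substantive difference is how the quasi-strong convexity constant is extracted. The paper writes $\cH(x)=\phi(\bJ x)$ with $\phi(y)=\tfrac12\|y\|^2-(\bJ x^*)^\top y+\ell$ a $1$-strongly convex, $1$-smooth function and then invokes Lemma~D.1 of Loizou et al.\ (2020), which packages precisely the kernel bookkeeping you carry out by hand: identifying the minimizer set as $\bar x+\kernel{\bJ}$, taking $x^*$ to be the projection of $x$ onto it so that $x-x^*\in\range{\bJ^\top}$, and using $\|\bJ v\|\geq\sigma_{\min}(\bJ)\|v\|$ on that subspace. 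Your version is therefore more self-contained: you re-derive the content of the cited lemma inline, and your explicit reduction of the quasi-strong convexity inequality to the Polyak--Lojasiewicz-type bound $\cH(x)\geq\tfrac{\sigma_{\min}^2(\bJ)}{2}\|x-x^*\|^2$ (via $\langle\nabla\cH(x),x^*-x\rangle=-2\cH(x)$) makes transparent exactly where the restriction to the smallest \emph{non-zero} singular value is needed. What the paper's route buys is brevity and reuse of an existing result; what yours buys is a proof readable without consulting the external reference, and a cleaner picture of why the degenerate directions of $\bJ$ are harmless under the projection interpretation of Definition~\ref{DefQSConvex}.
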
 

\begin{proof}
  Our approach follows closely the proof of Proposition 4.3 of \cite{loizou2020stochastic} where the properties of the Hamiltonian function for stochastic bilinear games were presented. Let $\bA=\frac{1}{n}\sum_{i=1}^n \bA_i$, $\bB=\frac{1}{n}\sum_{i=1}^n \bB_i$ and $\bC=\frac{1}{n}\sum_{i=1}^n \bC_i$ and let $a=\frac{1}{n}\sum_{i=1}^n a_i$ and $c=\frac{1}{n}\sum_{i=1}^n c_i$.
  
Firstly, note that the stochastic Hamiltonian function of \eqref{eq:quadratic_games} has the following form:
      $$\mathcal{H}(x) = \frac{1}{2} x^\top \bQ x + q^\top x + \ell$$
  where $\bQ = \begin{pmatrix} \bA^2 + \bB \bB^\top & \bA \bB - \bB\bC \\ \bB^\top \bA - \bC \bB^\top & \bC^2 + \bB^\top \bB \end{pmatrix} = \bJ^\top \bJ$, $q = \begin{pmatrix} \bA a - \bB c \\ \bB^\top a + \bC c \end{pmatrix}$, and $\ell = \frac{1}{2} (\|a\|^2 + \|b\|^2)$
  
  The Hamiltonian is thus a smooth quadratic function and the matrix $\bQ$ is symmetric and positive semi-definite.
  In addition, as we assumed that there was at least one solution $x^*$ for the game, i.e. $\xi(x^*)=0$, we have that $x^*$ is also a global minimum of the Hamiltonian function $\mathcal{H}(x)$, and thus we have that $q=-\bQ x^*=-\bJ^\top \bJ x^*$.
  Using this, we can rewrite the Hamiltonian as:
  $$\mathcal{H}(x)=\phi(\bJ x),$$
where the function $\phi(y):=\frac{1}{2}\|y\|^2 - (\bJ x^*)^\top y+\ell$ is 1-strongly convex with 1-Lipschitz continuous gradient.

Thus, using Lemma D.1 in \cite{loizou2020stochastic}, we have that the the Hamiltonian function is a $L_{\cH}$-smooth, $\mu_{\cH}$-quasi-strongly convex function with constants $L_{\cH}=\lambda_{\max}(\bJ^\top \bJ)=\lambda_{\max }(\bQ)=\sigma_{\max}^2(\bJ)$ and $\mu_{\cH}=\sigma_{\min }^2(\bJ)=\lambda_{\min }^+(\bQ)$.
\end{proof}

\subsection{Experimental Details}
\label{app:experimental_details}
We describe here in more details the exact settings we use for evaluating the different algorithms. 
As mentioned in Section~\ref{sec:numerical_eval}, we evaluate the different algorithms on the class of quadratic games:
\begin{equation*}
  \min_{x_1 \in \mathbb{R}^{d}} \max_{x_2 \in \mathbb{R}^{p}} \frac{1}{n}\sum_i \frac{1}{2} x_1^\top \bA_i x_1 + x_1^\top \bB_i x_2 - \frac{1}{2} x_2^\top \bC_i x_2 + a_i^\top x_1 - c_i^\top x_2
\end{equation*}

In all our experiments we choose $d = p = 100$ and $n=100$. To sample the matrices $\bA_i$ (resp. $\bC_i$) we first generate a random orthogonal matrix $\bQ_i$ (resp. $\bQ'_i$), we then sample a random diagonal matrix $\bD_i$ (resp. $\bD'_i$) where the elements on the diagonal are sampled uniformly in $[\mu_A, L_A]$ (resp. $[\mu_C, L_C]$), such that at least one of the matrices has a minimum eigenvalue equal to $\mu_A$ (resp. $\mu_C$) and one matrix has a maximum eigenvalue equal to $L_A$ (resp. $L_B$).  Finally we construct the matrices by computing $\bA_i=\bQ_i\bD_i\bQ_i^\top$ (resp. $\bC_i = \bQ'_i\bD'_i{\bQ'}_i^\top$). This ensures that the matrices $\bA_i$ and $\bC_i$ for all $i \in [n]$, are symmetric and positive definite. We sample the matrices $\bB_i$ in a similar fashion with the diagonal matrix $\bD_i$ to lie between $[\mu_B, L_B]$\footnote{We highlight that matrices $\bB_i$ are not necessarily symmetric.}. In all our experiments we choose $\mu_A=\mu_C$ and $L_A=L_C$. By varying the different constants $\mu_A, L_A, \mu_C, L_C, \mu_B, L_B$ we can get a variety of games with different properties $\mu, \ell_\xi, \mu_{\cH}, \cL_{\cH}$. The bias terms $a_i, c_i$ are sampled from a normal distribution. For further details, see also our source code\footnote{\url{https://github.com/hugobb/StochasticGamesOpt}}.

As we have already mentioned in Section~\ref{sec:numerical_eval}, we pick the step-sizes for the different methods according to our theoretical findings. That is, for constant step-size, we select $\alpha=\frac{1}{2\ell_\xi}$ for SGDA (Theorem~\ref{SGDA_ConstantStep}), $\alpha=\frac{1}{4\ell_\xi}, \gamma=\frac{1}{4\cL_{\cH}}$ for SCO (Theorem~\ref{SCO_ConstantStep}), and $\gamma=\frac{1}{2\cL_{\cH}}$ for SHGD (Corollary~\ref{CorollarySHGD}). For the stepsize-switching rule that guarantees convergence to $x^*$, we use the step-sizes proposed in Theorem~\ref{SGDA_DecreasingStep} for SGDA and Theorem~\ref{SCO_DecreasingStep} for SCO.

In the experiments, we run all methods (SGDA, SCO and SHGD) using uniform single-element sampling. That is, 
$|S|=1$, according to the Definition~\ref{def:minibatch}. Thus, $\Prob{i\in S} = p_i = \frac{1}{n}$. By Proposition~\ref{PropositionMinibatch}, this means that  $\ell_\xi =\ell_{\max}=\max \{\ell_i\}_{i=1}^n$. In addition, by Proposition 3.8 in \cite{gower2019sgd} and the structure of the stochastic Hamiltonian function, $\cH(x)= \frac{1}{2} \|\xi(x)\|^2 = \frac{1}{n} \sum_{i=1}^n \frac{1}{n} \sum_{j=1}^n \frac{1}{2} \langle  \xi_i(x),  \xi_j(x)\rangle$, we have that $\cL_{\cH}=\max\{L_{\cH_{i,j}}\}_{i=1, j=1}^n$, where $L_{\cH_{i,j}}$ is the smoothness parameter of $\cH_{i,j}= \frac{1}{2} \langle  \xi_i(x),  \xi_j(x)\rangle$. 

The values of the co-coercive parameters $\ell_i$ for all $i \in [n]$ are computed using $\frac{1}{\ell_i} = \min_{\lambda \in Sp(\bJ_i)} \Re(\frac{1}{\lambda})$ \citep{azizian2019tight}. Here $Sp(\bJ_i)$ denotes the spectrum of the Jacobian matrix $J_i$ for all $i \in [n]$ and $\Re$ denotes the real part of a complex number.

\subsection{Additional Experiment: Influence of the Step-size on Convergence}
\label{app:step_size_study}
In this section we provide further experiments exploring the performance of SGDA, SHGD and SCO for different constant step-sizes (see Fig.~\ref{fig:lr_grid}). This experiment aims to understand better how the convergence rate and the size of the neighborhood the algorithms converge to depend on the step size. This also enables us to assess if the optimal step-size suggested by the theory is tight.
When the step-size is too large we observe that the methods diverges, we do not include them in the plots but provide them below for completeness. We observe that SHGD diverges with a step-size of $\gamma=3\gamma^*$ with $\gamma^*=\frac{1}{2\cL_{\cH}}$, SGDA diverges with a step-size of $\alpha=4\alpha^*$ with $\alpha^*=\frac{1}{2\ell_\xi}$, SCO diverges with a step-size of $\alpha=4\alpha^*$ and $\gamma=4\gamma^*$ with $\alpha^*=\frac{1}{4\ell_\xi}$ and $\gamma^*=\frac{1}{4\cL_{\cH}}$.
We also observe that SCO is less sensitive to the choice of $\alpha$ than to the choice of $\gamma$

\begin{figure}[H]
  \centering
  \begin{subfigure}[b]{0.32\textwidth}
    \includegraphics[width=\textwidth]{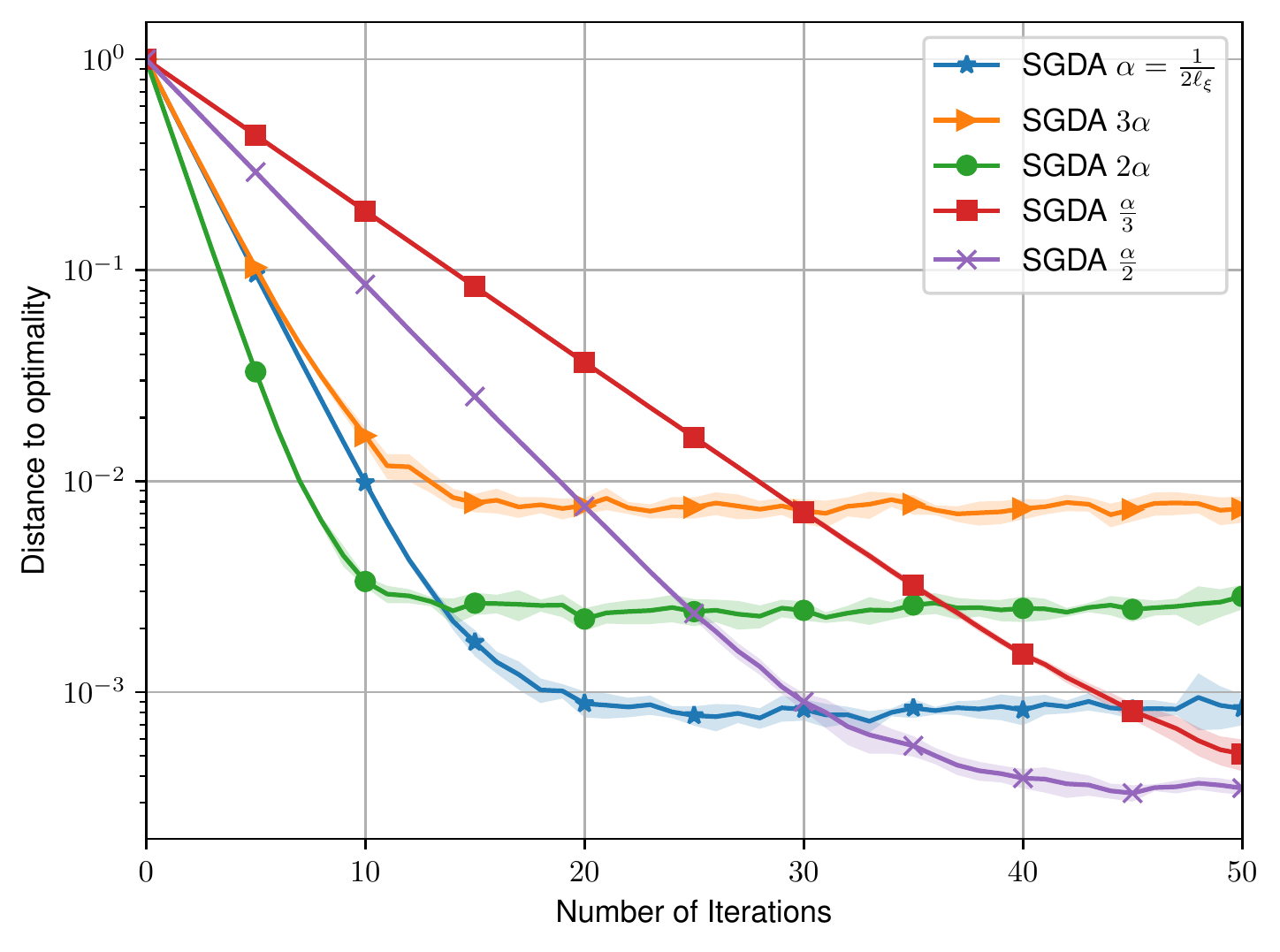}
    \caption{SGDA}
  \end{subfigure}
  \begin{subfigure}[b]{0.32\textwidth}
    \includegraphics[width=\textwidth]{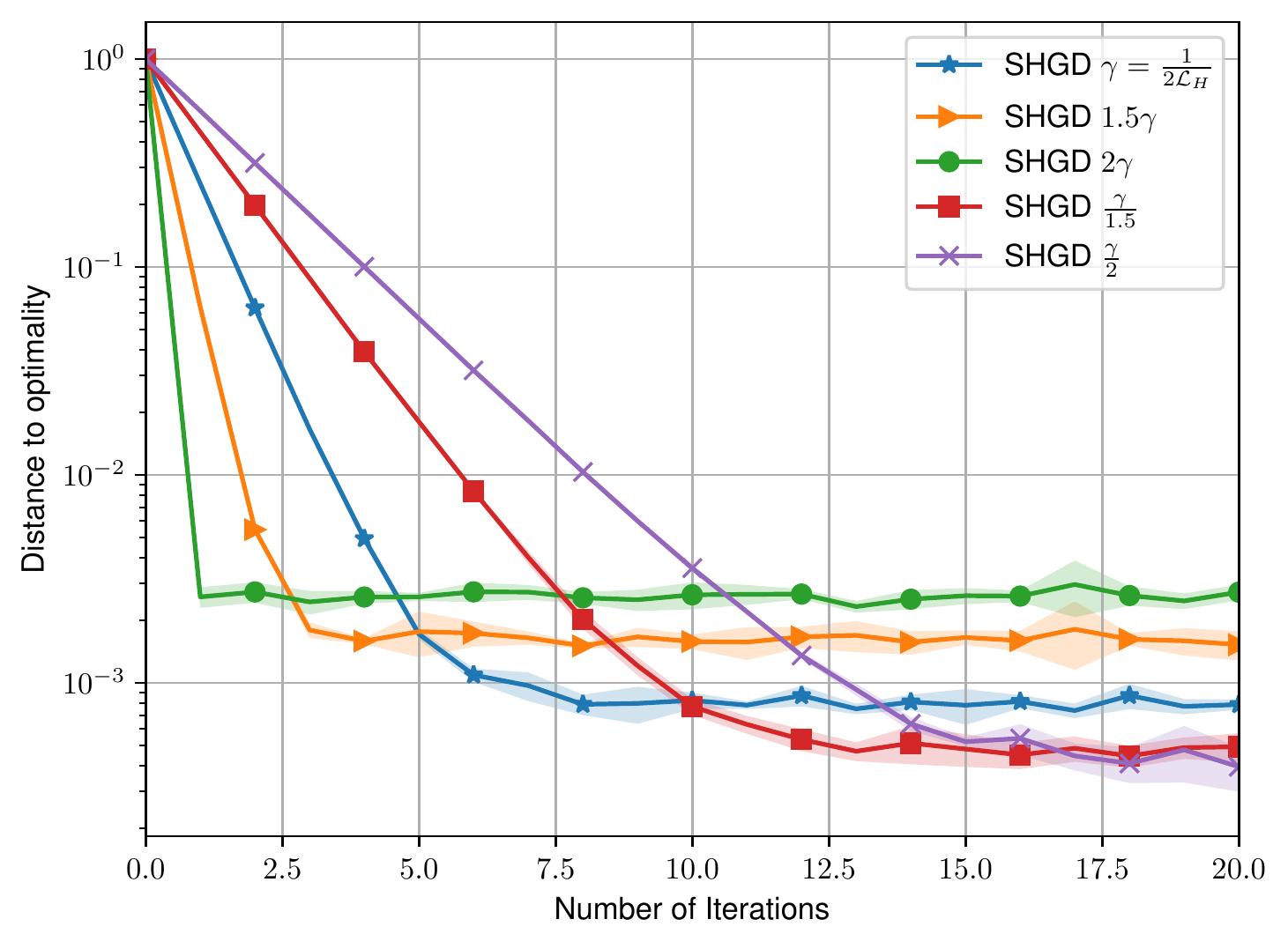}
    \caption{SHGD}
  \end{subfigure}
  \begin{subfigure}[b]{0.32\textwidth}
    \includegraphics[width=\textwidth]{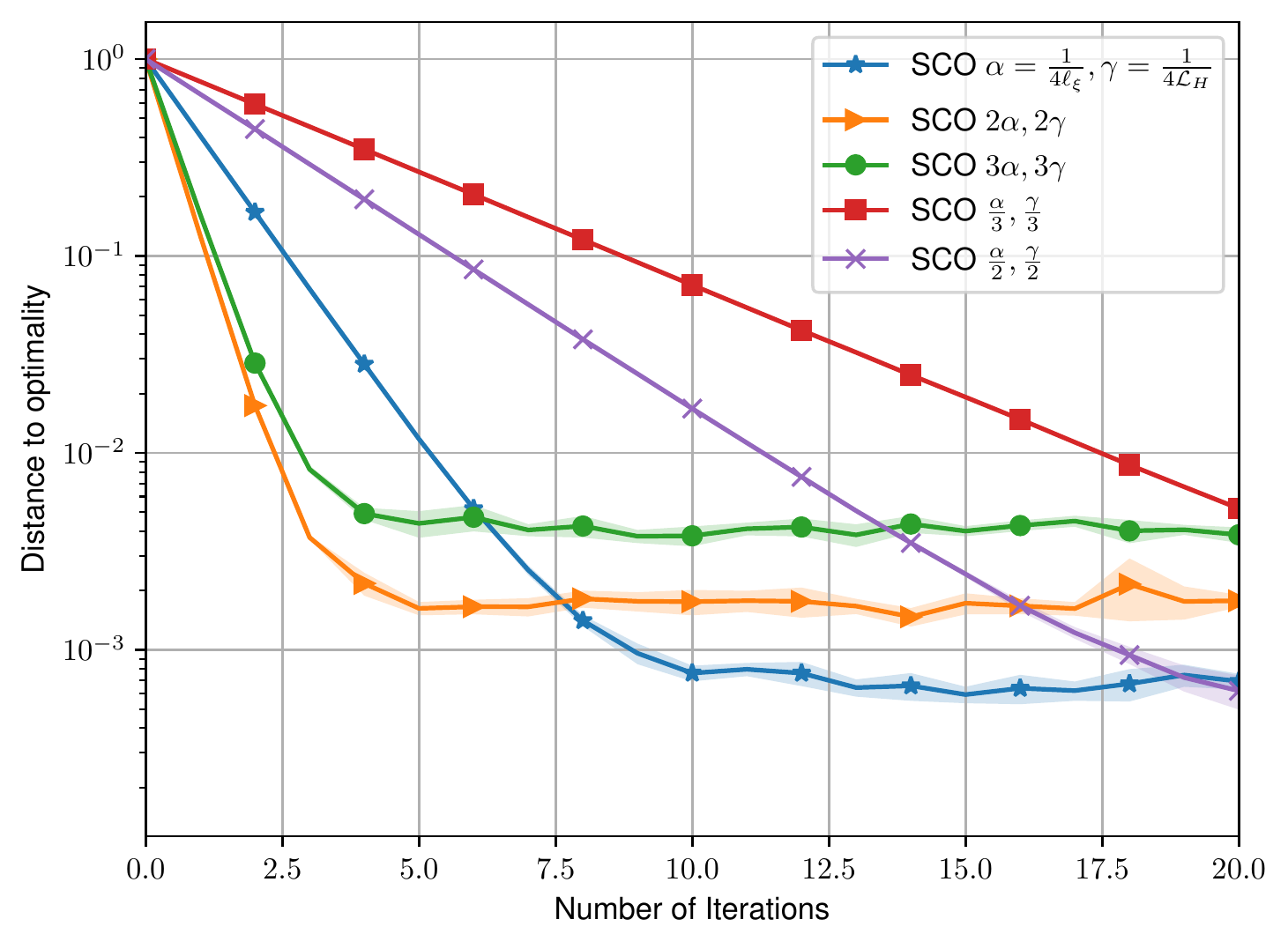}
    \caption{SCO varying both $\alpha$ \& $\gamma$}
  \end{subfigure}
  \begin{subfigure}[b]{0.32\textwidth}
    \includegraphics[width=\textwidth]{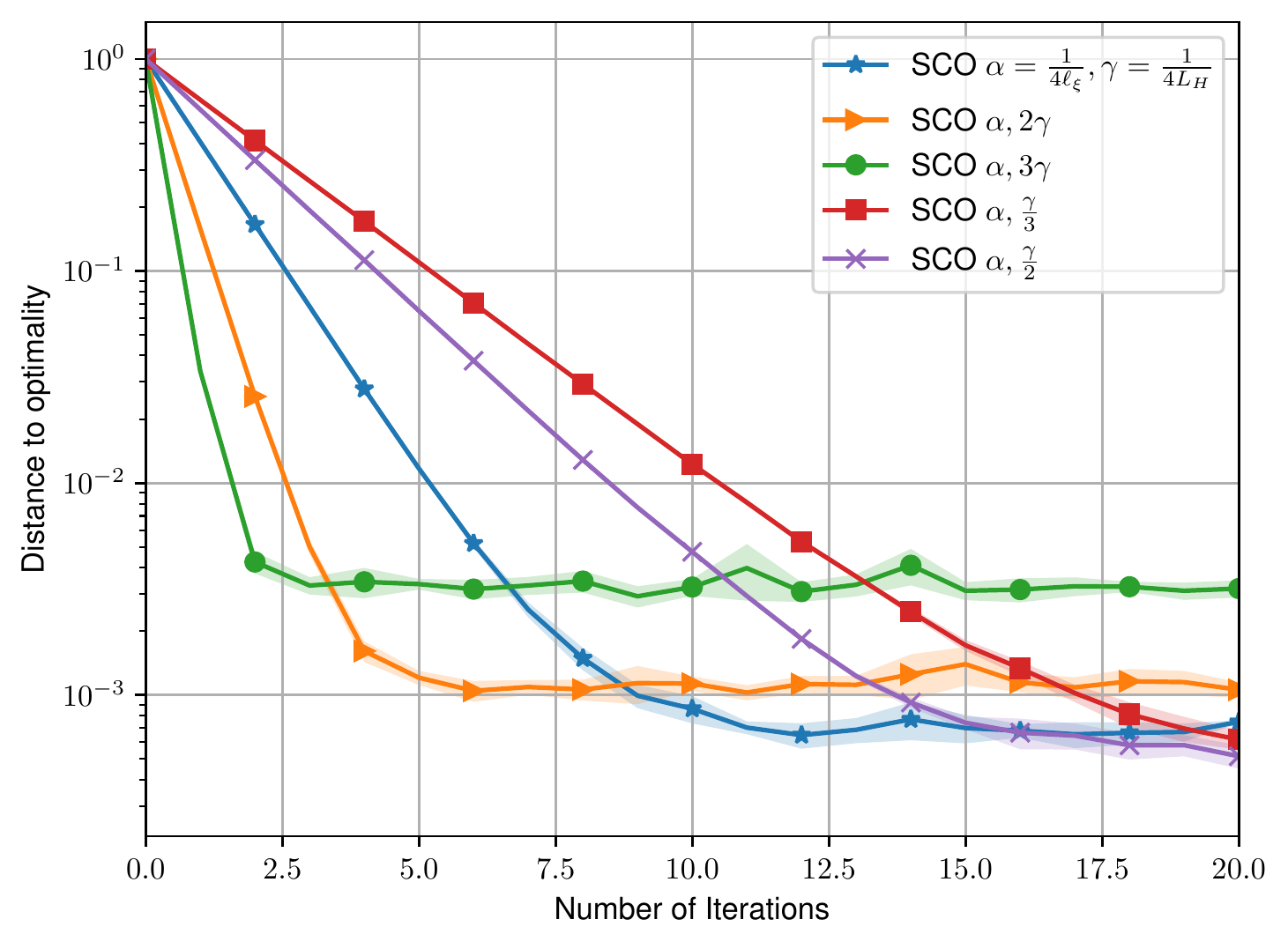}
    \caption{SCO varying only $\gamma$}
  \end{subfigure}
  \begin{subfigure}[b]{0.32\textwidth}
    \includegraphics[width=\textwidth]{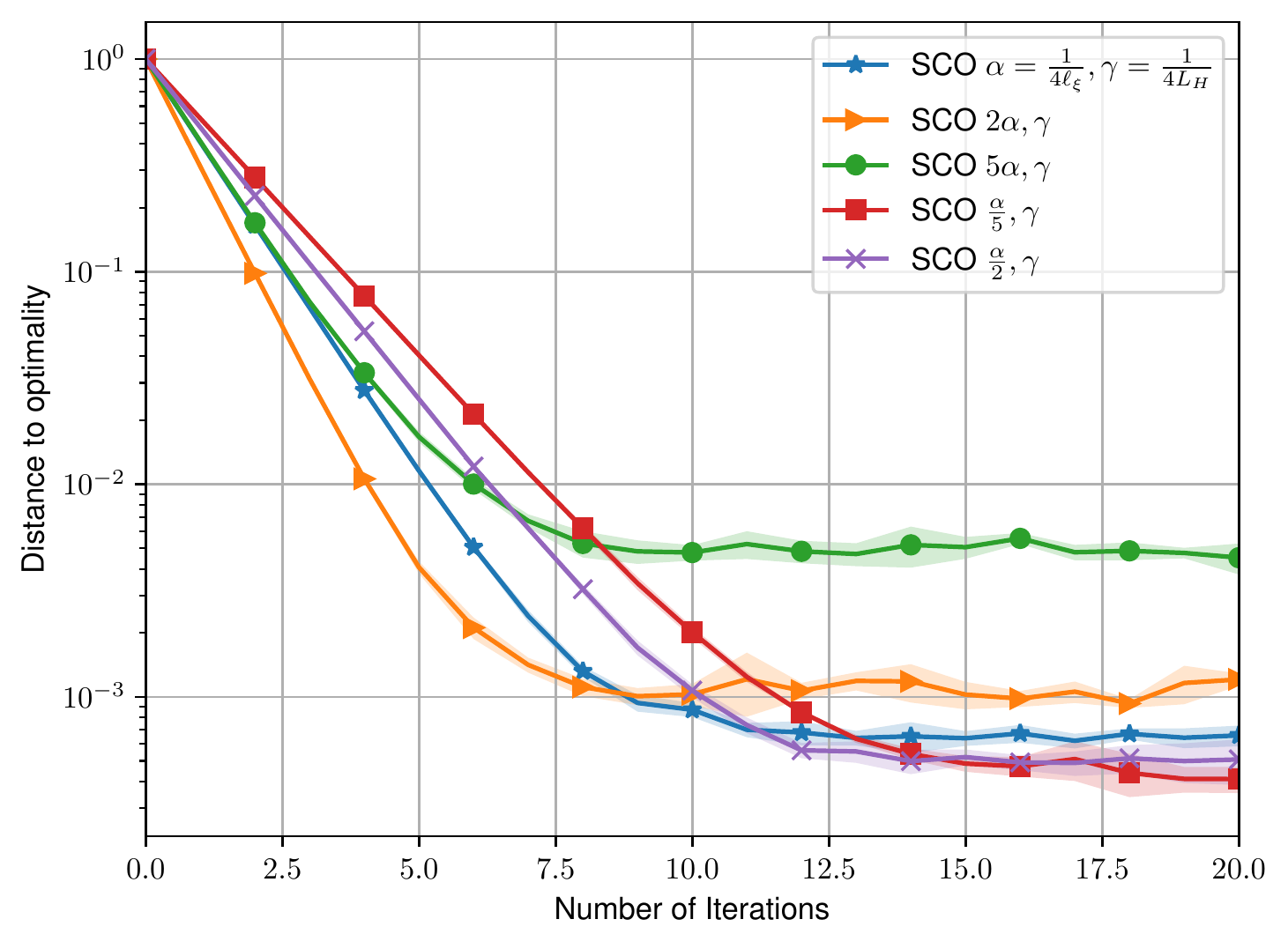}
    \caption{SCO varying only $\alpha$}
  \end{subfigure}
  \caption{Performance of the methods, SGDA, SHGD and SCO with different constant step-sizes. Distance to optimality: $\frac{\|x^k-x^*\|^2}{\|x^0-x^*\|^2}$. The step-sizes used are expressed as the optimal theoretical step-size predicted by the theory times a constant. The step-sizes for which the methods diverge are not included in the plots.}
  \label{fig:lr_grid}
\end{figure}
\newpage
\section{Beyond Finite-sum Structure}
\label{Appendix_BeyondFiniteSum}

We explain in this section how all our convergence results also hold for the more general (non finite-sum) stochastic approximation setting~\citep{Nemirovski-Juditsky-Lan-Shapiro-2009}, where we define $\xi: \R^d \rightarrow \R^d$ as:
\begin{equation} \label{eq:stochasticXi}
\xi(x) := \Exp_U \tilde{\xi}(x, U) ,
\end{equation}
where $U$ is a random vector in $\R^p$, and $\tilde{\xi} : \R^d\times \R^p \rightarrow \R^d$ is used to define $\xi$ and is assumed to be well-behaved enough so that~\eqref{eq:stochasticXi} exists for all $x$. Setting $U$ to be a uniform random variable in $\{1, \ldots, n\}$ gives back the finite sum setting with uniform weights which was covered in the paper in~\eqref{VI}.

We present here only results for the singleton sampling regime, as generalizing to mini-batching and other sampling schemes in the continuous regime is non-trivial and beyond the scope of this paper. This means that we restrict here our estimator for $\xi(x)$ to simply be $\tilde{\xi}(x,U)$, with $U$ sampled according to its distribution. Any appearance of $\xi_v$ in the paper can then be replaced with $\tilde{\xi}(x,U)$ to be able to re-interpret the algorithms and proofs in this setting. In particular, we have that our estimator is  $g(x) = \xi_v(x) := \tilde{\xi}(x,U)$, and so $\Exp[ g(x) ] = \Exp_U \tilde{\xi}(x,U) = \xi(x)$ trivially. Also, the expected co-coercivity of $\xi$ is defined as the existence of a $\ell_\xi > 0$ such that
\begin{equation} \label{eq:EC_stochastic}
\Exp_U \left[ \| \tilde{\xi}(x,U) - \tilde{\xi}(x^*,U)\|^2 \right] \leq \ell_\xi \langle \xi(x) , x - x^* \rangle \quad \forall x \in \R^d \, .
\end{equation}
An important quantity appearing in our convergence results is 
\begin{equation}
\sigma^2 := \Exp_U \| \tilde{\xi}(x,U) \|^2 \, ,
\end{equation}
and we assume that $U$ and $\tilde{\xi}$ are such that $\sigma^2 < \infty$.\footnote{In the finite sum setting, this is always true. But for general random variable $U$, this is not always the case.} Similarly, $\sigma_\cH^2$ is defined by letting $u$ and $v$ represent independent singleton sampling in~\eqref{StochasticHamiltonianGradient}, and is assumed to be finite.

Under the assumption of expected co-coercivity of~$\xi$ and that $\sigma^2$ is finite, all the convergence theorems of Section~\ref{Section_SGDA} and~\ref{sec:SCO} hold as is for $\xi$ more generally defined by~\eqref{eq:stochasticXi}. This is because all the convergence proofs did not use the finite sum structure explicitly, only the linearity of the expectation operator.

Finally, we can easily generalize Proposition~\ref{PropositionMinibatch} for $b=1$ with $\ell_\xi = \ell_{\max}$ (we assume each $\tilde{\xi}(x,u)$ is $\ell_u$-co-coercive in $x$ around $x^*$, and assume that $\ell_{\max} := \sup_{u} \ell_u$ is finite).

We can also generalize Proposition~\ref{PropositionExtra} (for singleton sampling) with $\ell_\xi := \Exp_U L_U^2 / \mu$ (we assume each $\tilde{\xi}(x,u)$ is $L_u$-Lipschitz continuous in $x$, for each $u$; and that $\Exp_U L_U^2$ is finite).

\section{More Related Work}
\label{Appendix_MoreRelatedWork}

The references necessary to motivate our work and connect it to the most relevant literature is included in the appropriate sections of the main body of the paper. Here we present a broader view of the literature, including some more references to papers of the area that are not directly related with our work.

\paragraph{Smooth monotone games.} 
The deterministic version of the problem has been studied extensively both in past and recent work, initially focusing on strongly monotone problems
\citep{tseng1995linear, gidel2018variational, liang2019interaction, azizian2019tight, zhou2021robust}.
\citet{mokhtari2020unified} recently gave rates for extragradient and optimistic gradient through a proximal point approach. 

For monotone problems, in the absence of strong monotonicity but assuming a lower bound of the singular values of the coupling between players, \citet{azizian2019tight} produce tight results for the extragradient and optimistic gradient methods.

For general monotone problems, \citet{mertikopoulos2019games} establish last-iterate convergence, but requires a decreasing step-size schedule; it also does not guarantee convergence to fixed points of non-strictly monotone problems like bilinears. 	
\citet{golowich2020last} show that last-iterate methods are slower than averaging in this setting.
Overcoming the above limitations, \citet{golowich2020last, golowich2020tight} establish tight upper bounds, $O(1/\sqrt{T})$, for general monotone problems under a weak smoothness assumption for extra-gradient and optimistic gradient respectively.

\paragraph{Stochastic smooth monotone games.} 
Contrary to the deterministic family of problems, in the stochastic setting (where available gradients are noisy), progress has occurred mostly in the past five years. 
Without variance reduction, some work establishes last-iterate convergence results with a slow (sublinear) rate \citep{rosasco2014stochastic, mishchenko2020revisiting}.
For {\em pseudomonotone problems}, \citet{kannan2019optimal} give last-iterate convergence of stochastic extragradient with decreasing step sizes, assuming a uniform bound on the variance.
Using variance reduction techniques, \citet{palaniappan2016stochastic} yield fast linear rates without requiring a bound on the variance, however the step size and inner loop length need to be tuned using the modulus of local strong monotonicity. This limitation was later lifted with a variance reduced extragradient method proposed by  \citet{chavdarova2019reducing}, which is adaptive to local strong monotonicity.

For monotone problems, in the absence of strong monotonicity but assuming a lower bound of the singular values of the coupling between players, \citet{chavdarova2019reducing} show that stochastic extragradient with constant step-size does not always yield last-iterate convergence.
Such a result for extragradient was shown to be possible using a {\em double step-size scheme} in \citet{hsieh2020explore}, using one step size for the extrapolation step and a different step size for the update step. 

For general monotone problems, \citet{mertikopoulos2019games} use a dual averaging approach to get last-iterate convergence for no-regret algorithms making a bounded variance assumption.
This bounded variance assumption is lifted in \citet{lin2020finite}; the authors give finite-time last-iterate convergence of optimistic gradient under other strong conditions: 
i) for a {\em relative random noise} model, the noise variance is assumed to be proportional to the squared norm of the vector of gradients, according to the sequence of multiplicative coefficients, $\tau_k$. That is, $\Exp_i\|\xi_i(x_k)-\xi(x_k)\|^2 < \tau_k \|\xi(x_k)\|^2$.
This kind of condition with a constant $\tau_k=\tau$ can occur in some machine learning problems, in particular in supervised learning learning problems when the model is overparametrized and capable to perfectly fitting all training points (so the gradient variance becomes zero at the stationary point of the solution). 
On the other hand, this assumption is satisfied rarely for adversarial formulations.
Even more, Theorem 4.6 in \citet{lin2020finite} requires that the sequence, $\tau_k$, goes to zero with $k$ in order to get convergence;
ii) for an {\em absolute random noise} model, noise variance is uniformly bounded by $\sigma_k$, a bounded variance assumption. 
Again, for the result of Theorem A.4 in \citet{lin2020finite} to hold, the sequence $\sigma_k$ is assumed to go to zero with $k$.
To the best of our knowledge, both assumptions are very strong: 
in order to get either $\tau_k$ or $\sigma_k$ to decay, one will have to use a mini-batch size that grows up to $n$, which is impractical.
It should be noted that the same work contains almost sure (non finite-time) last-iterate results without the above restrictive assumptions.

\paragraph{Structured non-monotone problems.} The recent works of \cite{daskalakis2021complexity} and \cite{diakonikolas2021efficient} show that, for general smooth objectives, the computation of even approximate first-order locally optimal min-max solutions is intractable, motivating the identification of structural assumptions on the objective function for which these intractability barriers can be bypassed.

In this work we focus on a setting (structured non-monotone operators) that one can provide tight convergence guarantees and avoid the standard issues (cycling and divergence of the methods) appearing in the more general non-monotone regime, like . That is, problems satisfying quasi-strong monotonicity \eqref{QSM} or the variational stability condition.

The two classes of VI problems we consider, quasi-strongly monotone and problems satisfying the variational stability condition, have already been used in several papers (under different names). For example, \cite{song2020optimistic} also focuses on quasi-strongly monotone VI but they called them strong coherent VI and show that quasi strong monotonicity is weaker than the strongly pseudo-monotone \citep{kannan2019optimal} and strongly monotone assumption. In addition, as presented in \cite{song2020optimistic}, the non-monotone subsets of quasi-strong monotonicity (a.k.a., strongly pseudomonotone), have many real applications in competitive exchange economy \citep{brighi2002characterizations}, fractional programming \citep{elizarov2009maximization,rousseau2005trade}, and product pricing \citep{choi1990product}. Meanwhile, the restriction of quasi-strong monotonicity in minimization problems such as one-point convexity \citep{li2017convergence} is also used in analyzing neural networks.

In addition in \cite{zhou2017stochastic}, the strongly coherent optimization problems have been studied, which are special cases of the quasi-strongly monotone VI. We refer the interested reader to \cite{zhou2017stochastic} for a list of examples of strongly coherent optimization problems. As we mentioned in the main paper, the assumption of quasi-strong monotonicity is equivalent to the concept of strong stability described in \cite{mertikopoulos2019games}.

\end{document}